\newcommand{\ssubsection}[1]{%
 \subsection*{\textbf{\raggedright\normalfont#1}}}
\def\nicefrac#1#2{\leavevmode%
 \raise.5ex\hbox{\small #1}%
 \kern-.1em/\kern-.15em%
 \lower.25ex\hbox{\small #2}}
\let\temp\rmdefault 
\let\rmdefault\temp
\protected\def\numpi{\text{\ensuremath{\pi}}}
\newcolumntype{L}{D{.}{.}{2,5}}
\newtheorem{theorem}{Theorem}
\newtheorem{corollary}{Corollary}
\newlength{\tempdima}
\newcommand{\rowname}[1]
{\rotatebox{90}{\makebox[\tempdima][c]{\textbf{#1}}}}
\newcommand{\sgn}{\mathop{\mathrm{sgn}}} 
\begin{document}
\title{Derivative-Based Koopman Operators for Real-Time Control of Robotic Systems}


\author{
    Giorgos~Mamakoukas$^{\orcidlink{0000-0002-3461-0849}}$, 
     Maria~L.~Casta\~{n}o$^{\orcidlink{0000-0001-5552-6817}}$, 
     Xiaobo~Tan$^{\orcidlink{0000-0002-5542-6266}}$,~and 
    Todd~D.~Murphey$^{\orcidlink{0000-0003-2262-8176}}$
\thanks{Manuscript received October 5, 2020; revised February  15, 2021. This article was recommended for publication by the Editor Paolo Robuffo Giordano upon evaluation of the reviewers’ comments. This work was supported by the National Science Foundation (IIS-1717951, IIS-1715714, DGE1424871). Any opinions, findings, and conclusions or recommendations expressed in this material are those of the authors and do not necessarily reflect the views of the National Science Foundation.}
\thanks{G. Mamakoukas and Todd D. Murphey are with the Department of Mechanical Engineering, Northwestern University, Evanston, Illinois 60208, USA (e-mail: \href{mailto:giorgosmamakoukas@u.northwestern.edu}{giorgosmamakoukas@u.northwestern.edu).}}
\thanks{Maria~L.~Casta\~{n}o and Xiaobo Tan are with the Department of Electrical and Computer Engineering, Michigan State University, East Lansing, Michigan 48824, USA.}
\thanks{This article has supplementary material and code at \href{https://github.com/giorgosmamakoukas/dataDrivenControlOfRoboticFish}{https://github.com/giorgosmamakoukas/dataDrivenControlOfRoboticFish}.}
}

%
%

\markboth{IEEE TRANSACTIONS ON ROBOTICS, IN PRESS}{Mamakoukas \MakeLowercase{\textit{et al.}}: Derivative-Based Koopman Operators for Real-Time Control of Robotic Systems}




\maketitle

\IEEEpubid{\begin{minipage}{\textwidth}\ \vspace{6ex}\\[8pt] 
  \copyright 2021 IEEE.  Personal use of this material is permitted.  Permission from IEEE must be obtained for all other uses, in any current or future media, including reprinting/republishing this material for advertising or promotional purposes, creating new collective works, for resale or redistribution to servers or lists, or reuse of any copyrighted component of this work in other works.
\end{minipage}} 

\begin{abstract}
This paper presents a generalizable methodology for data-driven identification of nonlinear dynamics that bounds the model error in terms of the prediction horizon and the magnitude of the derivatives of the system states. Using higher-order derivatives of general nonlinear dynamics that need not be known, we construct a Koopman operator-based linear representation and utilize Taylor series accuracy analysis to derive an error bound. The resulting error formula is used to choose the order of derivatives in the basis functions and obtain a data-driven Koopman model using a closed-form expression that can be computed in real time. Using the inverted pendulum system, we illustrate the robustness of the error bounds given noisy measurements of unknown dynamics, where the derivatives are estimated numerically. When combined with control, the Koopman representation of the nonlinear system has marginally better performance than competing nonlinear modeling methods, such as SINDy and NARX. In addition, as a linear model, the Koopman approach lends itself readily to efficient control design tools, such as LQR, whereas the other modeling approaches require nonlinear control methods. The efficacy of the approach is further demonstrated with simulation and experimental results on the control of a tail-actuated robotic fish. Experimental results show that the proposed data-driven control approach outperforms a tuned PID (Proportional Integral Derivative) controller and that updating the data-driven model online significantly improves performance in the presence of unmodeled fluid disturbance. This paper is complemented with a video: \href{https://youtu.be/9_wx0tdDta0}{https://youtu.be/9\textunderscore wx0tdDta0}.\end{abstract}

\begin{IEEEkeywords}
Koopman operator, model learning, robotic fish, data-driven control \end{IEEEkeywords}

\IEEEpeerreviewmaketitle

\section{Introduction}\label{Sec:: Intro}
\IEEEpubidadjcol
\IEEEPARstart{D}{ynamics} of robotic systems are often unknown, highly nonlinear, and high-dimensional, making real-time control challenging\cite{nmpc_challenges}. Underwater applications represent many of these unmet challenges. In particular, underwater robots are often underactuated (typically by design to reduce weight and cost) and highly nonlinear, and the fluid environments they operate in are difficult to model and time-varying. These challenges call for feedback policies that can learn or adapt online to unmodeled changes \cite{datadrivencontrol} and balance model accuracy and computational efficiency. 

One can draw from many control schemes, including linear quadratic regulator (LQR) \cite{LQR}, linear model predictive control (LMPC) \cite{lmpc}, nonlinear model predictive control (NMPC) \cite{nmpc}, feedback linearization \cite{isidori2013nonlinear}, differential dynamic programming (DDP) \cite{ddp}, sequential action control (SAC) \cite{SAC} and variants of the above \cite{sddp, iLQG, SAC2}. In fact, several of these methods have already been explored in underwater tasks using robotic fish of different morphologies. Researchers have performed maneuvering, speed and orientation control, collision-avoidance, point-to-point navigation as well as velocity and position tracking using a myriad of control schemes, such as PID \cite{Control_Yu, deng2015yaw}, LQR \cite{Maneuverability_Suebsaiprom}, SAC \cite{mamakoukas2016, SAC2}, fuzzy control \cite{controlHorizontal_Kato, wen2011novel}, geometric control \cite{Geo_Morgansen, Trajectory_Morgansen}, sliding mode control \cite{SlidingMode_suebsaiprom}, NMPC \cite{NMPC_maria}, feedback linearization \cite{zhang2018modeling}, backstepping control \cite{BSC_maria} or even a combination of the above \cite{motionPlanning_Saimek, ren2015motion}. However, the aforementioned methods are either system-specific, apply to dynamics with certain structures, or are computationally prohibitive for real-time identification and control of resource-constrained robots. They also typically require full knowledge of the dynamics.

The Koopman operator has recently drawn attention in the robotics community, as it can help address both the difficulty with nonlinearity and the need to incorporate data in the model \cite{koopman_linear_si, koopman_datadrivenapproximation_edmd, kaiser2019data}. Specifically, the Koopman operator propagates a nonlinear system in a linear manner without loss of accuracy by evolving functions of the states \cite{koopman}. The linear representation allows one to control the nonlinear system using tools from linear optimal control \cite{koopman_actuation, koopman_KIC}, which is often easier and faster to implement than nonlinear methods, thus enabling online feedback for high-dimensional nonlinear systems. Beyond the computational speed and the reduction in feedback complexity, the linear representation-based control could lead to better performance compared to a controller that is based on the original nonlinear system \cite{brunton_invariant}. The Koopman operator can also be readily combined with machine learning tools to help learn unknown dynamics from data \cite{koopman_mezic, koopmanism, koopman_mpc, koopman_dmd, koopman_stabilityanalysis, koopman_sindy, koopman_deeplearning, koopman_ian, Bruder_Koopman, koopman_symmetries_salova, dogra2020optimizing}. 
\IEEEpubidadjcol

A downside of the Koopman operator, however, is that, unless a finite-dimensional invariant subspace exists \cite{brunton_invariant}, it is infinite-dimensional. For this reason, recent studies try to obtain a finite-dimensional approximation to the Koopman operator that still captures the dynamics with high fidelity \cite{koopman_datadrivenapproximation_edmd, koopman_KIC}. Koopman-based optimal control applications have successfully implemented such finite-dimensional approximations of the operator for various systems with unknown dynamics \cite{koopman_ian, Bruder_Koopman}. In the trade-off between the dimensionality and the modeling accuracy of the linear representation, these studies face the challenge of finding the minimum number and choice of basis functions for the desired accuracy \cite{koopman_deeplearning}. 

\IEEEpubidadjcol

There has not been a systematic way to address general nonlinear systems; rather, most efforts rely on trial-and-error \cite{koopman_ian,Bruder_Koopman, koopman_symmetries_salova, peitz2019koopman, Koopman_sparsedata, huang2019data} and machine learning tools \cite{koopman_deeplearning, koopman_DNN}, or are system-specific \cite{koopman_linear_si}. Furthermore, there is no method available to bound the modeling error of the finite-dimensional Koopman operators for general nonlinear systems. The only relevant study in \cite{predictiveaccuracy_DMD} analyzes the error bounds of Dynamic Mode Decomposition, closely related to the Koopman operator, for a limited class of systems (parabolic partial differential equations) and with restrictive assumptions on the stability of the identified dynamics.

In this work, we introduce a way of choosing the basis functions and analyze the model accuracy with error bounds. Specifically, we construct the basis functions for the Koopman operator using higher-order derivatives of the nonlinear dynamics, which need not be known; only the derivatives of the tracked states must be available. The error bounds, which depend on the prediction time horizon and the magnitude of the derivatives, can be used to determine the basis functions for the desired level of model accuracy. To our best knowledge, this is the first work that selects basis functions using a systematic methodology and provides an error bound on the accuracy of a Koopman representation for general nonlinear dynamics. To adapt to uncertain or changing dynamics, we obtain the operator using a data-driven, least-squares technique that has a closed-form solution. The linear representation is conducive to various controller designs; in this work, we demonstrate the utility of our online data-driven modeling approach in real-time control using LQR, the gains of which can be obtained with negligible computational cost. 

We validate our approach with simulation and experimental results on the control of a tail-actuated robotic fish and compare it to a PID scheme. Although the tuned PID controller is successful at tracking the desired trajectories, it is outperformed in all tasks by the proposed Koopman-LQR controller. Furthermore, updating the dynamical model in real time significantly improves the performance of Koopman-LQR in the presence of unknown fluid disturbance. 

This paper extends the work in \cite{RSS2019_MamakoukasCastano} in significant ways. The additional contributions include analysis of error bounds on the linear approximation of nonlinear systems, comparison to state-of-the-art modeling methods, realization of online model updates and controller synthesis (thus enabling real-time adaptation), quantitative experimental evaluation, including trials in the presence of unknown fluid disturbance, and extensive comparison of the proposed approach with a PID controller.

The organization of the paper is as follows. Section \ref{sec:: Koopman} reviews the Koopman operator and methods to obtain a data-driven finite-dimensional approximation. Section \ref{sec::Synthesis} describes the proposed synthesis of Koopman basis functions and derives the associated error bounds. Section \ref{sec::Results} evaluates the proposed data-driven modeling scheme on the control of a tail-actuated robotic fish, with and without the presence of flow disturbance. Section \ref{sec:conclusion} summarizes the findings of this paper and discusses ideas for further expanding this work. 

\section{Background} \label{sec:: Koopman}

\subsection{Koopman Operator}\label{sec:: Koopman/KoopmanOperator}
\indent The Koopman operator $\mathcal{K}$ is an infinite-dimensional linear operator that evolves functions of the state $s \in \mathbb{R}^N$ (i.e., $\Psi(s)$, commonly referred to as observables) of a dynamical system. 
Given general nonlinear dynamics of the form 
\begin{align}
 s_{k+1} = F(s_k),
\end{align}
where $F$ is the flow map, the Koopman operator advances the observables with the flow of the dynamics:
\begin{align}
 \mathcal{K}\Psi = \Psi \circ F.
\end{align}
Thus, it advances measurements of the states linearly.
That is,
\begin{align}\label{eq:: Koopmaneq}
\frac{d}{dt}\Psi(s) = \mathcal{K}\Psi(s) \quad \text{and} \quad 
\Psi(s_{k+1}) = \mathcal{K}_d \Psi(s_k),
\end{align}
where $\mathcal{K}$ and $\mathcal{K}_d$ are the continuous-time and discrete-time operators, respectively, related by $\mathcal{K} = \log(\mathcal{K}_d)/\Delta t$ \cite{antsaklis2006linear}. In other words, it allows one to evolve the nonlinear dynamics in a linear setting without loss of accuracy. Contrary to linearizing the dynamics around a fixed point, which leads to inaccurate models away from the linearization point, the Koopman operator evolves a nonlinear system with full fidelity throughout the state space. For a more comprehensive review of the Koopman operator, we refer the reader to \cite{koopmanism}.

Expressing nonlinear systems in a linear manner is a desirable property for many reasons, such as investigating the global stability of a system \cite{koopman_stabilityanalysis}, or extending the local linearization around a point to the whole basin of attraction \cite{koopman_basisofattraction}. In addition to studying the behavior of complex systems, the Koopman framework enables the use of linear optimal control for original nonlinear dynamics. Unfortunately, the infinite-dimensional nature of the Koopman operator makes practical use prohibitive. 

\subsection{Koopman Invariant Subspaces}
There exist nonlinear systems that admit a finite-dimensional linear Koopman representation. Work in \cite{brunton_invariant} analytically derives such Koopman invariant subspaces for nonlinear systems with a specific polynomial structure, whereas the authors in \cite{koopman_kronic, nandanoori2019data, haseli2019efficient,haseli2020Invariant, koopman_invariant_Naoya} identify such spaces from data. In these studies, the authors demonstrate that the LQR control based on the linear representation can outperform LQR control calculated based on the original, nonlinear dynamics. Unfortunately, Koopman invariant subspaces have only been found for a few systems, mentioned above. In fact, there can be no finite-dimensional invariant subspace that includes the states for systems with multiple fixed points \cite{brunton_invariant}. 

In the absence of a finite-dimensional Koopman invariant subspace, a linear propagation of states will induce errors. Regardless, the benefits of a linear model motivate obtaining an approximation to the Koopman operator that will evolve the nonlinear system with acceptable accuracy. Recent studies use data-driven regression schemes to approximate the infinite-dimensional operator $\mathcal{K}$ with a finite-dimensional representation $\tilde{\mathcal{K}}$ \cite{koopman_datadrivenapproximation_edmd, koopman_KIC, Bruder_Koopman}. In this paper, we adopt the least-squares method shown in \cite{koopman_datadrivenapproximation_edmd}, which we detail next. Note that the regression method assumes basis functions that are already known, yet our main contribution is in systematically defining those observables in the first place, which we present in Section \ref{sec::Synthesis}.

\subsection{Data-driven Finite-dimensional Approximation to Koopman Operators}

To obtain an approximation to the Koopman operator, $\tilde{\mathcal{K}}$ $\in \mathbb{R}^{w\times w}$, one can choose a set of observable functions $\Psi(s) = [\psi_{1}(s), \psi_{2}(s), \dots, \psi_{w}(s)]$ $: \mathbb{R}^N \mapsto \mathbb{R}^w$ (which can include the states $s$ themselves) and use data to solve a least-squares minimization problem. To allow for the effect of actuation, \eqref{eq:: Koopmaneq} is modified such that the observables include control terms $u$ as well \cite{koopman_KIC, koopman_ian}. For the discrete-time case, this minimization takes the form 
\begin{align}\label{eq:: Koopman_LS_solution}
\tilde{\mathcal{K}}^*_d = \underset{\tilde{\mathcal{K}}_d}{\operatorname{argmin}}&~ \sum_{k = 0}^{P-1}\frac{1}{2}\lVert \Psi(s_{k+1}, u_{k+1}) - \tilde{\mathcal{K}}_d \Psi(s_k, u_k)\rVert^2,
\end{align}
where $P$ is the number of measurements. Each measurement is a set of an initial state $s_k$, final state $s_{k+1}$, and the actuation applied at the same instants, $u_k$ and $u_{k+1}$, respectively. The above expression has a closed-form solution, given by
\begin{align}\label{eq:: Kd_AG}
\tilde{\mathcal{K}}^*_d = \mathcal{A}\mathcal{G}^\dagger,
\end{align}
where 
\begin{equation}\label{eq::AG}
\begin{aligned}
\mathcal{A} =& \frac{1}{P} \sum_{k = 0}^{P-1} \Psi(s_{k+1}, u_{k+1}) \Psi(s_k, u_k)^T \\
\mathcal{G} =& \frac{1}{P} \sum_{k = 0}^{P-1} \Psi(s_{k}, u_{k}) \Psi(s_k, u_k)^T
\end{aligned}
\end{equation}
and $\dagger$ is the Moore-Penrose pseudoinverse. Note that the time spacing $\Delta t$ between measurements $s_k$ and $s_{k+1}$ must be consistent for all $P$ training measurements. 

The data-driven approximation of the Koopman operator is not inherently different from other system identification techniques. In fact, the Koopman operator can be approximated using any of the standard regression methods, such as ridge or lasso regression \cite{machine_learning, statistical_learning}. More importantly, contrary to standard system identification tools that may try to estimate unknown parameters or, more generally, the nonlinear dynamics of a system \cite{koopman_sindy, system_identification}, the Koopman operator framework places the system identification task in the context of seeking linear transformations of the states, which is useful for control \cite{si_nature, si_equationfree, datadriven} and other purposes, as discussed in Section \ref{Sec:: Intro}.

\subsection{LQR on Koopman Operator}\label{sec:: ControlSynthesis}
Consider a linear system with states $s\in \mathbb{R}^N$, control $u\in \mathbb{R}^M$, and a performance objective \vspace{-0.1cm}
\begin{align}\label{eq::objective}
J = \sum_{k=0}^{\infty} (s_k-s_{des,k})^TQ(s_k-s_{des,k}) + u_k^TRu_k,
\end{align}
where $Q \succeq 0 \in \mathbb{R}^{N\times N}$ and $R \succ 0 \in \mathbb{R}^{M \times M}$ are weights on the deviation from the desired states $s_{des}$ and the applied control, respectively. Further, consider the Koopman representation
\begin{align}\label{eq:: Koopman_discreteDynamics}
 \Psi(s_{k+1}, u_{k+1}) = \tilde{\mathcal{K}}_d \Psi(s_{k}, u_{k}).
\end{align}
For simplicity, we choose $\Psi(s,u) = [\Psi^T_s(s), \Psi^T_u(u)]^T$, where $\Psi_s(s) \in \mathbb{R}^{w_s}$ are the functions that depend on the states $s$ and $\Psi_u(u) \in \mathbb{R}^{w_u}$ are the functions that depend on the input $u$, where $w = w_s + w_u$. Using this notation, we rewrite \eqref{eq:: Koopman_discreteDynamics} as 
\begin{align}\label{eq:: Koopman_affineDynamics}
 \begin{bmatrix}\Psi_s(s_{k+1}) \\ \Psi_u(u_{k+1}) \end{bmatrix} = 
 \begin{bmatrix}
 A & B \\
 C & D
 \end{bmatrix}
 \begin{bmatrix}
 \Psi_s(s_k) \\
 \Psi_u(u_k) 
 \end{bmatrix},
\end{align}
where $A \in \mathbb{R}^{w_s \times w_s}$ and $B \in \mathbb{R}^{w_s \times w_u}$ are submatrices of $\tilde{\mathcal{K}}_d$ that describe the dynamics of the state-dependent functions and change only when $\tilde{\mathcal{K}}_d$ is updated. We use $\Psi_u(u_k) = u_k$ to ensure that control appears linearly in the model such that 
\begin{equation}
 \Psi_s(s_{k+1}) = A \Psi_s(s_k) + B u_k.
\end{equation}

Given the Koopman dynamics \eqref{eq:: Koopman_affineDynamics}, we choose the performance objective
\begin{equation}\label{eq:: J_K}
\resizebox{0.99\hsize}{!}{$
\begin{aligned}
J_{\tilde{\mathcal{K}}} = \sum_{k=0}^{\infty} &(\Psi_s(s_k) - \Psi_s(s_{des,k}))^T Q_{\tilde{\mathcal{K}} }(\Psi_s(s_k) - \Psi_s(s_{des,k}))
+ u_k^TRu_k,
\end{aligned}
$}
\end{equation}
where $Q_{\tilde{\mathcal{K}} } \succeq 0 \in \mathbb{R}^{w_s \times w_s}$ penalizes the deviation from the desired observable functions $\Psi_s(s_{des})$. We let the first $N$ observables be the original states $s$ and set
\begin{align}
Q_{\tilde{\mathcal{K}} } = \begin{bmatrix} Q & 0 \\ 0 & 0 \end{bmatrix},
\end{align}
so that a meaningful comparison can be made with regards to the original nonlinear system and the associated objective function shown in \eqref{eq::objective}. The Koopman representation is conducive to linear quadratic regulator (LQR) feedback of the form
\begin{align} \label{eq:: K_LQR}
u_k = - K_{LQR} (\Psi_s(s_k) - \Psi_s(s_{des,k}) ),
\end{align}
where $K_{LQR} \in \mathbb{R}^{M \times w_s}$, the LQR gains, can be readily calculated from $A, B$ in \eqref{eq:: Koopman_affineDynamics} \cite{SDRE, SDRE_control}. Note that, for given LQR gains, the control is updated using only the functions $\Psi_s(s)$, leading to minimal computation. For more details on the control policy used for the Koopman representation, the reader can refer to \cite{RSS2019_MamakoukasCastano}. Last, we want to emphasize that our approach for synthesizing data-driven Koopman representations can be used with different feedback schemes, such as MPC control.
 
\section{Synthesis of Basis Functions for Error-Bounded Koopman Representation}\label{sec::Synthesis}
This section motivates using higher-order derivatives of nonlinear dynamics to populate the observables of an approximate Koopman operator. The benefits of a derivative-based representation are twofold. First, subject to a finite number of basis functions, it allows one to best capture, locally in time, nonlinear dynamics. For systems that admit a finite-dimensional Koopman invariant subspace, it is straightforward to show that the terms in the observable functions $\Psi(s)$ capture all higher-order derivatives of the original states. This is the reason why the linear representation matches the nonlinear dynamics with no error. This is also true for the invariant subspaces found in \cite{brunton_invariant}, where the Koopman observable functions are populated using the Carleman linearization approach \cite{carleman_original, carleman_nonlinear, carleman_Lie}; for the polynomial systems considered there, the observable functions correspond to the higher-order derivatives of the nonlinear dynamics. When the derivative functions do not span an invariant subspace, populating the observables $\Psi(s_k)$ with higher-order derivatives instead of arbitrary basis functions generates, locally in time, an increasingly (with the order of derivatives) accurate linear representation of the nonlinear dynamics.

Second, the derivative-based representation enables the derivation of error bounds on future predictions. Notably, when the model is entirely data-driven, these error bounds might not be enforced, but offer sound bound estimates that still hold, as we illustrate in Section \ref{subsec:: SimulationResults_Errors}, but which are then dependent on the quality of data used in the data-driven process. To approximate the Koopman operator, so far studies have largely focused on data-driven methods of the form in \eqref{eq:: Koopman_LS_solution} that consider only the local error across one time step, that is
\begin{align}
 \Psi(s_{k+1}, u_{k+1}) - \tilde{\mathcal{K}}_d \Psi(s_k, u_k),
\end{align} as used in \eqref{eq:: Koopman_LS_solution}. Another measure of the accuracy of the Koopman representation is the global error, over an arbitrarily long time window across all time steps $m >0 $ (see Fig. \ref{fig::LocalvsGlobal}), that is
\begin{align}
 \Psi(s_{k+m}, u_{k+m}) - \tilde{\mathcal{K}}_d^{m} \Psi(s_k, u_k).
\end{align}

The derivative-based linear embedding methodology presented in this work enables the computation of global error bounds of the Koopman representation. By exploiting the accuracy properties of Taylor expansions, we can synthesize Koopman basis functions that bound the model error for any particular order of linear representation. The error bounds in turn allow one to select the lowest-order representation that meets a desired accuracy. This analysis is presented next.

The proposed linear embedding method does not require knowledge of the dynamics; instead, it requires only that the time derivatives of the system states of interest be available. The values of the derivatives can be either evaluated, using knowledge of the dynamics equations, or numerically estimated from state measurements (using finite differencing or other methods \cite{derivatives_estimation}). The method can be used 

\begin{itemize}
 \item for known nonlinear dynamics: each state derivative is analytically derived from the dynamics equation and constitutes a basis function for the Koopman operator (one basis function per derivative)
 \item for dynamics whose structure is known but coefficients might be unknown or changing, as we illustrate in Section \ref{SubSection:: Synthesis}: derivatives are analytically derived from each term that appears in the dynamics equation; each term that is computed constitutes a separate basis function for the Koopman operator (at least one basis function per derivative)
 \item for completely unknown dynamics, as we illustrate in \ref{subsec:: SimulationResults_Errors}: each state derivative is numerically calculated and constitutes a basis function for the Koopman operator (one basis function per derivative).
\end{itemize}

\subsection{Error Bounds of Derivative-Based Koopman Operators }\label{Section: ErrorBounds}

\begin{figure}
	\centering
	\includegraphics[width=1\columnwidth, keepaspectratio= true]{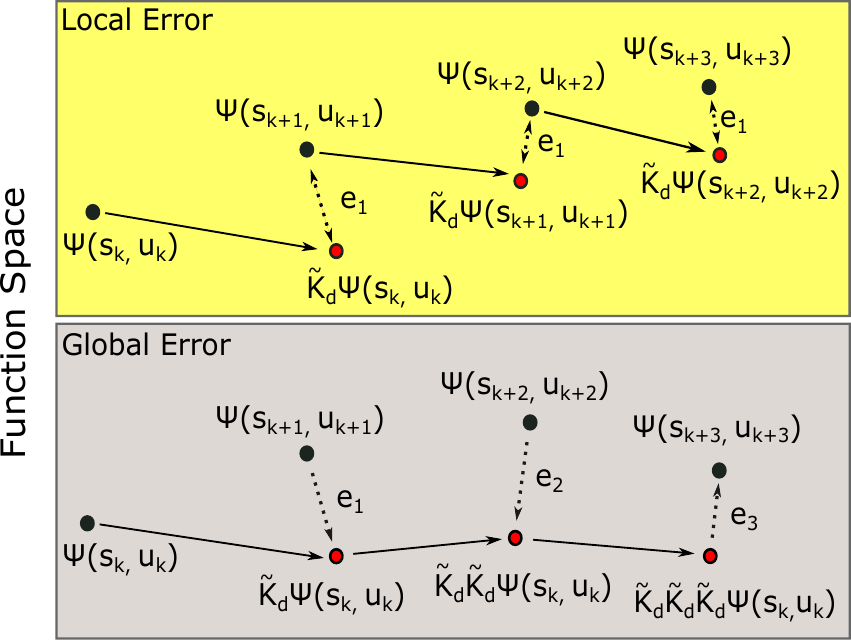}
	\caption{Local and global errors induced by approximate Koopman operators. The local error is the error induced by the operator across one step, assuming no error in the initial conditions. The global error is the total deviation away from the true states across multiple steps.}\label{fig::LocalvsGlobal}
\end{figure}

The evolution of a nonlinear function $f(t)$ that is continuously differentiable up to $n$th order can be approximated with a Taylor series as
\begin{equation}
\resizebox{0.99\hsize}{!}{$
	\begin{aligned}
\tilde{f}(t_{k+1}) =& f(t_k) + f'(t_k) \cdot (t_{k+1} - t_{k}) \\
&+ \frac{f''(t_k)}{{2!}} \cdot (t_{k+1} - t_{k})^2 + \dots + \frac{f^{(n)}(t_k)}{n!} (t_{k+1} - t_{k})^n,
\end{aligned}
$}
\end{equation}
where tilde $(\tilde{\cdot})$ denotes the predicted value of a function, and not its true value.\footnote{We assume that the true values of the function $f$ and its derivatives are known at time $t_k$. Uncertainty about the initial values can be readily included in the formula of the global error, later shown in this paper.} To keep the algebraic expressions compact, let $t_{k+1} - t_k = \Delta t$ and $f_k^{(i)}, \triangleq f^{(i)}(t_k),~\forall~i\in\mathbb{Z}\,\cap\,[1,n]$, which simplifies the above expression to 
\begin{align}\label{eq:: Taylorexpansion}
\tilde{f}_{k+1} = f_k + f'_k \cdot \Delta t + f''_k \cdot \frac{\Delta t^2}{2!} + \dots + f_k^{(n)} \frac{\Delta t^n}{n!}.
\end{align}
Propagating a function using its derivatives allows one to use the accuracy of the Taylor series to characterize the error in the evolution of a function across one time step $\Delta t$. The local error induced by a Taylor series approximation using up to $n$ derivatives across one time step is $R_n(k) = f_{k+1} - \tilde{f}_{k+1}$. This error is calculated using Lagrange's remainder formula: 
\begin{align}\label{lagrange's}
R_n(k) = \frac{f^{(n+1)}_c}{(n+1)!} \Delta t^{n+1},
\end{align}
where $f^{(n+1)}_c \triangleq f^{(n+1)}(c)$ is the time derivative of order $n+1$ evaluated at some time $c \in [t_k, t_{k+1}]$. If there exists a positive real number $L$ such that $|f^{(n+1)}_c|$ $\le L$ for all $c \in [t_k, t_{k+1}]$, then the upper error bound in \eqref{lagrange's} becomes
\begin{align}\label{eq:: local_error_bound}
|R_n(k)| \le \frac{L}{(n+1)!} \Delta t^{n+1}.
\end{align}

For the purpose of applying linear control synthesis tools to linear representations of nonlinear dynamics, we bring the Taylor approximation in \eqref{eq:: Taylorexpansion} to a linear matrix form:
\begin{align}\label{eq:: Taylor_matrix}
\underbrace{
\begin{pmatrix}\tilde{f}_{k+1} \\ \vphantom\vdots\tilde{f}'_{k+1} \\ \vphantom\vdots\tilde{f}''_{k+1} \\ \vdots \\ \vphantom\vdots\tilde{f}^{(n)}_{k+1} \end{pmatrix}}
_{\Psi(s_{k+1})}
\approx
\underbrace{\begin{pmatrix} 1 & \Delta t & \dfrac{{\Delta t}^2}{2} & \cdots & \dfrac{\Delta t^n}{n!} \\
0 & 1 & \Delta t & \cdots & \dfrac{\Delta t^{n-1}}{(n-1)!}\\
0 & 0 & 1 & \cdots & \dfrac{\Delta t^{n-2}}{(n-2)!}\\
\vdots & \vdots & \vdots & \ddots & \vdots \\
\vphantom\vdots0 & 0 & 0 & \cdots & 1
\end{pmatrix}}_{\tilde{\mathcal{K}}_d}
\underbrace{\begin{pmatrix} f_{k} \\ \vphantom\vdots f'_{k} \\ \vphantom\vdots f''_{k} \\\vdots \\ \vphantom\vdots f^{(n)}_{k}\end{pmatrix}}_{\Psi(s_k)}.
\end{align}
For a fixed $\Delta t$, expression \eqref{eq:: Taylor_matrix} resembles \eqref{eq:: Koopmaneq}, where the derivatives of the function $f_k$ are the observables $\Psi(s_k)$. When representing the Taylor series expansion, the derivative functions $f^{(i)}$ are known at time step $t_k$ and approximated at time $t_{k+1}$ by $\tilde{f}^{(i)}$. When training a Koopman operator, pairs of measurements of the states $s_k$ and $s_{k+1}$ are used to evaluate the basis functions at the corresponding time steps: $\Psi(s_k)$ and $\Psi(s_{k+1})$. Note that, in \eqref{eq:: Taylor_matrix}, all derivatives of $f_k$ are assumed to be different functions. The analytical expression \eqref{eq:: Taylor_matrix} is equivalent to a Taylor series expansion \eqref{eq:: Taylorexpansion} across one time step $\Delta t$ for all the basis functions. Therefore, the same error analysis \eqref{eq:: local_error_bound} applies to each observable in \eqref{eq:: Taylor_matrix}. 

When propagating a function across multiple time-steps using \eqref{eq:: Taylor_matrix}, the observable functions are themselves numerically propagated instead of being evaluated with measurable states at each time step, as is the case for a typical integration scheme. As a result, error accumulates not only in approximation of the original function $f$, but in the other observables as well. To track the error in the original $f$, therefore, it is necessary to be able to model the error in all of the basis functions. Using the accuracy of the Taylor series structure, we are able to model the error on every basis function and ultimately bound the model error in $f$. 

\begin{theorem}\label{th:: global_error_bound}
Consider a general nonlinear function $f(t)$ that is continuously differentiable up to order $n$. Propagating $f(t)$ and its first $n$ derivatives using the Taylor-based linear representation \eqref{eq:: Taylor_matrix} induces an error in $f(t)$ that is given by 
\begin{align}\label{eq:: global_error_bound}
e_k =& \sum_{i = 1}^{k-1}\sum_{j = 1}^{n} e_i^{(j)} \frac{\Delta t^j}{j!} + \sum_{i = 0}^{k-1}f_{i, i+1}^{(n+1)}\frac{\Delta t^{n+1}}{(n+1)!},
\end{align} 
where $n\,\in \mathbb{Z}^{\ge 0}$ is the number of derivative basis functions used, $k\,\in \mathbb{Z}^{\ge 1}$ is the number of time steps into the future, and, from Lagrange's remainder formula \eqref{lagrange's}, $f^{(n+1)}_{i,i+1}$ is the $n+1$th time derivative of function $f$ evaluated at some time $t \in [t_i, t_{i+1}]$. The error bound is given by
\begin{align}\label{eq:: GlobalError_simplified}
|e_k| \le& \frac{T^{n+1}}{(n+1)!}|f^{(n+1)}_{max}|,
\end{align}
where $T \triangleq k \Delta t$ is the prediction time horizon and $|f^{(n+1)}_{max}|$ is the maximum magnitude of the $n+1$th derivative.
\end{theorem}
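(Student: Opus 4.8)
The plan is to first establish the exact error expression \eqref{eq:: global_error_bound} through a one-step error recursion applied to \emph{every} observable simultaneously, and then obtain the bound \eqref{eq:: GlobalError_simplified} by strengthening the induction so as to control the error in all of the derivative basis functions at once.

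First I would define $e_k^{(j)}$ as the discrepancy between the true value $f_k^{(j)}$ and the value $\tilde{f}_k^{(j)}$ produced by iterating \eqref{eq:: Taylor_matrix}, so that $e_k = e_k^{(0)}$. The $j$-th row of $\tilde{\mathcal{K}}_d$ is precisely the order-$(n-j)$ Taylor truncation of $f^{(j)}$; writing the true $f_k^{(j)}$ as that same truncation applied to the previous true values plus its Lagrange remainder \eqref{lagrange's}, and subtracting the propagated quantity, yields the one-step recursion
\[
e_k^{(j)} = \sum_{l=0}^{n-j} e_{k-1}^{(j+l)}\frac{\Delta t^l}{l!} + f^{(n+1)}_{k-1,k}\frac{\Delta t^{\,n-j+1}}{(n-j+1)!}.
\]
Specializing to $j=0$ isolates $e_{k-1}^{(0)}$ (the $l=0$ term) from the remaining derivative errors; unrolling this scalar recursion from step $k$ down to the initial step and using the assumption of zero initial error ($e_0^{(j)}=0$, per the footnote) telescopes the $l=0$ contributions and reproduces \eqref{eq:: global_error_bound} verbatim.

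For the bound, the obstacle is that \eqref{eq:: global_error_bound} still contains the unknown derivative errors $e_i^{(j)}$, so a naive triangle inequality does not close. I would therefore prove by induction on $k$ the strengthened claim $|e_k^{(j)}| \le \frac{(k\Delta t)^{\,n-j+1}}{(n-j+1)!}\,|f^{(n+1)}_{max}|$ for every $j$ at once. Inserting the inductive hypothesis into the recursion above and bounding each remainder by $|f^{(n+1)}_{max}|$, I expect the resulting sum $\sum_{l}\frac{\Delta t^l}{l!}\frac{((k-1)\Delta t)^{\,n-j-l+1}}{(n-j-l+1)!}$ plus the new remainder term to be recognizable as a binomial expansion, collapsing exactly into $\frac{(k\Delta t)^{\,n-j+1}}{(n-j+1)!}$. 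Setting $j=0$ and $T=k\Delta t$ then gives \eqref{eq:: GlobalError_simplified}, and the base case $k=1$ is immediate since only the single remainder term survives.

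The crux of the argument is this binomial identity, which is what makes the per-step accumulation telescope into a single global Taylor remainder rather than a looser geometric-type estimate; tracking only $e_k^{(0)}$ is insufficient, and the simultaneous control of all derivative errors is the essential ingredient. An alternative and more conceptual route would bypass the induction entirely: since $\tilde{\mathcal{K}}_d$ equals the exponential $e^{N\Delta t}$ of the nilpotent shift matrix $N$ whose powers generate the Taylor coefficients, one has $\tilde{\mathcal{K}}_d^{\,k}=e^{NT}$, so the first component of the $k$-step prediction is simply the order-$n$ Taylor polynomial of $f$ about $t_0$ evaluated at $t_0+T$; its error is then a single Lagrange remainder over $[t_0,t_0+T]$, immediately yielding \eqref{eq:: GlobalError_simplified}. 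I would present the recursion-based derivation to match the stated formula \eqref{eq:: global_error_bound} and note the matrix-exponential viewpoint as the reason the bound is tight.
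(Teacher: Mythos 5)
Your derivation of the exact error expression \eqref{eq:: global_error_bound} mirrors the paper's Appendix~\ref{App:: GlobalError}: the same one-step recursion for $e_k^{(j)}$, obtained by subtracting the propagated row of \eqref{eq:: Taylor_matrix} from the true Taylor expansion of $f^{(j)}$ with its Lagrange remainder, unrolled under the zero-initial-error convention. For the bound \eqref{eq:: GlobalError_simplified}, however, you take a genuinely different route. The paper (Appendix~\ref{App:: ErrorBounds}) inducts on $n$, the order of the derivative basis: it notes that $|e_i^{(j)}|$ is the error of an $(n-j)$-derivative representation of $f^{(j)}$, inserts that hypothesis into \eqref{eq:: global_error_bound}, and collapses the resulting double sum $\sum_{i=1}^{k-1}\sum_{j=1}^{n} i^{n+1-j}/\bigl((n+1-j)!\,j!\bigr)$ using binomial coefficients and Pascal's power-sum identity. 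You instead induct on $k$ with the strengthened hypothesis $|e_k^{(j)}|\le \frac{(k\Delta t)^{n-j+1}}{(n-j+1)!}|f^{(n+1)}_{max}|$ holding for all $j$ simultaneously; the induction step closes by the plain binomial theorem, since
\begin{equation*}
\sum_{l=0}^{m}\frac{\bigl((k-1)\Delta t\bigr)^{m-l}}{(m-l)!}\,\frac{\Delta t^{l}}{l!}=\frac{(k\Delta t)^{m}}{m!},\qquad m=n-j+1,
\end{equation*}
where the $l=m$ term is exactly the fresh Lagrange remainder added at step $k$. Both arguments are valid; yours trades the paper's power-sum identity for a more elementary binomial identity and makes transparent why the per-step remainders telescope into a single global remainder rather than compounding. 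Your closing observation that $\tilde{\mathcal{K}}_d=e^{N\Delta t}$ for the nilpotent shift $N$, so that $\tilde{\mathcal{K}}_d^{\,k}=e^{NT}$ and the $k$-step prediction of $f$ is precisely its degree-$n$ Taylor polynomial at $t_0$ evaluated at $t_0+T$, gives a one-line proof of \eqref{eq:: GlobalError_simplified} (and explains its tightness) that the paper does not exploit.
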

\begin{proof}
For the derivation of the error expression \eqref{eq:: global_error_bound}, see Appendix \ref{App:: GlobalError}. For the derivation of the error bound formula \eqref{eq:: GlobalError_simplified}, see Appendix \ref{App:: ErrorBounds}.
\end{proof}

To the best of our knowledge, this is the first work that provides prediction error bounds on the accuracy of a Koopman representation for general nonlinear dynamics. Prediction error bounds based on Taylor series had previously only been derived for a single step, and not for an arbitrary number of time steps into the future, as we derive in this paper.

The error bound \eqref{eq:: GlobalError_simplified} is associated with the Koopman representation \eqref{eq:: Taylor_matrix} for the dynamics of a single function $f$. The same methodology can be used to propagate multiple states of a system with coupled dynamics. Specifically, a system with states $s(t)$ and general nonlinear dynamics $\dot{s}(t) = g(s(t)) \in \mathbb{R}^N$ that are continuously differentiable up to order $n$
\begin{align}
\frac{d}{dt}
\begin{bmatrix}
 s_1 \\
 s_2 \\
 \vdots \\
 s_N
 \end{bmatrix}
 =&
 \begin{bmatrix}
 g_1(s) \\
 g_2(s) \\
 \vdots \\
 g_N(s)
 \end{bmatrix}
\end{align}
can be propagated in discrete time as
\begin{equation}
\begin{bmatrix}
 s_{1,k+1} \\ s_{2,k+1} \\ \vdots \\ s_{N,k+1} 
\end{bmatrix}
 =
\begin{bmatrix}
s_{1,k} + g_{1,k} \cdot \Delta t + \dots + g_{1,k}^{(n_1)} \frac{\Delta t^{n_1+1}}{{(n_1+1)}!} \\
s_{2,k} + g_{2,k} \cdot \Delta t + \dots + g_{2,k}^{(n_2)} \frac{\Delta t^{n_2+1}}{{(n_2+1)}!} \\
\vdots \\
s_{N,k} + g_{N,k} \cdot \Delta t + \dots + g_{N,k}^{(n_N)} \frac{\Delta t^{n_N+1}}{{(n_N+1)}!}
\end{bmatrix},
\end{equation}
where $n_j~\text{for}~j\in\mathbb{Z}\,\cap\,[1,N]$ indicates the highest-order of derivatives of $g_j$ used to propagate the $j$th state of the original dynamics (which does not have to be the same for all states). The above expression can be rewritten in a linear form similar to \eqref{eq:: Taylor_matrix}

\begin{equation}\label{eq:: TaylorMatrix_Manystates}
\resizebox{0.99\hsize}{!}{$
\underbrace{\left[ \begin{array}{c}
 s_{1,k+1} \\ g_{1,k+1} \\ \vdots \\ g_{1,k+1}^{(n_1)} \\ [1.0ex] \hdashline[2pt/2pt] \\ [-2.0ex] s_{2,k+1} \\ g_{2,k+1}\\ \vdots \\ g_{2,k+1}^{(n_2)} \\ [1.0ex] \hdashline[2pt/2pt] \\ [-2.0ex] \vdots \\ [1.0ex] \hdashline[2pt/2pt] \\ [-2.0ex] s_{N,k+1} \\ g_{N,k+1} \\ \vdots \\ g^{(n_N)}_{N,k+1}
\end{array} \right]}_{\Psi(s_{k+1})}
 =
\underbrace{\left[ 
\begin{array}{cccc}
 \begin{array}{c@{}c@{}c@{}c} T(n_1) \end{array} & \mathbf{0} & \cdots & \mathbf{0}\\
 [2.0ex]
 \hdashline[2pt/2pt] \\
 \mathbf{0} & \begin{array}{c@{}c@{}c@{}c} T(n_2) \end{array} & \cdots & \mathbf{0}\\
 [2.0ex]
 \hdashline[2pt/2pt] \\ 
 \mathbf{0} & \mathbf{0} & \ddots & \vdots \\ 
 [2.0ex]
 \hdashline[2pt/2pt] \\
 \mathbf{0} & \mathbf{0} & \cdots & \begin{array}{c@{}c@{}c@{}c} T(n_N) \end{array} \\ [0.5ex]
\end{array}\right]}_{\tilde{\mathcal{K}}_d}
\underbrace{\left[ \begin{array}{c}
 s_{1,k} \\ g_{1,k} \\ \vdots \\ g_{1,k}^{(n_1)} \\ [1.0ex] \hdashline[2pt/2pt] \\ [-2.0ex] s_{2,k} \\ g_{2,k}\\ \vdots \\ g_{2,k}^{(n_2)} \\ [1.0ex] \hdashline[2pt/2pt] \\ [-2.0ex] \vdots \\ [1.0ex] \hdashline[2pt/2pt]\\ [-2.0ex] s_{N,k} \\ g_{N,k} \\ \vdots \\ g^{(n_N)}_{N,k}
\end{array} \right]}_{\Psi(s_{k})},\\
$}
\end{equation}
where 
\begin{equation}
T(n_j) = 
\left[ 
 \begin{array}{cccc}
 1 & \Delta t & \cdots & \dfrac{\Delta t^{n_j+1}}{(n_j+1)!} \\
 0 & 1 & \cdots & \dfrac{\Delta t^{n_j}}{n_j!} \\
 \vdots & \vdots & \ddots & \vdots \\
 0 & 0 & \cdots & 1 \\
 \end{array}\right]~\text{for}~j\in\mathbb{Z}\,\cap\,[1,N].
\end{equation}

Note how \eqref{eq:: TaylorMatrix_Manystates} is grouped into submatrices that propagate independently each state and its higher-order derivatives. The basis functions are the states and their derivatives. Propagating a nonlinear system with states $s$ and nonlinear dynamics $g(s(t))$ using \eqref{eq:: TaylorMatrix_Manystates} is equivalent to propagating each state separately using \eqref{eq:: Taylor_matrix} and thus induces, for each state, an error given by an expression similar to \eqref{eq:: global_error_bound}. 

The formulation in \eqref{eq:: TaylorMatrix_Manystates} uses basis functions that depend, for simplicity, only on the state $s$. When working with a system that has control inputs, one can treat controls $u$ in a similar fashion and calculate its higher-order derivatives, by introducing $u$ as dummy states that are the derivatives of the control input, a common practice \cite{koopman_kronic}.

\begin{corollary}
Consider general nonlinear dynamics $\dot{s}(t) = g(s(t))$ that are continuously differentiable up to order $n$. Propagating $s(t)$ using \eqref{eq:: TaylorMatrix_Manystates} induces a bounded error on state $s_i,~i\in\mathbb{Z}\,\cap\,[1,N],$ given by
\eqref{eq:: global_error_bound}, where $g(s(t))^{(i-1)} = f(t)^{(i)}$. 
\end{corollary}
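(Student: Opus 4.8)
The plan is to reduce the coupled, multi-state claim entirely to the single-function result of Theorem~\ref{th:: global_error_bound}, exploiting the block-diagonal structure of the propagation matrix in \eqref{eq:: TaylorMatrix_Manystates}. The key observation is that $\tilde{\mathcal{K}}_d$ there is block diagonal, with the $j$-th block $T(n_j)$ acting only on the sub-vector $(s_{j,k}, g_{j,k}, \dots, g_{j,k}^{(n_j)})^T$. Hence the forward propagation of each state together with its derivatives is self-contained: advancing the observables of state $j$ one step never calls on the observables of any other state, so the $N$-dimensional propagation splits into $N$ independent scalar propagations.

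First I would make the correspondence with the single-function matrix \eqref{eq:: Taylor_matrix} explicit. Fixing a state index $i$ and setting $f \triangleq s_i$, the relation $\dot s_i = g_i$ gives $f^{(m)} = s_i^{(m)} = g_i^{(m-1)}$ for every $m \ge 1$, so the sub-vector propagated by $T(n_i)$ is exactly $(f, f', \dots, f^{(n_i+1)})^T$. Comparing entries, $T(n_i)$ coincides with the scalar Taylor matrix of \eqref{eq:: Taylor_matrix} when its order parameter is taken to be $n_i+1$. The propagation of the $i$-th block is therefore identical to the scalar Taylor-based linear representation analyzed in Theorem~\ref{th:: global_error_bound}; applying that theorem verbatim yields the error expression \eqref{eq:: global_error_bound} and the bound \eqref{eq:: GlobalError_simplified} for $f = s_i$, and rewriting through $f^{(m)} = g_i^{(m-1)}$ gives the stated per-state formula.

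The step I expect to require the most care is arguing that the coupling of the true dynamics does not spoil this decoupled error analysis. Each $g_i$ depends on the full state $s$, so the true trajectory is genuinely coupled, yet the propagation matrix is block diagonal. The reconciliation is that Theorem~\ref{th:: global_error_bound} is indifferent to \emph{why} the scalar function $f = s_i$ evolves as it does: its derivation uses only smoothness of $f$ and Taylor's theorem with Lagrange remainder \eqref{lagrange's}. All inter-state coupling is already absorbed into the true values of the observables $g_i^{(m)}$ and into the remainder term $f^{(n_i+2)} = g_i^{(n_i+1)}$, which the theorem treats as given, so no cross-terms between states can enter the per-state error. It then remains only to check the smoothness bookkeeping --- since block $i$ propagates $s_i$ through its $(n_i+1)$-th derivative, the relevant remainder involves $s_i^{(n_i+2)} = g_i^{(n_i+1)}$, so the hypothesis that $g$ is continuously differentiable to the required order is exactly what guarantees its existence --- after which the corollary follows immediately from Theorem~\ref{th:: global_error_bound} applied block by block.
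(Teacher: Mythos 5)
Your proposal is correct and follows essentially the same route as the paper: the paper's proof likewise identifies each state $s_i$ with a scalar function $f_i$, observes that the block-diagonal structure of \eqref{eq:: TaylorMatrix_Manystates} reduces the propagation of each block to the single-function representation \eqref{eq:: Taylor_matrix}, and then applies Theorem~\ref{th:: global_error_bound} block by block. Your version is more explicit than the paper's two-sentence argument --- in particular about the index shift ($T(n_i)$ corresponding to order $n_i+1$ in the scalar matrix) and about why coupling in the true dynamics does not interfere --- but these are elaborations of the same idea, not a different proof.
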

\begin{proof}
Consider the propagation of each state $s_i$ and its derivative functions $g_i^{(\cdot)}$. Let each $s_i$ be a function $f_i$ whose $n_i^{th}$ derivative is $f^{(n_i)}$. Then, from Theorem \ref{th:: global_error_bound}, propagating each state $s_i \triangleq f_i$ with \eqref{eq:: Taylor_matrix} induces an error given by \eqref{eq:: global_error_bound}.
\end{proof}
The error bound \eqref{eq:: GlobalError_simplified} allows one to calculate the maximum possible error in each system state when propagating it with the fixed linear matrix \eqref{eq:: TaylorMatrix_Manystates}. As a result, the bound can be used to determine the desired number of derivatives that are needed for each state that would generate minimal error given a fixed prediction horizon $T$ and subject to the nonlinear dynamics. Alternatively, the error bound can also be used to compute the maximum length of the prediction time horizon for which the state error is bound to remain under a threshold given a set number of derivative basis functions.

Because, in general, there is no closure of the higher-order derivatives and the series has to be truncated, the analytical expression \eqref{eq:: Taylor_matrix} would only lead to an approximate Koopman operator, as is commented in \cite{brunton_invariant}. Note, for example, that the highest derivatives in \eqref{eq:: TaylorMatrix_Manystates} are not updated at all. For this reason, we use data-driven techniques to obtain a $\tilde{\mathcal{K}}_d$ that more accurately advances all of the basis functions than the analytical expression. On the other hand, the error bound \eqref{eq:: GlobalError_simplified} applies to a linear propagation of nonlinear dynamics using \eqref{eq:: TaylorMatrix_Manystates} and therefore is no longer guaranteed when a data-driven operator is used instead. Nevertheless, it can still serve to measure how amenable nonlinear dynamics are to a linear representation by revealing the relationship between the magnitude and order of the derivatives. Furthermore, empirically, the data-driven model does resemble the Taylor-series structure \eqref{eq:: TaylorMatrix_Manystates} such that the error bounds remain relevant.\footnote{Although the data-driven solution is not guaranteed to bound all local errors within the Taylor series accuracy, given a training dataset that is a representative part of the state space, solutions that largely deviate from the Taylor-series structure in \eqref{eq:: Taylor_matrix} would generate large local errors in parts of the state space and thus be avoided by the least-squares solution \eqref{eq:: Kd_AG}.} Using simulation results, we next verify the similarity of the data-driven operator to the analytical expression in \eqref{eq:: Taylor_matrix}, as well as the validity of the error bounds.

\subsection{Error Bound Estimation Using Data-Driven Operator}\label{subsec:: ErrorBoundEstimationUsingDataDrivenOperator}
The error bound formula \eqref{eq:: GlobalError_simplified}, derived for a linear representation of \eqref{eq:: TaylorMatrix_Manystates}, remains relevant to a data-driven operator, when the latter has similar structure, i.e., small Frobenius distance, to the Taylor-series form \eqref{eq:: Taylor_matrix}. On the other hand, since an operator computed from data may not take exactly the form of (\ref{eq:: TaylorMatrix_Manystates}), the bounds shown in (\ref{eq:: GlobalError_simplified}) are not strictly enforced, but offer what we refer to as sound bound estimates in the remainder of the paper. To calculate the bound estimates, one needs to compute $|f_{max}^{(n+1)}|$, the magnitude of the lowest-order derivative of the system states that is not used in the basis functions. When dynamics are known, this value can be calculated numerically. Alternatively, as we show next, one can exploit the Taylor-series structure of the data-driven operator to estimate the error bounds beyond the training set that has been used to generate the Koopman operator even when there is no knowledge of the dynamics. 

Specifically, given a linear representation that approximates the Taylor-series structure \eqref{eq:: Taylor_matrix}, the local error across one time step induced by the data-driven model can be described by the Taylor series accuracy. Thus, using \eqref{eq:: GlobalError_simplified}, the error across one time step ($k = 1$) of a function $f$ can be written as
\begin{align}
|e_1| \le |f_{max}^{(n+1)} \frac{\Delta t^{(n+1)}}{(n+1)!}|
=& |f_{max}^{(n+1)}| \frac{\Delta t^{(n+1)}}{(n+1)!},
\end{align}
where $e_1$ is available from the data-driven training process \eqref{eq:: Koopman_LS_solution}.
Let $|e_1|_{max}$ be the maximum local error, i.e., $|e_1|_{max} \ge |e_1|$. Then, when the training data set is large enough, one can get
\begin{align}
|e_1|_{max} \approx |f_{max}^{(n+1)}| \frac{\Delta t^{(n+1)}}{(n+1)!},
\end{align}
which is rearranged to
\begin{align}\label{eq::datadriven_upperbound}
|f^{(n+1)}_{max}|\approx |e_1|_{max} \frac{(n+1)!}{\Delta t^{(n+1)}}.
\end{align}
In short, we use the maximum error across one time step from the training process to estimate the term $|f_{max}^{(n+1)}|$, which in turn, using \eqref{eq:: GlobalError_simplified}, allows us to estimate $|e_k|$, the error bound after $k$ time steps. Alternatively, when no analytical model of the dynamics is available, the value $|f^{(n+1)}_{max}|$ can also be estimated numerically using measurements of $f$.

\subsection{Synthesis of Derivative-Based Koopman Observables with Structural Knowledge of Dynamics}\label{SubSection:: Synthesis}
The derivative-based approach proposed in this work populates Koopman observables with the system states $s$ and their derivative functions. Each derivative is a separate function that can be computed from the analytical expression when dynamics are fully known, or numerically estimated from measurements when no model exists. In this subsection, we show how we construct the basis functions to exploit structural knowledge of dynamics that have unknown coefficients. 

For simplicity, we assume that the dynamics of each system state depend on a single term, i.e., a nonlinear function multiplied by a coefficient; the case of having multiple such terms can be handled similarly. In particular, consider a nonlinear system with states $s \in \mathbb{R}^{N}$ and dynamics
\begin{align}
\frac{d}{dt}
\begin{bmatrix}
 s_1 \\
 s_2 \\
 \vdots \\
 s_N
 \end{bmatrix}
 =&
 \begin{bmatrix}
 c_1 g_1(s) \\
 c_2 g_2(s) \\
 \vdots \\
 c_N g_N(s)
 \end{bmatrix},
\end{align}
where $c_i$$, ~i\in\mathbb{Z}\,\cap\,[1,N],$ are unknown coefficients and $g_i(s)$ are nonlinear functions of the states $s$. The second-order time derivatives of the states $s$ are 
\begin{align}
\frac{d^2}{dt^2}
\begin{bmatrix}
 s_1 \\
 s_2 \\
 \vdots \\
 s_N
 \end{bmatrix}
 =&
 \begin{bmatrix}
 c_1 g_1'(s) \\
 c_2 g_2'(s) \\
 \vdots \\
 c_N g_N'(s)
 \end{bmatrix},
\end{align}
where $g_i^\prime(s)$ denotes the time derivative of $g_i$, and thus
\begin{align}
 c_i g'_i(s) = c_i (\frac{\partial g_i}{\partial s_1} c_1g_1 + \dots + \frac{\partial g_i}{\partial s_N} c_Ng_N) ~ \text{for}~i\in \mathbb{Z}\,\cap\, [1,N].
\end{align}
For ease of discussion, we limit the analysis to the first two time derivatives, but the same process can continue to generate higher-order derivatives and, thus, additional basis functions. 

Using the states $s_i$, the first-order derivatives $g_i$ and the individual terms $\dfrac{\partial g_i}{\partial s_j}g_j$ that appear in $g'_i(s)$, where $i, j \in \mathbb{Z}\,\cap\,[1,N]$, we populate the basis functions of the Koopman matrix. In discrete time, the states are then propagated with 
\begin{equation}
\begin{bmatrix}
 s_{1,k+1} \\ s_{2,k+1} \\ \vdots \\ s_{N,k+1} 
\end{bmatrix}
 =
\begin{bmatrix}
s_{1,k} + c_1 g_{1,k} \cdot \Delta t + c_1 g'_{1,k} \dfrac{\Delta t^{2}}{{2}!} \\
s_{2,k} + c_2 g_{2,k} \cdot \Delta t + c_2 g'_{2,k} \dfrac{\Delta t^{2}}{{2}!} \\
\vdots \\
s_{N,k} + c_N g_{N,k} \cdot \Delta t + c_N g'_{N,k} \dfrac{\Delta t^{2}}{{2}!}
\end{bmatrix}.
\end{equation}
Substituting for the $g'_i(s)$ terms, we show the expected form for a single state $s_1$:
\begin{equation}
\resizebox{0.99\hsize}{!}{$
\underbrace{\left[ \begin{array}{c@{}}
 s_{1,k+1} \\ g_{1,k+1} \\ \{\dfrac{\partial g_1}{\partial s_1}g_1\}_{k+1} \\ \{\dfrac{\partial g_1}{\partial s_2}g_2\}_{k+1} \\ \vdots \\ \{\dfrac{\partial g_1}{\partial s_N}g_N\}_{k+1} \\ [1.0ex] \end{array} \right]}_{\Psi(s_{k+1})}
 =
\underbrace{\left[ 
\begin{array}{cccccc@{}}
1 & c_1 \Delta t & c_1^2 \dfrac{\Delta t^2}{2} & c_1c_2 \dfrac{\Delta t^2}{2} & \dots & c_1c_N \dfrac{\Delta t^2}{2} \\
0 & 1 & c_1 \Delta t & c_2\Delta t & \dots & c_N \Delta t \\
0 & 0 & 1 & 0 & \dots & 0 \\
0 & 0 & 0 & 1 & \dots & 0\\
\vdots & \vdots & \vdots & \vdots & \ddots & \vdots \\
0 & 0 & 0 & 0 & \dots & 1
\end{array}\right]}_{\tilde{\mathcal{K}}_d}
\underbrace{\left[ \begin{array}{c@{}}
 s_{1,k} \\ g_{1,k} \\ \{\dfrac{\partial g_1}{\partial s_1}g_1\}_{k} \\ \{\dfrac{\partial g_1}{\partial s_2}g_2\}_{k}\\ \vdots \\ \{\dfrac{\partial g_1}{\partial s_N}g_N\}_{k} \\ [1.0ex] 
\end{array} \right]}_{\Psi(s_{k})}.\\
$}
\end{equation}
As before, to improve the accuracy of the linear representation, we use data to approximate a Koopman operator.

\subsection{Assessment of Error Bound Estimates}\label{subsec:: SimulationResults_Errors}
Using the single pendulum system, we demonstrate the error bound estimates for the data-driven linear approximation, both when the dynamics are known and when they are unknown. In particular, we show that $|f^{(n+1)}_{max}|$ can be computed using the dynamics equations (model-based estimate) when those are available, or approximated using the residue error in the training process when the dynamics are unknown (data-driven estimate), as explained in Section \ref{subsec:: ErrorBoundEstimationUsingDataDrivenOperator}. In both cases, once $|f^{(n+1)}_{max}|$ is calculated, the error bounds are computed using \eqref{eq:: GlobalError_simplified}. The states are $s = [\theta, \omega]^T$ and dynamics are given by $\dot s = [\omega, \frac{g}{l} \sin(\theta) + u]^T$, where $g = 9.81~\nicefrac{\textrm{m}}{\textrm{s}}^2$ is the gravitational constant, $l = 1~\textrm{m}$ is the pendulum length, and $u$ is the control. 

For the model-based estimate of the error bounds, the maximum magnitude of the $n$th derivative is (for each state) computed numerically by maximizing the symbolic expression over the domain of the state space that is used for training. For the data-driven estimate, the maximum magnitude is computed using \eqref{eq::datadriven_upperbound} based on the training error. Note that, when propagating a data-driven operator instead of \eqref{eq:: TaylorMatrix_Manystates}, the error bound estimates may in theory be violated. 

\begin{figure}
\centering
 \begin{subfigure}[t]{0.98\columnwidth}
 \centering
 \includegraphics[width = \columnwidth, height = 0.40\textheight, keepaspectratio = true]{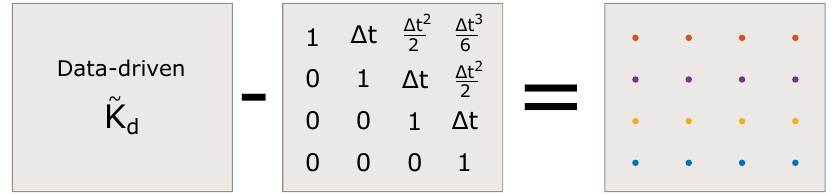}
 \caption{}
 \end{subfigure}%
 \\
 \begin{subfigure}[t]{0.495\columnwidth}
 \centering
 \includegraphics[width = \columnwidth, keepaspectratio = true]{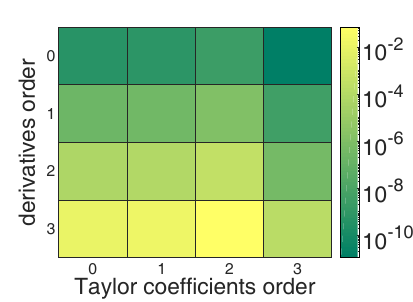}
 \caption{}\label{fig:: HeatMap}
 \end{subfigure} \hfill
 \begin{subfigure}[t]{0.49\columnwidth}
 \centering
 \includegraphics[width = \columnwidth, keepaspectratio = true]{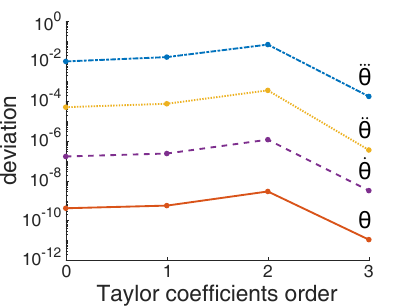}
 \caption{}\label{fig:: DerivativesComparisons}
 \end{subfigure}
 \caption{The deviation of the data-driven Koopman operator from the Taylor-based matrix \eqref{eq:: Taylor_matrix} for the single pendulum system, where the derivative basis functions are constructed analytically from the known dynamics. Fig. \ref{fig:: HeatMap} shows that the non-zero coefficients (upper triangle) of the linear Taylor expansion are accurately recovered from the data-driven operator. The zero coefficients (lower triangle) are replaced by small values that help minimize the least\textcolor{blue}{-}squares error for the part of the state space used in the training set. The deviation differs by orders of magnitude across the basis functions, as seen in Fig. \ref{fig:: DerivativesComparisons}. As expected, the deviation is smallest for $\theta$, as it is the one with the highest number of derivatives used in the basis functions.}\label{fig::DatadrivenKoopman_deviation}
\end{figure}

\begin{figure*}
	\begin{subfigure}[b]{0.325\linewidth}
		\centering
		\includegraphics[width=\linewidth,height = 0.6\linewidth, keepaspectratio = true]{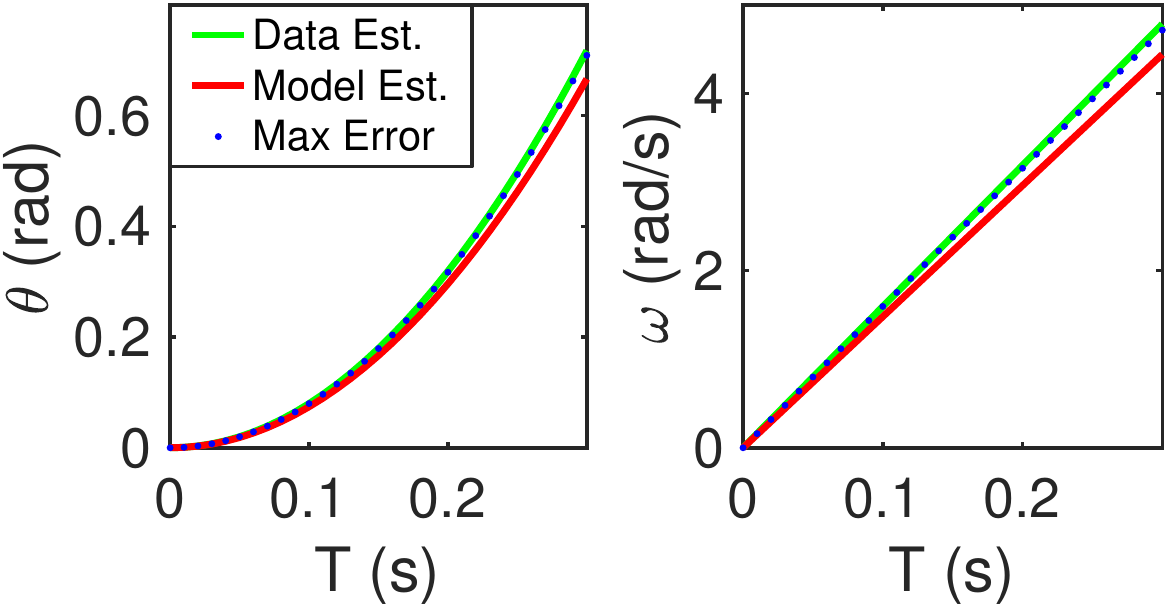} 
		\caption{$n = 1$} 
	\end{subfigure}
	\hfill 
	\begin{subfigure}[b]{0.325\linewidth}
		\centering
		\includegraphics[width=\linewidth,height = 0.6\linewidth, keepaspectratio = true]{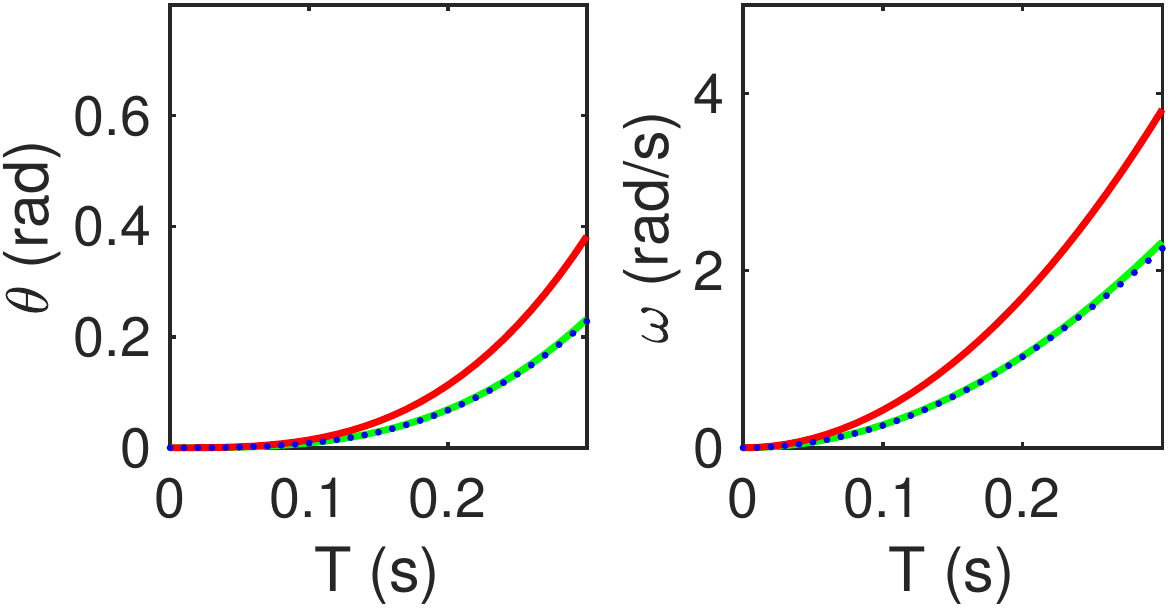} 
		\caption{$n = 2$} 
	\end{subfigure} \hfill
	\begin{subfigure}[b]{0.325\linewidth}
		\centering
		\includegraphics[width=\linewidth,height = 0.6\linewidth, keepaspectratio = true]{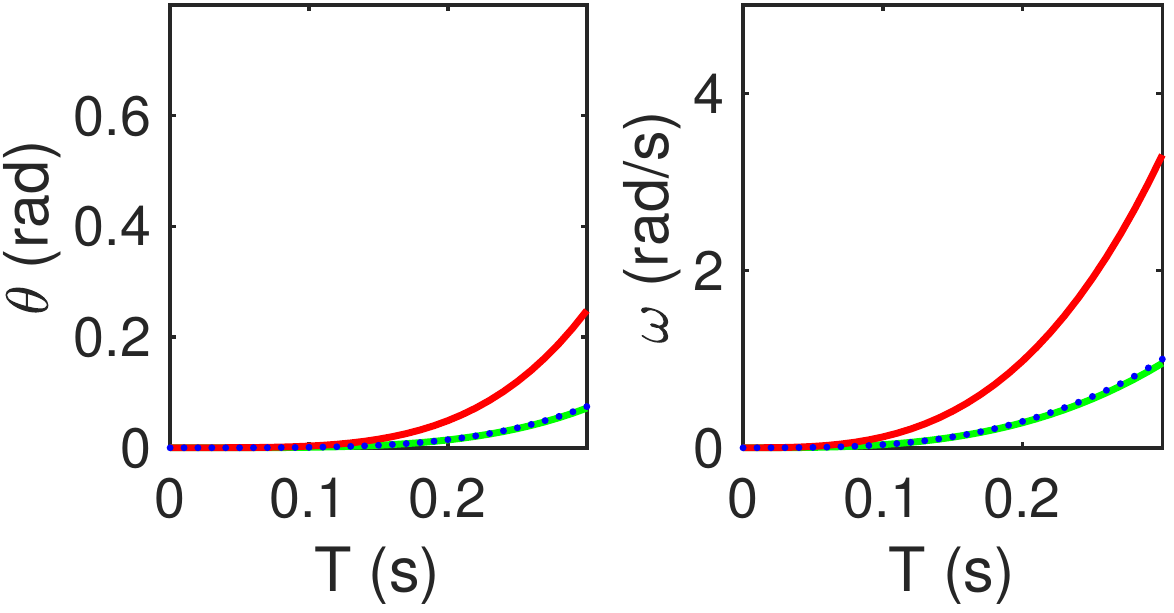} 
		\caption{$n = 3$} 
	\end{subfigure} \hfill
	\caption{Simulated error bound estimates and actual error bounds for the single pendulum system as a function of the prediction horizon and for increasing orders of derivatives used as Koopman basis functions. The derivative basis functions are constructed analytically from the known dynamics. Both error bound estimates are calculated using \eqref{eq:: GlobalError_simplified}, but differ in how they compute $|f^{(n+1)}_{max}|$. \textbf{Data Est.} is the model-free error bound estimate and uses the data-driven Koopman operator and \eqref{eq::datadriven_upperbound} to compute $|f^{(n+1)}_{max}|$; \textbf{Model Est.} is the model-based error bound estimate and uses the analytical dynamics equations to compute $|f^{(n+1)}_{max}|$; \textbf{Max error} is the measured largest deviation as a function of time between the actual value of the state and the one predicted by the data-driven Koopman operator across all trajectories that evolve from randomly sampled initial conditions. Results are shown for three different orders of derivatives of $\theta$. Note that state $\theta$ has always one more derivative than $\omega$. The data-driven bound estimates and actual errors can be generated for
	different parameter choices using a Jupyter notebook at \url{https://colab.research.google.com/drive/1EPX1XVUHr9gix-pZD_3Ydw7Npzz9n3Jj}.}
	\label{fig:: ErrorBounds} 
\end{figure*}

We sample and forward-simulate 5000 initial states $s_0$ for $\Delta t = 0.01$~s to obtain a Koopman operator $\tilde{\mathcal{K}}_d$ via (\ref{eq:: Kd_AG}), which we then use on a different randomly selected set of 5000 states to propagate the dynamics for a time horizon $T$. In both the training and the testing sets, uniform distributions of the initial states $\mathcal{U}_{\theta_0}$($-2\pi$\,\si{\radian}, $2\pi$\,\si{\radian}) and $\mathcal{U}_{\omega_0}$($-$\SI[per-mode=symbol]{5}{\radian\per\second}, \SI[per-mode=symbol]{5}{\radian\per\second}) are used. For each sample, both in training and in testing, we apply random inputs generated from a uniform distribution given by $\mathcal{U}_{u}(\SI[per-mode=symbol]{-5}{\radian\per\second\squared}, \SI[per-mode=symbol]{5}{\radian\per\second\squared})$. The observables include the angle $\theta$ and its first three derivatives, derived analytically based on the dynamics equation. 

The obtained structure of the data-driven Koopman operator resembles the Taylor-series structure \eqref{eq:: Taylor_matrix} (see Fig.~\ref{fig::DatadrivenKoopman_deviation}), which adds validity to the data-driven error bound estimation. The error bound estimates and the actual errors are shown in Fig.~\ref{fig:: ErrorBounds}. The error bound that is estimated from the structure of the data-driven operator without knowledge of the dynamics is reasonably accurate at predicting the maximum error. In addition, note that the maximum actual error and the error bound estimates have similar slopes with respect to the prediction horizon as well as the fact that both the actual error and the error bound estimates decrease with increasing order of derivatives used as basis functions. 

Next, we demonstrate the performance of the derivative-based data-driven Koopman operator and the error bound estimates when dynamics are unknown. Using the single pendulum system, only the angle and angular velocity are measured; higher-order derivatives are estimated using central finite differences \cite{olver2014introduction}. To illustrate the robustness of the approach to noise, we add zero-mean, Gaussian-distributed noise $\mathcal{N} (0, \sigma^2)$ to the measurements of $\theta$ and $\omega$, which are then also filtered through a moving average of 15 periods for noise reduction. The higher-order derivatives and the Koopman operator are computed from the filtered measurements. The term $|f^{(n+1)}_{max}|$ is computed as the maximum magnitude of $|f^{(n+1)}|$, which is also calculated using central finite differences \cite{olver2014introduction}.

In Fig. \ref{fig:: errorBounds_UnknownDynamics} we show results for $n=2$ and two levels of noise: low ($\sigma = \pi/180$) and high noise ($\sigma = 15\pi/180$). Note that the actual maximum error induced by the Koopman operator remains almost identical, indicating that the presented error bounds can be used even with high levels of noise after using a simple denoising method. The error bound estimate for the low-noise scenario follows closely the error induced by the Koopman operator. In the high-noise scenario, the error bound estimate is more conservative. This is because the error bound formula is highly dependent on the calculated term $|f^{(n+1)}_{max}|$, which is likely to be miscalculated with noisy measurements. Note that if the training data do not represent the entire state space, the calculated value for $|f^{(n+1)}_{max}|$ is likely to underestimate the true value. On the other hand, including some safety margin in the $|f^{(n+1)}_{max}|$ term can make the error bounds more conservative. These results suggest that, although the error bound estimates may become less accurate with increasing levels of noise when dynamics are unknown, simple denoising methods can render the performance of the derivative-based Koopman operator robust to noise, suggesting that the proposed methodology is a promising candidate for the prediction of unknown systems. 
 
\begin{figure}
	\centering
	\includegraphics[width=0.99\linewidth, keepaspectratio = 1]{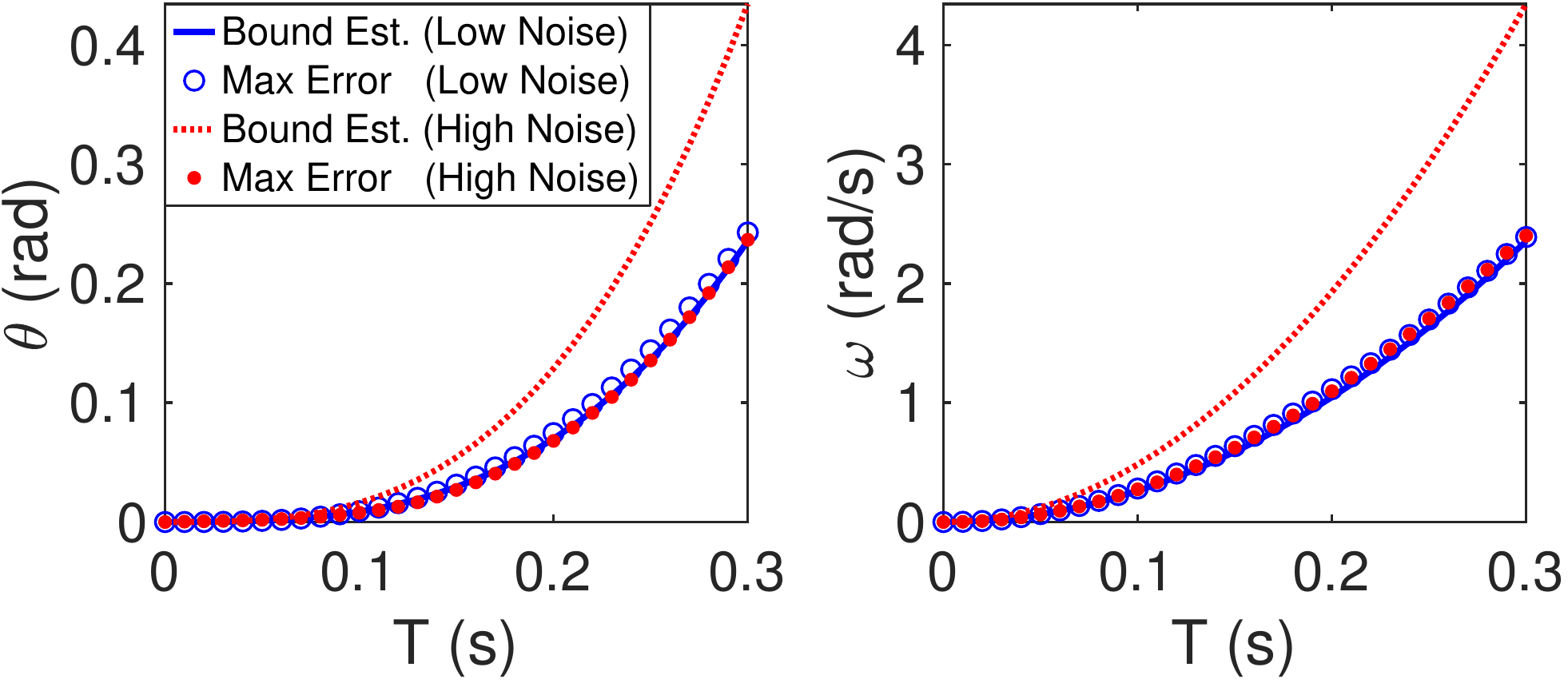}
	\caption{Simulated error bound estimates and actual maximum errors induced by the data-driven Koopman operators for the single pendulum system when dynamics are unknown and measurements are noisy. The derivative basis functions are calculated numerically from the state measurements---no analytical model is used.}\label{fig:: errorBounds_UnknownDynamics}
\end{figure}

\begin{figure*}
	\centering
 	\includegraphics[width=0.99\linewidth, keepaspectratio = 1]{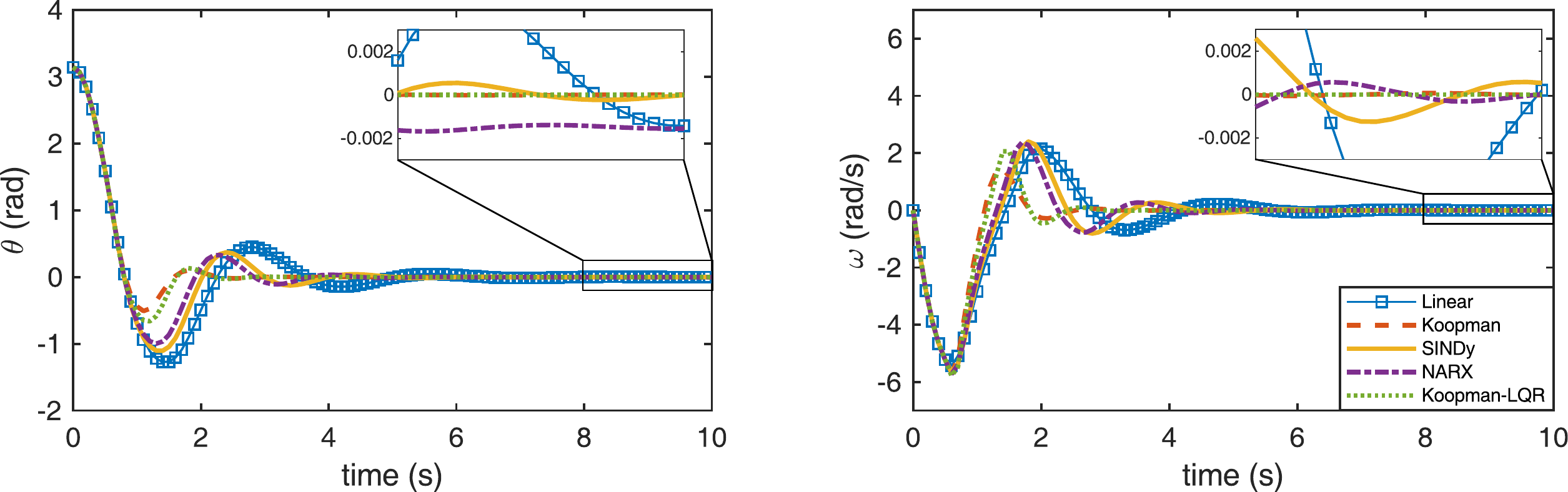} 
	\caption{Control of a pendulum system based on data-driven models obtained using SINDy, NARX, a linear model based only on the system states, and a derivative-based Koopman model whose observables contain the state $\theta$ and its first- and second-order derivatives. The derivative basis functions are numerically estimated from state measurements both for training the Koopman model and online to implement control---no analytical model is used. All models are used to design MPC control and, in addition, we use the Koopman model for LQR feedback (Koopman-LQR). Koopman with MPC has the lowest cost ($19.71$) for the 10~s simulation, while NARX, SINDy and the linear model result in errors that are $12.43\%$, $18.32\%$, and $30.61\%$ higher, respectively. Koopman-LQR leads to the second best performance ($0.97\%$ higher error in comparison to Koopman-MPC).}\label{fig:: MPC_Comparison}
\end{figure*}

In conclusion, the error bounds are derived with respect to the analytical Taylor-based Koopman form of \eqref{eq:: Taylor_matrix}. However, we show (Fig. \ref{fig::DatadrivenKoopman_deviation}) that, in practice, the data-driven Koopman model has a very similar structure to \eqref{eq:: Taylor_matrix} such that the error bounds remain relevant estimates. In addition, we verify that the error bounds reflect reasonably well the prediction error induced by data-driven Koopman models (Fig. \ref{fig:: ErrorBounds}), even in the presence of noise (Fig. \ref{fig:: errorBounds_UnknownDynamics}). In practice, the error bounds, which depend on the prediction time horizon and the magnitude of the derivatives of the dynamics, provide a systematic method to determine the basis functions for a desired balance between model accuracy and complexity.

Note that, in this work, we do not argue that one should always use additional basis functions than the original system states; instead, one could analyze when and how to augment the basis functions of Koopman operators to improve the modeling accuracy while being cognizant of increased system order and complexity. In particular, the derived error bound estimates can serve as a guide on whether one should do so and by how many derivatives. 
\subsection{Comparison to alternative system identification methods}
\par Finally, we compare the performance of the derivative-based data-driven Koopman representation approach to alternative system identification methods using the single pendulum system. Specifically, we use SINDy \cite{SINDY}, NARX \cite{NSID}, a data-driven linear model, and the proposed derivative-based data-driven Koopman approach to obtain models and design MPC control to invert the pendulum to the upright position.

To train the models we generate trajectories by forward simulating 500 uniformly-sampled initial states $s_0$ for 0.04 seconds using time steps of $\Delta t=0.01$~s. That is, each of the 500 training trajectories contains four measurements, which are used to compute derivatives for the middle measurements needed for the SINDy algorithm and the proposed method, as well as make use of delays for NARX. The initial states are sampled from uniform distributions given by $\mathcal{U}_{\theta_0}$($-2\pi$\,\si{\radian}, $2\pi$\,\si{\radian}) and $\mathcal{U}_{\omega_0}$($-$\SI[per-mode=symbol]{5}{\radian\per\second}, \SI[per-mode=symbol]{5}{\radian\per\second}). We use random inputs generated from a distribution given by $\mathcal{U}_{u}(\SI[per-mode=symbol]{-10}{\radian\per\second\squared}, \SI[per-mode=symbol]{10}{\radian\per\second\squared})$. We train the SINDy model using the open-source package \cite{pysindy} and the NARX model using the MATLAB Deep Learning Toolbox \cite{MatlabOTB}. The higher-order derivatives used for SINDy and the derivative-based Koopman model are numerically estimated from the angle and angular velocity measurements.

Fig.~\ref{fig:: MPC_Comparison} shows simulation results for the inversion of the single pendulum system using MPC control computed from models obtained with NARX, SINDy, a data-driven linear model based only on the system states ($n=1$, see Fig. \ref{fig:: ErrorBounds}), and a data-driven Koopman representation based on the state $\theta$ and its first- and second-order derivative ($n=2$). In addition, we also test LQR control on the Koopman model to demonstrate that similar control performance can be achieved with LQR gains that are computed once. The desired states are given by $s_{des} = [0,0]$, the weights for the states and control are $Q = \text{diag}(5,0.01)$ and $R = 0.001$, respectively, and the prediction horizon is $T=0.1$~s. 

We also compare the control performance of these methods for 30 initial conditions $\theta_0$ and $\omega_0$ sampled from uniform distributions given by $\mathcal{U}_{\theta_0}$($-\pi$\,\si{\radian}, $\pi$\,\si{\radian}) and $\mathcal{U}_{\omega_0}$($-$\SI[per-mode=symbol]{2}{\radian\per\second}, \SI[per-mode=symbol]{2}{\radian\per\second}). The average of the 30 errors is 6.00 (with a standard deviation of 6.60) for the Koopman-LQR approach; 6.05 (with a standard deviation of 6.59) for the Koopman-MPC; 6.48 (with a standard deviation of 7.00) for NARX; 6.69 (with a standard deviation of 7.18) for SINDy; and 7.44 (with a standard deviation of 7.70) for the linear model. 

These results show that the control performance of the Koopman-MPC method is marginally better than the linear-MPC, NARX-MPC and SINDy-MPC. Note that in simulation the comparative performance  of the methods considered is similar using prediction horizons up to $T=1$~s. Given that NARX is computationally expensive, here we report the results for a short planning horizon so that the performance is closer to real-time execution. Also, the Koopman-LQR approach delivers control performance comparable to the Koopman-MPC method, with the additional benefit that it lends itself to efficient control computation, which makes it an attractive choice for online robotic control applications. This is why we use Koopman-LQR in the simulations and experiments with the robotic fish in Section \ref{sec::Results}.

\section{Data-Driven Control of Tail-Actuated Robotic Fish}\label{sec::Results}

\begin{figure*}
	\centering
	\includegraphics[width=1\linewidth, height = 0.30\textheight, keepaspectratio= true]{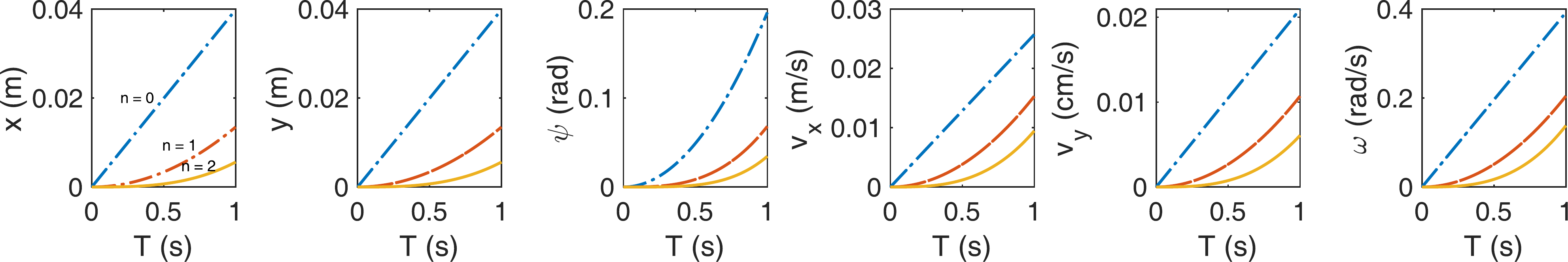}
	\caption{Error bound estimates based on derivative-based Koopman models of the robotic fish dynamics \eqref{eq:: dynamics} for increasing order of derivatives used in the basis functions. The derivative basis functions are constructed analytically from the known dynamics. Each additional order of derivatives improves the error bound estimates over the selected prediction horizon. The error bound estimates are computed using \eqref{eq:: GlobalError_simplified} where $|f_{max}^{(n+1)}|$ is calculated from the training data.}\label{fig::ErrorBoundsFish}
\end{figure*}

We illustrate and validate the proposed data-driven modeling approach using a tail-actuated robotic fish. The states of the robotic fish are $s = [x, y, \psi, v_x, v_y, \omega]^T$, where $x, y$ are the 2D world-frame coordinates, $\psi$ is the orientation, $v_x$ and $v_y$ are the body-frame linear velocities (surge and sway, respectively), and $\omega$ is the body-frame angular velocity. We use $\alpha$ to indicate the angle of the tail, actuated with $\alpha(t) = \alpha_o + \alpha_a \sin(\omega_a t)$, where $\alpha_a, \alpha_o, \omega_a$ are the amplitude, bias, and frequency of the tail beat. To simplify the problem, we keep the frequency fixed at $\omega_a = 2\pi$~$\si{\radian/\second}$.

We describe the dynamics of the system with an average model \cite{averagemodel} given by
\begin{align}\label{eq:: dynamics}
\dot{s} =
\begin{bmatrix}
 \dot{x} \\ \dot{z} \\ \dot{\psi} \\ \dot{v}_x \\ \dot{v}_y \\ \dot{\omega}
\end{bmatrix}
\stackrel{\bigtriangleup}{=}
\begin{bmatrix}
v_x \cos(\psi) - v_y \sin(\psi)\\
v_x \sin(\psi) + v_y \cos(\psi)\\
\omega\\
f_1(s) + K_f f_4(\alpha_0, \alpha_a, \omega_a)\\
f_2(s) + K_f f_5(\alpha_0, \alpha_a, \omega_a)\\
f_3(s) + K_m f_6(\alpha_0, \alpha_a, \omega_a)
\end{bmatrix},
\end{align}	
where
\begin{equation}
\resizebox{0.99\hsize}{!}{$
\begin{aligned}
&f_1(s)=~\frac{m_2}{m_1} v_y \omega - \frac{c_1}{m_1}v_x\sqrt{v_x^2 + v_y^2} + \frac{c_2}{m_1} v_y\sqrt{v_x^2 + v_y^2} \arctan(\frac{v_y}{v_x}) 
\\&f_2(s)=~-\frac{m_1}{m_2}v_x \omega - \frac{c_1}{m_2}v_y\sqrt{v_x^2 + v_y^2} - \frac{c_2}{m_2}v_x\sqrt{v_x^2 + v_y^2}\arctan(\frac{v_y}{v_x})
\\&f_3(s)=~(m_1-m_2) v_xv_y - c_4 \sgn(\omega) \omega^2 
\\&f_4(\alpha_0, \alpha_a, \omega_a)=~\frac{m}{12m_1} L^2\omega_a^2 \alpha_a^2 (3 - \frac{3}{2}\alpha_o^2 - \frac{3}{8}\alpha_a^2)
\\&f_5(\alpha_0, \alpha_a, \omega_a)=~\frac{m}{4m_2} L^2 \omega_a^2 \alpha_a^2\alpha_o
\\&f_6(\alpha_0, \alpha_a, \omega_a) =~-\frac{m}{4J_3} L^2c \omega_a^2 \alpha_a^2\alpha_o
\end{aligned}
$}
\end{equation}
and $m_1 = m_b - m_{a_x}, m_2 = m_b - m_{a_y}, J_3 = J_{b_z} - J_{a_z}, c_1 = \frac{1}{2}\rho SC_D, c_2 = \frac{1}{2}\rho SC_L, c_4 = \frac{1}{J_3}K_D, c_5 = \frac{1}{2J_3}L^2mc$. Parameter $m_b$ is the mass of the robotic fish, $m_{a_x}$ and $m_{a_y}$ are the hydrodynamic derivatives that represent the added masses of the robotic fish along the $x$ and $y$ directions, respectively, $J_{a_z}$ and $J_{bz}$ are the added inertia effect and the inertia of the body about the $z$-axis, respectively, $m$ is the mass of the water displaced by the tail per unit length, $\rho$ is the water density, $L$ is the tail length, $c$ is the distance from the body center to the pivot point of the actuated tail, $C_D, C_L, K_D$ are drag force, lift, and drag moment coefficients, respectively, and $K_f$ and $K_m$ are scaling coefficients measured experimentally\cite{NMPC_maria}. 

We train a Koopman operator using the control-affine form of the dynamics \eqref{eq:: dynamics} that is obtained by substituting 
\begin{align}
&u_1 =~\alpha_a^2 (3 - \frac{3}{2}\alpha_o^2 - \frac{3}{8}\alpha_a^2)
&u_2 =~\alpha_a^2\alpha_o.
\end{align}
Because the computed control is in terms of the variables $u_1$ and $u_2$, it needs to be mapped to implementable values for the amplitude, $\alpha_a$, and the bias, $\alpha_o$, of the tail flapping. To convert $u_1$ and $u_2$ back to the physical actuation variables, we use a constrained global minimization solver based on Sequential Quadratic Programming (SQP) that finds the nearest, in the space of $u_1$ and $u_2$, feasible actuation values for $\alpha_a$ and $\alpha_o$. Given $u_1$ and $u_2$, the constrained optimization problem is posed as
\begin{equation}
\begin{aligned}
 \underset{\alpha_a, \alpha_o}{\operatorname{argmin}} \sqrt{\big(u_1 - \alpha_a^2 (3 - \frac{3}{2}\alpha_o^2 - \frac{3}{8}\alpha_a^2)\big)^2 + \big(u_2 - \alpha_a^2\alpha_o\big)^2} \\
 \text{subject to:~} \alpha_a \in [0, 30^{\circ}] ~ \text{AND~} \alpha_o \in [-45^{\circ}, 45^{\circ}].
 \end{aligned}
 \end{equation}
This minimization is solved before every control update. 

Given the unilateral constraints on the forward motion of the tail-actuated robotic fish (it cannot move backwards), directly tracking position coordinates becomes rather challenging. For example, when the target lies behind the robotic fish, the control solution generates a negative amplitude (to generate backward motion) that is infeasible and thus the system stops moving. This behavior has been observed and tackled in \cite{BSC_maria} by translating position coordinates into different error states, associated with the body-frame velocities of the system. Similarly, in this work, we argue that one can express all feasible trajectories for the tail-actuated robotic fish in terms of an angle and a forward velocity profile. 

\subsection{Simulation Results}
In this section, we present simulation results on the data-driven modeling and LQR control on the tail-actuated robotic fish. To decide the optimal order of derivative basis functions, we compare the error bound estimates over $T = 1$~s, which is the feedback rate, using different orders of derivatives. Results are shown in Fig.~\ref{fig::ErrorBoundsFish}. We select $n=2$ for a reasonable balance between the increasing complexity of calculating higher-order derivatives and model accuracy. Next, we populate the observable functions with the states, their first- (using \eqref{eq:: dynamics}) and second-order derivatives, which are derived analytically from the average model. To allow the identification of unknown or changing coefficients (as discussed in Section \ref{SubSection:: Synthesis}), we consider each term in the derivatives individually. For example, $\frac{d}{dt} v_y \omega = \dot{v}_y \omega + v_y \dot{\omega}$ generates multiple basis functions, where $\dot{v}_y$ and $\dot{\omega}$ are given by \eqref{eq:: dynamics}. Using separate functions for the time derivatives of each individual term that appears in the dynamics is similar to using the time derivatives of the entire equation of a state (e.g. $\ddot v_y(t)$). Despite increasing the number of basis functions, we prefer the first approach because it does not require knowing the coefficients of the individual terms in advance (e.g. $\frac{m_2}{m_1}$). As a result, we can readily train the Koopman operator on other robotic tail-actuated fish that have a different morphology. In this way, we end up with the system states, the control inputs, and 54 additional scalar functions, with $\Psi_s(s) \in \mathbb{R}^{60}$ and $\Psi_u(u) = u \in \mathbb{R}^{2}$.

Note that we choose control-dependent basis functions to be $u$ so as to use LQR feedback. One can also choose basis functions that include nonlinear control terms in combination with a different control policy, such as NMPC \cite{koopman_kronic, bruderBilinearization}. Alternatively, one can always convert dynamics that are nonlinear in control by introducing new dummy variables (e.g. $v_i$) as the control input that are the derivatives of the system actuation ($\dot{u}_i = c_i (v_i - u_i))$, where $c_i \in \mathbb{R}^+$ dictate the rate of change \cite{koopman_kronic}. This is in practice also closer to the physical implementation of actuation that cannot instantly change values. In this way, it is then possible to include nonlinearities in $u_i$, while the system remains still linear with respect to the control input $v_i$ designed by the user.
\begin{table}
\caption{Simulation parameters for the tail-actuated fish model dynamics \eqref{eq:: dynamics}.}
\begin{center}	\sisetup{table-format = 1.3e2, table-number-alignment = center}
\begin{tabular}{|l|l|l|l|}
	\hline
	\multicolumn{4}{|c|}{Simulation Parameters} \\
	\hline
	 Parameter& Value &Parameter& Value\\
	\hline	
	$m_b$ &\hphantom{$-$}\SI{0.725}{\kilogram} & $\rho$ &\SI{1000}{\kilogram\per\meter\cubed} \\
$m_{ax}$ &\SI{-0.217}{\kilogram} & $S$ & \SI{0.03}{\meter\squared} \\
$m_{ay}$ &\SI{-0.7888}{\kilogram} & $C_D$ & \num{0.97} \\
$J_{bz}$ &\hphantom{$-$}\SI{2.66d-3}{\kilogram\meter\squared} &$C_L$ & \num{3.9047} \\
$J_{az}$ &\SI{-7.93d-4}{\kilogram\meter\squared}& $K_D$ &\num{4.5d-3} \\
$L$ & \hphantom{$-$}\SI{0.071}{\meter} & $K_f$ & \num{0.7} \\
$d$ & \hphantom{$-$}\SI{0.04}{\meter} & $K_m$ & \num{0.45} \\
$c$ & \hphantom{$-$}\SI{0.105}{\meter}&~ &~ \\
\hline
\end{tabular}
\end{center}
 \label{table:: sim_parameters}
\end{table}

Next, we train an approximate Koopman operator using \eqref{eq:: Kd_AG}. To generate data $s_k$ and $s_{k+1}$, we sample $P = 3000$ initial conditions for the states with uniform distributions given by $\mathcal{U}_{x_0}$(\SI{-0.5}{\meter}, \SI{0.5}{\meter}),
$\mathcal{U}_{y_0}( $$-$\SI{0.1}{m}, \SI{0.1}{m}), $\mathcal{U}_{\psi_0}$($-\pi/4$\,\si{\radian}, $\pi/4$\,\si{\radian}), $\mathcal{U}_{v_x}$(0, \SI[per-mode=symbol]{0.04}{\meter\per\second}), $\mathcal{U}_{v_y}$($-$\SI[per-mode=symbol]{0.0025}{\meter\per\second}, \SI[per-mode=symbol]{0.0025}{\meter\per\second}), $\mathcal{U}_{\omega}$($-$\SI[per-mode=symbol]{0.5}{\radian\per\second}, \SI[per-mode=symbol]{0.5}{\radian\per\second}). For each sample, we apply random inputs generated from a uniform distribution given by $\mathcal{U}_{\alpha_0}$(\ang{-45}, \ang{45}) for the tail angle bias and $\mathcal{U}_{\alpha_a}(0, 30^{\circ})$ for the tail angle amplitude of oscillations. Then, for each sample of initial conditions $s_{k}$ and controls $u_k$, we use dynamics \eqref{eq:: dynamics} and parameters shown in Table \ref{table:: sim_parameters} to propagate the states with the given control for $\Delta t = \SI{0.005}{\second}$ and obtain the final states $s_{k+1}$. We use the set of $s_k, s_{k+1}, u_k$ to compute the approximate discrete Koopman operator \eqref{eq:: Kd_AG}. Note that the value of $u_{k+1}$ can be arbitrary, since we are not trying to predict the evolution of the control-dependent basis functions. 
Once we have trained the Koopman operator, we convert it to the continuous time via $\tilde{\mathcal{K}} = \log(\tilde{\mathcal{K}}_d)/\Delta t$, extract the state- and control-linearization matrices $A$ and $B$, choose the weight matrices $Q$ and $R$ and compute the infinite-horizon LQR gains.

\begin{figure}
	\centering
	\includegraphics[width=1\linewidth, height = 0.30\textheight, keepaspectratio= true]{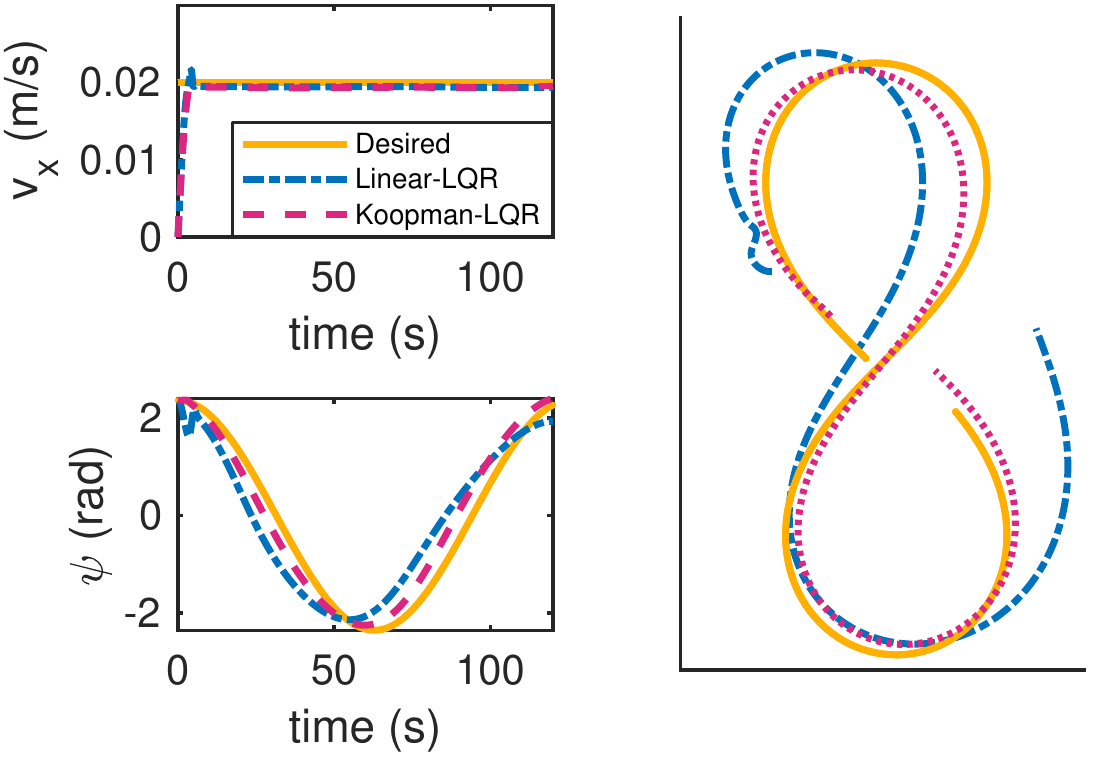}
	\caption{LQR-controlled robotic fish in simulation. The LQR gains are generated once using the learned Koopman operator. The derivative basis functions are constructed analytically from the known dynamics. The desired trajectory is given in terms of the angle and the forward velocity. Since the position coordinates are not included in the performance objective \eqref{eq:: J_K}, the controlled trajectories are individually shifted to align with the desired figure-8 shape as closely as possible. Despite using fixed LQR gains, the controlled systems successfully track the desired states that were designed to produce a figure-8 pattern. Koopman-LQR has the lower cost ($3.35$) for the $120$ s duration, while the approach based on the data-driven linear model with the same set of states as in \eqref{eq:: dynamics} results in an error that is $95\%$ higher.}\label{fig::Sim_Figure8}
\end{figure}

\figurename~\ref{fig::Sim_Figure8} shows the velocity tracking performance of the derivative-based Koopman model in comparison to a linear data-driven model for the system \eqref{eq:: dynamics} (with the same set of states as in \eqref{eq:: dynamics}) when using LQR-feedback. The desired trajectory is a figure-8 described by $s_{des} = [0,0,135 \cdot \pi/180 \cdot \sin(0.05t + \pi/2),0.02, 0, 0.05 \cdot 135 \cdot \pi/180 \cdot \cos(0.05t + \pi/2)]$. The weights are $Q = \text{diag}(0,0,0.1, 4000, 0, 0)$ and $R = \text{diag}(0.01, 0.01)$. As is seen in the figure the proposed derivative-based Koopman modeling approach leads to improved control performance.

\subsection{Experimental Results}
{
\smallskip
\setlength\extrarowheight{3pt}
\begin{table*}
	\caption{Amplitude and bias inputs used to collect training dataset.}\label{table:: actuation}

	\centering
	\begin{adjustbox}{width=2\columnwidth,center}
	\begin{tabular}{|c|cccccccccccccccccccc}
		\cline{1-21}
	 \multicolumn{1}{|c|}{} & \multicolumn{20}{c|}{Actuation values} \\ \cline{1-21} 
						\multicolumn{1}{|c|}{Amp ($^{\circ}$)} & \multicolumn{5}{c|}{15}& \multicolumn{5}{c|}{20} & \multicolumn{5}{c|}{25} & \multicolumn{5}{c|}{30} \\ \cline{1-21} 
						
						\multicolumn{1}{|c|}{Bias ($^{\circ}$)} 
						& \multicolumn{1}{c|}{0}& \multicolumn{1}{c|}{$\pm 20$} & \multicolumn{1}{c|}{$\pm 30$} & \multicolumn{1}{c|}{$\pm 40$} & \multicolumn{1}{c|}{$\pm 50$} & \multicolumn{1}{c|}{0}& \multicolumn{1}{c|}{$\pm 20$} & \multicolumn{1}{c|}{$\pm 30$} & \multicolumn{1}{c|}{$\pm 40$} & \multicolumn{1}{c|}{$\pm 50$} & \multicolumn{1}{c|}{0} & \multicolumn{1}{c|}{$\pm 20$} & \multicolumn{1}{c|}{$\pm 30$} & \multicolumn{1}{c|}{$\pm 40$} & \multicolumn{1}{c|}{$\pm 50$} & \multicolumn{1}{c|}{0} & \multicolumn{1}{c|}{$\pm 20$} & \multicolumn{1}{c|}{$\pm 30$} & \multicolumn{1}{c|}{$\pm 40$} & \multicolumn{1}{c|}{$\pm 50$} \\ \cline{1-21}
	\end{tabular}
		\end{adjustbox}
\end{table*}
}

\subsubsection{Experimental Setup}
\begin{figure}
	\centering
	\includegraphics[width=1\linewidth, height = 0.2\textheight]{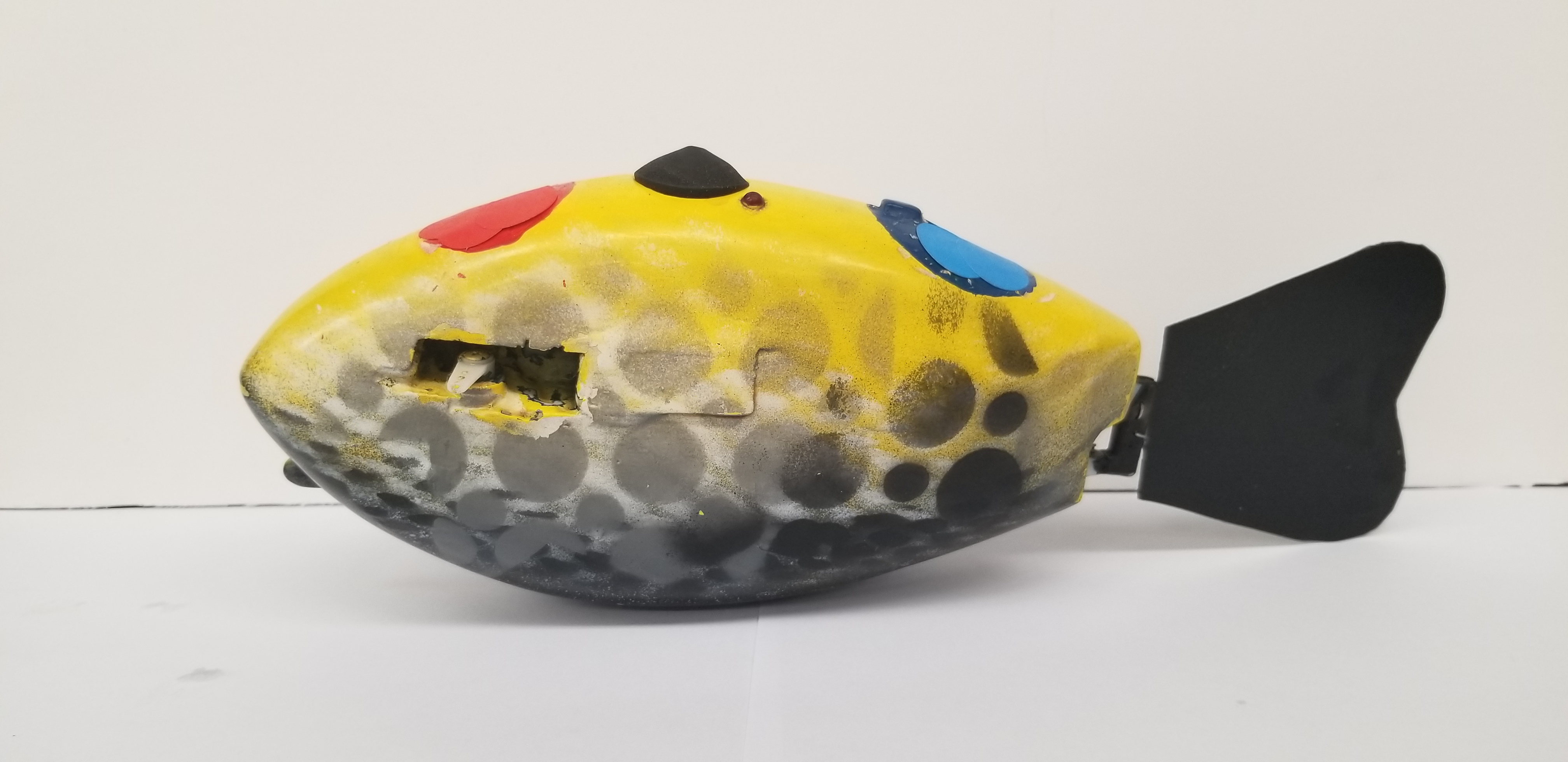}
	\caption{Tail-actuated robotic fish used in experiments, developed by the Smart Microsystems Lab at Michigan State University. It maneuvers in water by oscillating its tail fin.}\label{fig:: roboticfish}
\end{figure}
\begin{figure*} 
	\begin{subfigure}[b]{0.98\columnwidth}
		\centering
		\includegraphics[width=\linewidth,height = 0.25\textheight, keepaspectratio = true]{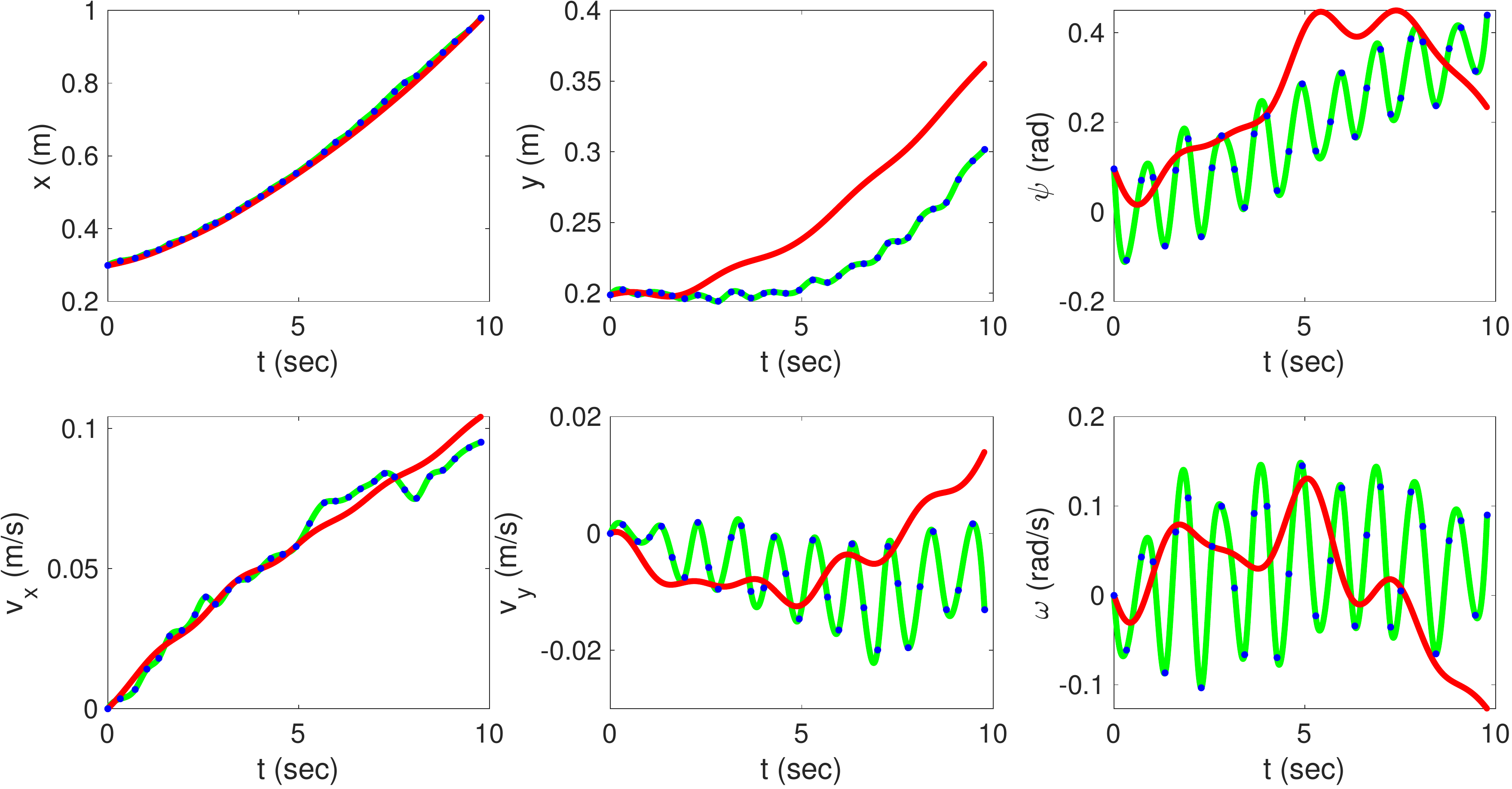} 
		\caption{$\alpha_0 = 0, \alpha_a = \ang{30}$} 
	\end{subfigure}
	\hfill 
	\begin{subfigure}[b]{0.98\columnwidth}
		\centering
		\includegraphics[width=\linewidth,height = 0.25\textheight, keepaspectratio = true]{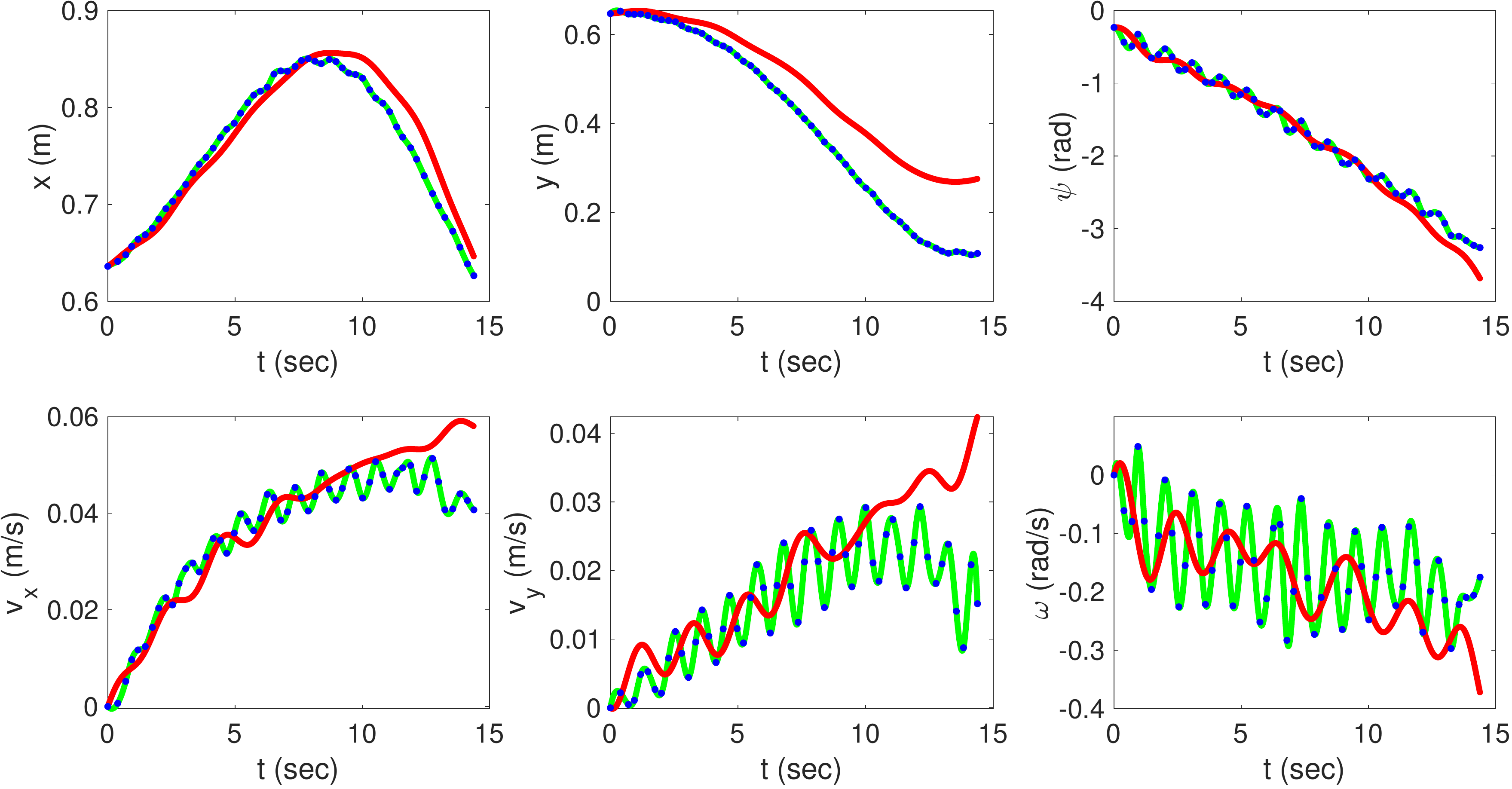} 
		\caption{$\alpha_0 = \ang{45}, \alpha_a = \ang{25}$} 
	\end{subfigure} \hfill
	\caption{Fitness between Koopman model and experimental measurements. The green line shows data interpolated from experimental measurements (blue dots) every $\Delta t = \SI{0.005}{\second}$. The red line shows the evolution of the states using the Koopman model. The actuation is constant for each of the two runs and is indicated in the caption.}
	\label{fig:: Koopman_exp_model} 
\end{figure*}
We next use the physical robot shown in Fig. \ref{fig:: roboticfish} to experimentally test our approach. An overhead camera captures the red and blue marks on the robotic fish (see Fig. \ref{fig:: roboticfish}) and calculates the coordinates of its center and its orientation at about $\SI{3}{\hertz}$. The body-frame velocities are estimated using a Kalman filter. The state derivative functions, provided by the average model, are then evaluated with the states. To simplify the modeling and control task, the tail-beat frequency used in both the training phase and the testing phase is kept constant at $\omega_a = \SI{2\numpi}{\radian\per\second}$. In order not to disturb the periodic movement of the tail oscillation during tracking, feedback control is updated at roughly one second. The control commands are communicated to the robot via Xbee (RF communication). 

\subsubsection{Training Phase}
For the training phase, we collect experimental measurements using the robotic fish to compute the approximate Koopman operator. Throughout each run, we apply constant tail bias and amplitude for the oscillations of the tail fin. We conduct a total of 72 runs, with two trials for each of the 36 different combinations of actuation parameters, shown in Table \ref{table:: actuation}. We train the Koopman operator using the same basis functions as discussed in the simulation section. 

To create a consistent mapping with the Koopman operator, all pairs of measurements $s_k$ and $s_{k+1}$ need to be spaced (in time) equally apart, as we explain in Section \ref{sec:: Koopman}. For this reason, and also to decrease the time between measurements to fine levels (without the constraints of our sampling and filtering methods), the obtained data is interpolated at $\Delta t = \SI{0.005}{\second}$. The interpolated data is then used to obtain an approximate Koopman operator according to \eqref{eq:: Koopman_LS_solution}.

To measure how well the Koopman model captures the nonlinear dynamics of the robotic fish, we use $\tilde{\mathcal{K}}$, learned from the experimental data, to propagate the identified model continuously based on the initial states of each of the 72 experimental runs. Then, the predicted simulated trajectories are compared against the corresponding experimental ones. For the purposes of illustration, two such comparisons are shown in Fig. \ref{fig:: Koopman_exp_model}. The linear Koopman model, despite not perfect, reasonably follows the experimental data for at least five seconds. Note that, because we only minimize the single-step prediction error \eqref{eq:: Kd_AG}, it is more likely (compared to minimizing a multi-step error) that we compute an unstable Koopman operator, even if the dynamics are stable. In that case, the long-term predictions would exponentially deviate and become inaccurate. Imposing stability properties on the operator is the focus of ongoing work \cite{Mamakoukas_stableLDS, Mamakoukas_stableKoopman}.

\subsubsection{Testing Phase}
We use the data-trained Koopman operator to implement linear feedback control (LQR) for tracking. Using the weight matrices $Q$ and $R$, which penalize the tracking error and control effort, respectively, we define the minimization problem \eqref{eq:: J_K} and calculate the infinite-horizon LQR gains. Contrary to work in \cite{RSS2019_MamakoukasCastano}, and in order to illustrate the simplicity and robustness of the proposed scheme, we keep the same weights across all different tasks, such that the same LQR gains, (unless the model is updated) are used in every type of trajectory. The resulting feedback has the form shown in \eqref{eq:: K_LQR}. As mentioned earlier, we argue that, to follow any trajectory, it suffices to track a desired orientation and forward velocity and so we design the LQR weights accordingly. Specifically, the weights used are $Q = \text{diag}(0,0,0.1, 1000,0,0)$ and $R = \text{diag}(1,1)$. The weights for the angle and forward velocity are disproportionate to account for the difference in scale between the velocity of the robotic fish, typically in the order of $0.01$~m/s, and the body orientation, expressed in radians.

\begin{figure}
	\centering
	\includegraphics[width=1\linewidth, keepaspectratio = 1]{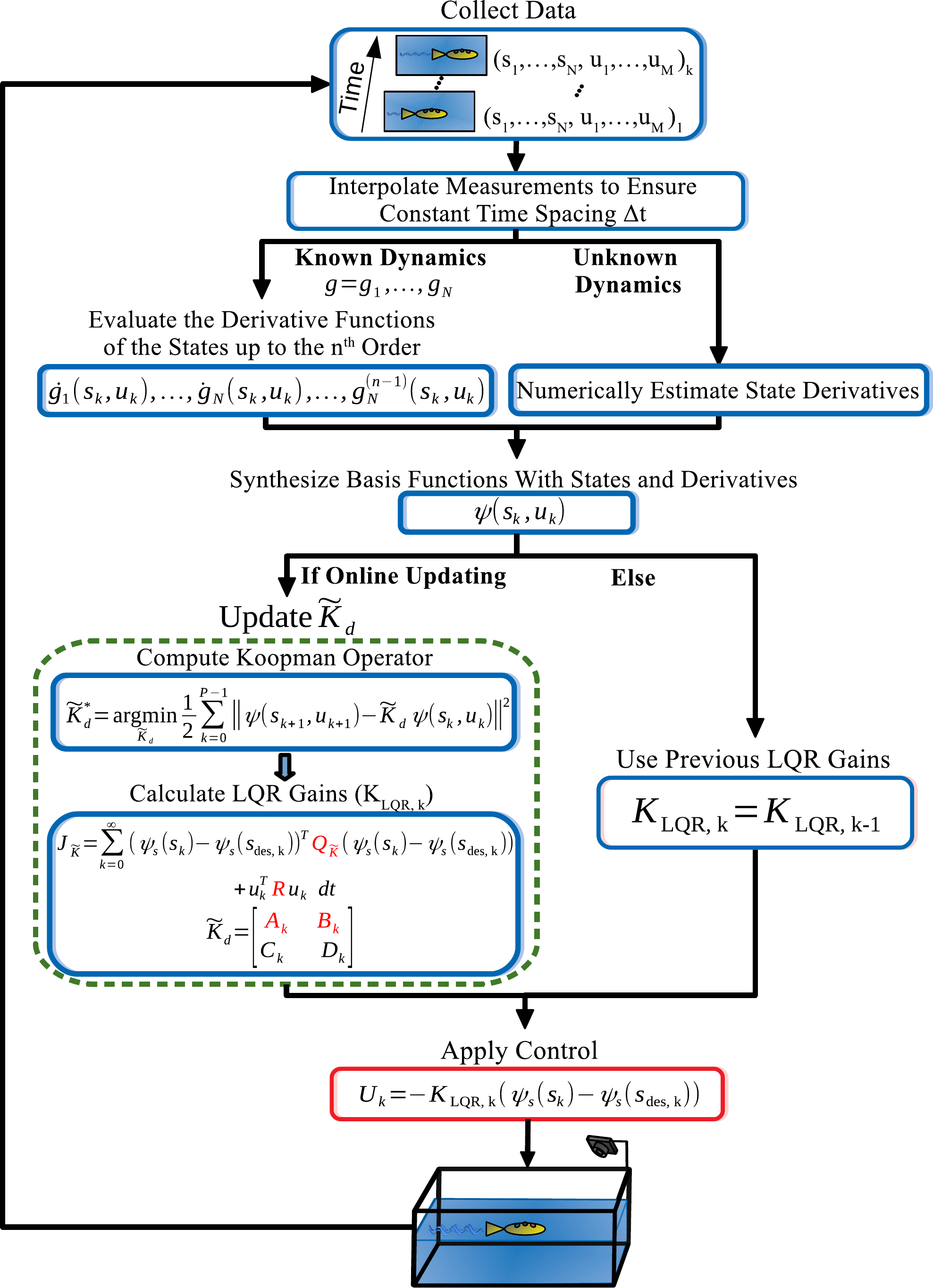}
	\caption{Outline of the proposed methodology for LQR control using derivative-based Koopman operators.}\label{fig:: methodOutline}
\end{figure}
\begin{figure}
	\centering
	\includegraphics[width=1\linewidth, height = 0.20\textheight, keepaspectratio = true]{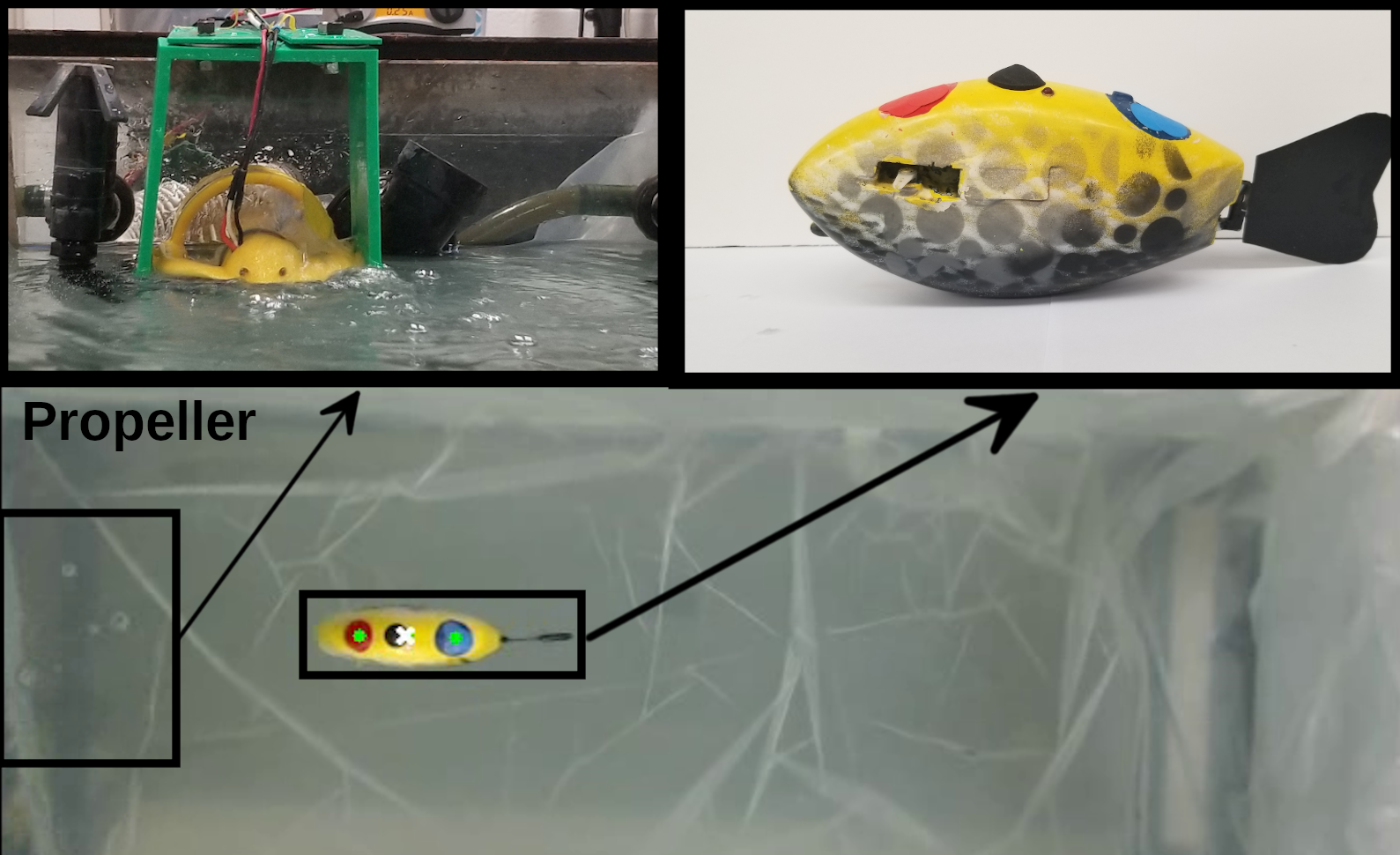}
	\caption{Experimental setup for creating fluid disturbances. The motor is halfway submerged in water, generating ripples with its propellers. }\label{fig:: propeller}
\end{figure}

\begin{figure*} 
		\centering
		\includegraphics[width=0.85\linewidth, keepaspectratio = true]{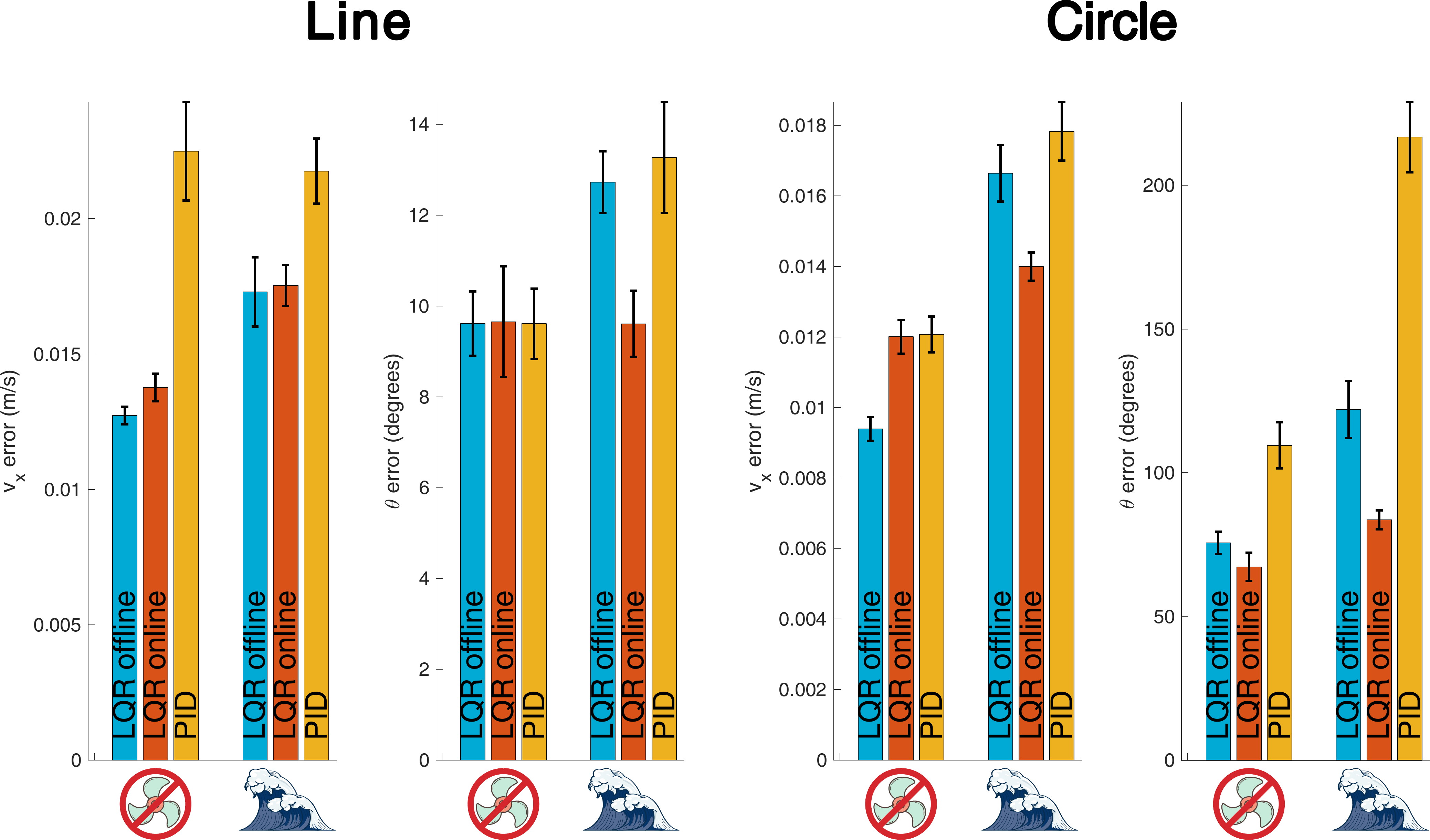} 
		\caption{Experimental results: Average error scores for velocity and angle tracking for PID and two variants of Koopman-LQR, trained offline and updated online, with and without fluid flow indicated respectively with the waves and no-fan icons. The four subplots compare the performance for the linear and circular trajectories for each state separately. The error bars indicate the standard error. The proposed Koopman operator scheme outperforms PID in all tests. Further, updating Koopman-LQR online improves the performance in the presence of the unmodeled fluid flow. A video of experimental runs is shown at \href{https://youtu.be/9_wx0tdDta0}{https://youtu.be/9\textunderscore wx0tdDta0}.} \label{table:: exp_actuation}
	\label{fig:: OFFvsPID_Histogram} 
\end{figure*}

We implement the proposed data-driven Koopman methodology in two ways. One approach is computing the model and LQR gains offline once; the other is updating the model and recalculating the LQR gains online in real time. To update the Koopman operator online in a memory-efficient manner, we do not store any previous measurements. Instead, we use \eqref{eq:: Kd_AG} and split it into the $P$ measurements used last to calculate the Koopman operator and the $\Delta P$ new measurements since the last update, where $P_{total} = P + \Delta P$ is the total number of measurements used. Then, 
\begin{equation}
 \begin{aligned}
 \tilde{\mathcal{K}}_{d, new}^* =& \mathcal{A}_{new}\mathcal{G}_{new}^\dagger,
 \end{aligned}
\end{equation}
where
\begin{equation}
 \begin{aligned}
 \mathcal{A}_{new} =& \frac{1}{P_{total}} (\mathcal{A} P + \sum_{k = P}^{P_{total}-1} \Psi(s_{k+1}, u_{k+1}) \Psi(s_k, u_k)^T) \\
 \mathcal{G}_{new} =& \frac{1}{P_{total}} (\mathcal{G} P + \sum_{k = P}^{P_{total}-1} \Psi(s_{k}, u_{k}) \Psi(s_k, u_k)^T)
 \end{aligned}
\end{equation}
and $\mathcal{A}$ and $\mathcal{G}$ are given by \eqref{eq::AG}. The derivation of the formula is shown in Appendix \ref{App:: OnlineUpdateKoopman}. Then, the LQR gains are recomputed as shown in Section \ref{sec:: ControlSynthesis} using the updated Koopman operator. We show an outline of the process in Fig. \ref{fig:: methodOutline}.

The proposed method is compared to a PID controller, a widely-used model-free control method. PID feedback is quite effective, requires low computational effort, and does not require exact knowledge of the dynamics or any of the model coefficients \cite{PIDquevedo}. In fact, it has been shown to perform remarkably well on motion and speed control of robotic fish \cite{Control_Yu,ren2015motion}. However, this method often requires intensive gain tuning. To tune the PID controller, we utilize the Ziegler-Nichols' closed-loop method \cite{haugen2004pid}. By comparing our method to PID, we hope to demonstrate that our proposed method is a promising, data-driven feedback scheme that compares well against a popularly used, effective controller, yet without the additional overhead of intensive parameter tuning. The two methods are compared on tracking linear and circular trajectories that are described in terms of the desired orientation and forward velocity.

The comparison of the proposed Koopman-LQR method and the PID is conducted over ten trials for both types of trajectories. We further compare their performance in the presence of fluid disturbance, generated by a propeller (see Fig. \ref{fig:: propeller}). The results are presented in Fig. \ref{fig:: OFFvsPID_Histogram}, which displays the average error in the two tracked states, together with the standard error. The standard error is a measure of the expected variability in the average error, contrary to the standard deviation that measures the expected variability away from the mean. Note that, for the case of LQR with gains computed offline, the data used to obtain the Koopman representation was collected in the absence of fluid disturbance. 

From the results, the proposed data-driven Koopman-LQR scheme, regardless of whether it is updated online, outperforms PID in all tasks, with or without fluid disturbance, except for tracking the orientation in the linear trajectory, where both algorithms have similar performance. The difference in the performance between the Koopman-LQR and the PID is highlighted in the tracking of the circular trajectory in the presence of fluid flow, where the angle error is significantly higher for PID (Fig. \ref{fig:: OFFvsPID_Histogram}). Given that angle tracking (in the linear trajectory) is the only metric where PID is comparable to our proposed method, we attribute the difference in performance in the presence of fluid flow to the robustness of our approach and its ability to track the desired trajectory in a confined space in the presence of unmodeled dynamics. 

In the absence of fluid disturbance, the two implementations of the proposed method (that is, with and without updating the model and LQR gains in real time) have comparable results. It is only in tracking the circular trajectory that the offline implementation tracks the desired velocity better. We conjecture that this is due to the collisions of the robotic fish with the side wall (and the unmodeled boundary conditions) that take place because of the confined space. Introducing such discontinuous disturbances likely deteriorates the learned model temporarily; yet it still outperforms the well tuned PID controller. On the other hand, the major benefit of updating the model online is, as one would expect, in the presence of fluid disturbance. There, the real-time updated method significantly outperforms the offline-trained Koopman-LQR scheme, highlighting the importance of updating the model online in environments that constantly change.

\section{Discussion and Future Work} 
In this paper, we use the Koopman operator framework to develop derivative-based data-driven linear representations of nonlinear systems, suitable for real-time feedback. The proposed synthesis of the observable functions aims at minimizing the representation error without requiring knowledge of the dynamics. Utilizing Taylor series error bounds, we characterize the approximation error induced by the Koopman operator and use the error bound formula to decide the order of derivatives for synthesizing the basis functions. LQR is then used as an example of efficient control synthesis based on the trained Koopman operator. In fact, unless the model is updated online, the LQR gains are computed only once. We demonstrate the efficacy of our approach with simulation and experimental results using a case study of a tail-actuated robotic fish.

One of the most promising aspects of the derivative-based synthesis of Koopman operators is the application to completely unknown dynamics. In that case, the proposed method relies on numerically estimating the state derivatives, amplifying any noise present in the measurements and possibly worsening the accuracy of the error bound estimates. As simulation results suggest, however, after implementing a simple moving average filter, the performance of the Koopman operator remains robust even for high noise levels. While the error bound estimates may be violated, they remain close to the true bounds. Such results merit further research on more complicated dynamics using more sophisticated noise reduction schemes. In fact, as part of early-stage efforts of underwater exploration, we have tested our derivative-based Koopman approach to the unknown dynamics of a soft robotic fish \cite{castano2020}, where we numerically estimate state derivatives using high-gain observers and are able to predict with reasonable accuracy the evolution of the states.

The modeling performance of the proposed Koopman approach is comparable to state-of-the-art data-driven modeling methods, such as SINDy and NARX. This is in part shown via the comparison of MPC control performance, based on these different modeling schemes. On the other hand, as a linear model, the Koopman approach lends itself readily to efficient control design tools, an example of which is LQR. In fact, LQR based on a Koopman model delivers control performance comparable to these MPC-based methods. In short, this work develops a systematic approach for constructing the Koopman operator with high fidelity, which can be used for various model-based control schemes (as illustrated here with MPC and LQR). The advantage over other nonlinear data-driven methods is the linear nature of the model, which allows us to more efficiently compute the controller (e.g., LQR). This also justifies why we use LQR in the later simulation and experiments for the robotic fish example.

Although the proposed method can be used for any system that can benefit from data-driven methods or reduction of the nonlinearity, underwater robotics is perhaps the most suitable application for this method, due to the inherent environmental uncertainty, the highly nonlinear dynamics, and the need for controllers that use limited computation (to preserve battery use or due to limited computational power). While this method could certainly be applied to other systems, it perhaps would not be as useful for low-dimensional systems, with known dynamics and few nonlinearities. 

This work can be further expanded in multiple ways. We believe that the modeling is worsened by the average dynamics \eqref{eq:: dynamics} used to describe the longer-term behavior of the tail-actuated fish. In fact, the average model borrowed from \cite{averagemodel} is inaccurate, because it assumes that the tail bias has no effect on the angular velocity if the amplitude is zero ($u_2 = 0$ when $\alpha_a = 0$). However, when the system already has forward velocity, the orientation of the tail certainly induces a rotation to the movement. Thus, abandoning the average model, and instead using Kirchoff's equations for a rigid body in fluid environment, could improve the model fitness. Alternatively, one could also use a system identification algorithm, such as SINDy \cite{koopman_sindy}, to first obtain a model for the nonlinear dynamics of the system. Besides identifying the underlying dynamics, we are also interested in experimentally testing this methodology on entirely unknown dynamics where we use measurements to numerically compute the derivatives of the system states and populate the Koopman basis functions. Robotic fish with different morphologies, such as fish propelled with pectoral fish, are possible candidates for further testing. We are also interested in computing error bounds for general basis functions, not limited to derivatives of the dynamics. 

Last, note that work in \cite{haggerty2020modeling} uses Koopman operators with time delay observables to model and control (also using LQR feedback) a soft robotic arm. The authors comment that the time delay observables help better capture the momentum of the dynamics. We believe that time delay observables are equivalent to derivative-based observables given that the latter can be approximated numerically from past measurements \cite{derivatives_estimation}. Interestingly enough, the authors in \cite{haggerty2020modeling} observe a steady increase in predictive power as they increase the number of time delays in the observables, which we consider to be analogous to including higher-order derivatives in the basis functions. Investigating the connection between time delay observables and derivative-based observables can reveal whether the error bound formula presented in this work can be also utilized in other applications of Koopman models, without the need to numerically estimate derivatives \cite{arbabi2017computation, eivazi2020recurrent, kamb2020time, korda2020data}. 

\label{sec:conclusion}
\section*{Acknowledgment}
We would like to greatly thank the anonymous reviewers for their valuable feedback. 

\appendices
 \section{Global Error for Taylor-Based Koopman Operators} \label{App:: GlobalError}
In each time step, there is error induced in the updated function. Let $e^{(m)}_k$ indicate the deviation from the accurate value of the $f^{(m)}_k$ function at time $t_k$, where $m \in $ $\mathbb{Z}\,\cap\,[0,n]$. That is,
\begin{equation}
\begin{aligned} 
e_k =& \tilde{f}_k - f_k \\
e'_k =& \tilde{f}'_k - f'_k \\
\vdots& \\
e^{(n)}_k =& \tilde{f}^{(n)}_k - f^{(n)}_k
\end{aligned}
\end{equation}

Next, consider how previous errors accumulate in the prediction of a function: 
\begin{equation}
\begin{aligned}
\tilde{f}_{k+1} =& \tilde{f}_{k} + \tilde{f}'_{k} \cdot \Delta t + \tilde{f}''_{k} \cdot \frac{\Delta t^2}{{2!}} + \dots + \tilde{f}^{(n)}_{k} \frac{\Delta t^n}{n!}\\
=& (f_{k} + e_{k}) + (f'_{k} + e'_{k}) \cdot \Delta t + (f''_{k} + e''_{k}) \cdot \frac{\Delta t^2}{{2!}} \\&+ \dots + (f^{(n)}_{k} + e^{(n)}_{k}) \frac{\Delta t^n}{n!} \\
=& f_{k} + f'_{k} \cdot \Delta t + f''_{k} \frac{\Delta t^2}{{2!}} + \dots + f^{(n)}_{k} \frac{\Delta t^n}{n!} \\
&+ \underbrace{e_{k} + e'_{k} \cdot \Delta t + e''_{k} \cdot \frac{\Delta t^2}{{2!}} + \dots + e^{(n)}_{k} \frac{\Delta t^n}{n!}}_{error~terms},
\end{aligned}
\end{equation}
such that 
\begin{equation}
\begin{aligned}
e_{k+1} =& e_{k} + e'_{k} \cdot \Delta t + e''_{k} \cdot \frac{\Delta t^2}{{2!}} + \dots + e^{(n)}_{k} \frac{\Delta t^n}{n!} 
\\&+ \frac{\Delta t^{n+1}}{(n+1)!} f^{(n+1)}_{k, {k+1}},
\end{aligned}
\end{equation}
where the last error term is from Lagrange's remainder formula and is added at each step. Remember, $f^{(n)}_{k,k+1}$ is the $n$th derivative of a function evaluated at some $t \in [t_k, t_{k+1}]$. Similarly, 
\begin{equation}
\resizebox{0.99\hsize}{!}{$
	\begin{aligned}
e_{k} =& e_{k-1} + e'_{k-1} \cdot \Delta t + e''_{k-1} \cdot \frac{\Delta t^2}{{2!}} + \dots + e^{(n)}_{k-1} \frac{\Delta t^n}{n!} + \frac{\Delta t^{n+1}}{(n+1)!} f^{(n+1)}_{k-2, {k-1}} \\
\vdots& \\
e_3 =& e_{2} + e'_{2} \cdot \Delta t + e''_{2} \cdot \frac{\Delta t^2}{{2!}} + \dots + e^{(n)}_{2} \frac{\Delta t^n}{n!} + \frac{\Delta t^{n+1}}{(n+1)!} f^{(n+1)}_{2, {3}} \\
e_2 =& e_{1} + e'_{1} \cdot \Delta t + e''_{1} \cdot \frac{\Delta t^2}{{2!}} + \dots + e^{(n)}_{1} \frac{\Delta t^n}{n!} + \frac{\Delta t^{n+1}}{(n+1)!} f^{(n+1)}_{1, {2}}, 
\end{aligned}
$}
\end{equation}
where $e_1 = f^{(n+1)}_{0, 1}\frac{\Delta t^{n+1}}{(n+1)!}$; in the first iteration, we assume the knowledge of the exact values of all derivatives up to order $n$, such that the only numerical error comes from the inaccuracy of the Taylor series expansion. Without loss of generality, the functions are assumed to be known exactly at $k = 0$. That is, $e_0 = 0, e'_0 = 0, \dots, e^{(n)}_0 = 0$. Thus, we can express the error of a function $f$ as 
\begin{equation}
\begin{aligned}
e_k =& (\sum_{i = 1}^{k-1} e'_i \cdot \Delta t + e''_i \cdot \frac{\Delta t^2}{2!} + \dots + e_i^{(n)} \frac{\Delta t^n}{n!} + f_{i, i+1}^{(n+1)}\frac{\Delta t^{n+1}}{(n)!}) \\
&+ f^{(n+1)}_{0, 1}\frac{\Delta t^{n+1}}{(n+1)!}
\end{aligned}
\end{equation}
Similarly, for the errors associated with the higher-order terms, we have
\begin{equation}
\begin{aligned}
e'_k =&~ (\sum_{i = 1}^{k-1} e''_i \cdot \Delta t + e''_i \cdot \frac{\Delta t^2}{2!} + \dots + e_i^{(n)}\frac{\Delta t^{n-1}}{(n-1)!} \\& + f_{i, i+1}^{(n+1)}\frac{\Delta t^{n}}{(n)!}) + f^{(n+1)}_{0, 1}\frac{\Delta t^{n}}{(n)!}\\
\vdots& \\
e^{(n-1)}_k =&~(\sum_{i = 1}^{k-1} e^{(n)}_i \cdot \Delta t + f_{i, i+1}^{(n+1)}\frac{\Delta t^{(n+1)-(n-1)}}{((n+1)-(n-1))!}) 
\\&+ f^{(n+1)}_{0, 1}\frac{\Delta t^{(n+1)-(n-1)}}{((n+1)-(n-1))!}\\
e^{(n)}_k =&~(\sum_{i = 1}^{k-1} f_{i, i+1}^{(n+1)}\frac{\Delta t^{n+1-n}}{(n+1-n)!}) + f^{(n+1)}_{0, 1}\frac{\Delta t^{n+1-n}}{(n+1-n)!} 
\end{aligned}
\end{equation}
In general, the error of the $p$-th derivative of a function $f$ at time $t_k$ is given by 
\begin{equation}
\begin{aligned}
e^{(p)}_k =& \Bigg(\sum_{i = 1}^{k-1} \Big(\big(\sum_{j = 1}^{n-p} e_i^{(j)} \frac{\Delta t^j}{j!} \big) + f_{i, i+1}^{(n+1)}\frac{\Delta t^{n+1-p}}{(n+1-p)!}\Big)\Bigg) \\
&+ f^{(n+1)}_{0, 1}\frac{\Delta t^{n+1-p}}{(n+1-p)!},
\end{aligned}
\end{equation}
which can be simplified to 
\begin{align}\label{eq:: error_p0Derivative}
e^{(p)}_k =& \sum_{i = 1}^{k-1}\sum_{j = 1}^{n-p} e_i^{(j)} \frac{\Delta t^j}{j!} + \sum_{i = 0}^{k-1}f_{i, i+1}^{(n+1)}\frac{\Delta t^{n+1-p}}{(n+1-p)!},
\end{align}
which is split into the error from the derivative inaccuracies and the error induced by the Taylor series expansion at each step. Note that $n$ is the number of derivatives used to propagate $f$, where $p \in$ $\mathbb{Z}\,\cap\, [0, n]$ indicates the order of the derivative of a function for which the error is calculated ($p=0$ refers to the original function $f$).

\section{Global Error Bounds for Taylor-Based Koopman Operator}\label{App:: ErrorBounds}
First, consider the error in $f$ when no derivative basis functions are used, that is, $n=0$. Then,
\begin{align}\label{eq:: 0der}
e_k =& \sum_{i = 0}^{k-1} f_{i, i+1}^{(1)}\frac{\Delta t^{1}}{1!}\\
|e_k| =& |\sum_{i = 0}^{k-1} f_{i, i+1}^{(1)}\frac{\Delta t^{1}}{1!}| \\
|e_k| \le& \frac{\Delta t^{1}}{1!}\sum_{i = 0}^{k-1} |f_{i, i+1}^{(1)}| 
\end{align}
To further simplify the analysis, we can assume a maximum value of $f^{(1)}$, $|f_{i, i+1}^{(1)}| \le |f^{(1)}_{max}|$ for all $i \in$ $\mathbb{Z}\,\cap\,[0, k-1]$. Then, 
\begin{align}
|e_k| \le& \frac{\Delta t^{1}}{1!}\sum_{i = 0}^{k-1} |f^{(1)}_{max}| \\
|e_k| \le& k \cdot \Delta t\cdot |f^{(1)}_{max}| = T \cdot |f^{(1)}_{max}| \label{eq::ek_n0},
\end{align}
where $T \triangleq k \cdot \Delta t$ is the time window that we are approximating the function over. 

Similarly, we compute the error bound for $n = 1$ (one derivative of $f$ in the basis functions). Then, 
\begin{align}
e_k =& \sum_{i = 1}^{k-1} e'_i\Delta t + \sum_{i = 0}^{k-1} f_{i, i+1}^{(2)}\frac{\Delta t^{2}}{2!},
\end{align}
where $e'_i$ is given by \eqref{eq:: 0der}. Note that $f^{(1)}$ becomes $f^{(2)}$; what was $e_i$ is now $e_i'$ and the first-order derivative of $e'_i$ is the second-order derivative of $e_i$. Thus,
\begin{align}
e_k =& \sum_{i = 1}^{k-1} \big(\sum_{j = 0}^{i-1} f_{j, j+1}^{(2)}\frac{\Delta t^{1}}{1!}\big) \Delta t + \sum_{i = 0}^{k-1} f_{i, i+1}^{(2)}\frac{\Delta t^{2}}{2!} \\
|e_k| \le& \sum_{i = 1}^{k-1} \big(\sum_{j = 0}^{i-1} |f_{j, j+1}^{(2)}|\frac{\Delta t^{1}}{1!}\big) \Delta t + \frac{\Delta t^{2}}{2!}\sum_{i = 0}^{k-1}|f_{i, i+1}^{(2)}|
\end{align}
To further simplify things, we again use $|f_{i, i+1}^{(2)}| \le |f^{(2)}_{max}|$ for all $i \in$ $\mathbb{Z}\,\cap\,[0, k-1]$, such that
\begin{align}
|e_k| \le& \sum_{i = 1}^{k-1} \big(\sum_{j = 0}^{i-1} |f^{(2)}_{max}|\frac{\Delta t^{1}}{1!}\big) \Delta t + \frac{\Delta t^{2}}{2!}\sum_{i = 0}^{k-1} |f^{(2)}_{max}| \\
|e_k| \le& \sum_{i = 1}^{k-1} i|f^{(2)}_{max}|\Delta t^{2} + k\cdot \frac{\Delta t^{2}}{2} |f^{(2)}_{max}| \\
|e_k| \le& |f^{(2)}_{max}|\Delta t^{2} \sum_{i = 1}^{k-1} i+ k\cdot \frac{\Delta t^{2}}{2} |f^{(2)}_{max}|.
\end{align}
Using the property $\sum_{i =0}^n i= \sum_{i =1}^n i= \frac{n (n+1)}{2}$,
\begin{align}
|e_k| \le& |f^{(2)}_{max}|\Delta t^{2} \frac{(k-1)k}{2} + k\cdot \frac{\Delta t^{2}}{2} |f^{(2)}_{max}| \\
|e_k| \le& |f^{(2)}_{max}|\Delta t^{2} \frac{k^2 - k}{2} + k\cdot \frac{\Delta t^{2}}{2} |f^{(2)}_{max}| \\
|e_k| \le& \frac{(k\cdot \Delta t)^2}{2}|f^{(2)}_{max}| = \frac{T^2}{2}|f^{(2)}_{max}|.
\end{align}

From $n=0$ and $n=1$, we notice a pattern in the error bound expression. Using proof by induction, we next show that the error bound is given by 
\begin{align}
|e_k| \le& \frac{T^{n+1}}{(n+1)!}|f^{(n+1)}_{max}|.
\end{align}
\ssubsection{\textbf{Base Case:} $n = 0$}

From \eqref{eq::ek_n0}, it is true that, for $n = 0$, 
\begin{align}
|e_k| \le& T \cdot |f^{(1)}_{max}|.
\end{align}
\ssubsection{\textbf{Induction Step:}}
Assuming that the relationship holds for $n-1$, we show it is also true for $n$. For the case of using basis functions with derivatives up to order $n$, the error in the derivative function of order $p = 0$ is, using \eqref{eq:: error_p0Derivative}, given by 
\begin{align}
e_k =& \sum_{i = 1}^{k-1}\sum_{j = 1}^{n} e_i^{(j)} \frac{\Delta t^j}{j!} + \sum_{i = 0}^{k-1}f_{i, i+1}^{(n+1)}\frac{\Delta t^{n+1}}{(n+1)!}.
\end{align}
Taking the absolute value of the error,
\begin{align}
|e_k| =& |\sum_{i = 1}^{k-1}\sum_{j = 1}^{n} e_i^{(j)} \frac{\Delta t^j}{j!} + \sum_{i = 0}^{k-1}f_{i, i+1}^{(n+1)}\frac{\Delta t^{n+1}}{(n+1)!}| \\
 \le& \sum_{i = 1}^{k-1}\sum_{j = 1}^{n} |e_i^{(j)}| \frac{\Delta t^j}{j!} + \sum_{i = 0}^{k-1} |f_{i, i+1}^{(n+1)}|\frac{\Delta t^{n+1}}{(n+1)!},
\end{align}
where $\dfrac{\Delta t^j}{j!}$ and $\dfrac{\Delta t^{n+1}}{(n+1)!}$ are non-negative. Then, we use the relationship to substitute for the terms $|e_i^{(j)}|$. Note that $|e_i^{(1)}|$ is equivalent to $|e_k|$ using $n-1$ derivatives. Similarly, each term $|e_i^{(j)}|~\text{for}~j\in\mathbb{Z}\,\cap\,[1,n]$ is equivalent to $|e_k|$ for $n-j$ derivatives. Thus, we use the relationship that holds for up to $n-1$ derivatives, such that
\begin{equation}\begin{aligned}
|e_k| \le& \sum_{i = 1}^{k-1}\sum_{j = 1}^{n} |\frac{(i \Delta t)^{n+1-j}}{(n+1-j)!}f_{max}^{(n+1)}| \frac{\Delta t^j}{j!} \\
&+ \sum_{i = 0}^{k-1} |f_{i, i+1}^{(n+1)}|\frac{\Delta t^{n+1}}{(n+1)!} \\
=&|f_{max}^{(n+1)}| \Delta t^{n+1} \sum_{i = 1}^{k-1}\sum_{j = 1}^{n} \frac{i^{n+1-j}}{(n+1-j)!j!} \\
&+ |f_{max}^{(n+1)}|\sum_{i = 0}^{k-1} \frac{\Delta t^{n+1}}{(n+1)!},
\end{aligned}\end{equation}
where $|f_{max}^{(n+1)}| \ge |f_{i, i+1}^{(n+1)}|$. Next, we use $j' = n+1-j$ to simplify the inner sum term
\begin{align}
\sum_{j = 1}^{n} \frac{i^{n+1-j}}{(n+1-j)!j!} = \sum_{j' = n}^{1} \frac{i^{j'}}{(j')!(n+1-j')!}, 
\end{align}
which can be rewritten for $j$ and the summation order can also be reversed such that
\begin{equation}\begin{aligned}
|e_k| \le& |f_{max}^{(n+1)}| \Delta t^{n+1} \sum_{i = 1}^{k-1}\sum_{j = 1}^{n} \frac{i^{j}}{(n+1-j)!j!} \\
&+ |f_{max}^{(n+1)}|\sum_{i = 0}^{k-1} \frac{\Delta t^{n+1}}{(n+1)!}.
\end{aligned}\end{equation}
We can simplify 
\begin{align}
 |f_{max}^{(n+1)}|\sum_{i = 0}^{k-1} \frac{\Delta t^{n+1}}{(n+1)!} = k |f_{max}^{(n+1)}| \frac{\Delta t^{n+1}}{(n+1)!},
\end{align}
such that 
\begin{equation}\begin{aligned}
|e_k| \le& |f_{max}^{(n+1)}| \Delta t^{n+1} \sum_{i = 1}^{k-1}\sum_{j = 1}^{n} \frac{i^{j}}{(n+1-j)!j!} \\
&+ k |f_{max}^{(n+1)}| \frac{\Delta t^{n+1}}{(n+1)!} \\
=& \left[\sum_{i = 1}^{k-1}\sum_{j = 1}^{n} \frac{i^{j}}{(n+1-j)!j!} + \frac{k}{(n+1)!} \right] |f_{max}^{(n+1)}| \Delta t^{n+1}.
\end{aligned}\end{equation}
Using the binomial coefficient 
\begin{align}
\frac{a!}{(a-b)! b!} = \binom{a}{b}, 
\end{align}
to rewrite the term $(n+1 - j)! j!$, 
\begin{equation}
\resizebox{0.99\hsize}{!}{$
\begin{aligned}
|e_k| \le& \left[\sum_{i = 1}^{k-1}\sum_{j = 1}^{n} \frac{i^{j}}{(n+1)!} \binom{n+1}{j}+ \frac{k}{(n+1)!} \right] |f_{max}^{(n+1)}| \Delta t^{n+1}.
\end{aligned}
$}
\end{equation}
Switching the summation order, 
\begin{align}
|e_k| \le& \left[\sum_{j = 1}^{n}\binom{n+1}{j}\sum_{i = 1}^{k-1} i^{j} + k \right] |f_{max}^{(n+1)}| \frac{\Delta t^{n+1}}{(n+1)!}.
\end{align}
To use Pascal's identity \cite{PascalIdentity}:
\begin{align}
 \sum_{p=0}^{k}\binom{k+1}{p} \sum_{j=1}^{n} j^p = (n+1)^{k+1} - 1,
\end{align}
we rewrite the error bound as
\begin{equation}\begin{aligned}
|e_k| \le& \left[\sum_{j = 0}^{n}\binom{n+1}{j}\sum_{i = 1}^{k-1} i^{j} - \binom{n+1}{0}\sum_{i = 1}^{k-1} i^{0}+ k \right] \\
&|f_{max}^{(n+1)}| \frac{\Delta t^{n+1}}{(n+1)!},
\end{aligned}\end{equation}
such that
\begin{equation}\begin{aligned}
|e_k| \le& \left[(k^{n+1} - 1 ) - (k-1) + k \right]|f_{max}^{(n+1)}| \frac{\Delta t^{n+1}}{(n+1)!} \\
=& k^{n+1}|f_{max}^{(n+1)}| \frac{\Delta t^{n+1}}{(n+1)!}\\
=& \frac{T^{n+1}}{(n+1)!}|f_{max}^{(n+1)}|,
\end{aligned}\end{equation}
where $T = k \cdot \Delta t$. Therefore, 
\begin{align}
|e_k| \le& \frac{T^{n+1}}{(n+1)!}|f^{(n+1)}_{max}|~\text{for all}~n \,\in \mathbb{Z}^{\ge 0}.
\end{align}

\subsection{Incremental Update of Koopman Operator}\label{App:: OnlineUpdateKoopman}
Consider $P$ measurements used to last update the Koopman operator and $\Delta P$ new measurements. We incrementally compute a Koopman operator using all $P_{total} = P + \Delta P$ measurements as follows. The Koopman operator is computed using \eqref{eq:: Kd_AG}, which we split the expression into past and new measurements, such that
\begin{equation}
 \begin{aligned}
 \mathcal{A}_{new} =& \frac{1}{P_{total}} \sum_{k = 0}^{P_{total}-1} \Psi(s_{k+1}, u_{k+1}) \Psi(s_k, u_k)^T \\
 =& \frac{1}{P_{total}} (\sum_{k = 0}^{P-1} \Psi(s_{k+1}, u_{k+1}) \Psi(s_k, u_k)^T \\
 &+ \sum_{k = P}^{P_{total}-1} \Psi(s_{k+1}, u_{k+1}) \Psi(s_k, u_k)^T)
 \end{aligned}
\end{equation}
and, using 
\begin{equation}
 \begin{aligned}
 \mathcal{A} = \frac{1}{P} \sum_{k = 0}^{P-1} \Psi(s_{k+1}, u_{k+1}) \Psi(s_k, u_k)^T,
 \end{aligned}
\end{equation}
we can rewrite $\mathcal{A}_{new}$ as 
\begin{equation}
 \begin{aligned}
 \mathcal{A}_{new} =& \frac{1}{P_{total}} (P \mathcal{A} + \sum_{k = P}^{P_{total}-1} \Psi(s_{k+1}, u_{k+1}) \Psi(s_k, u_k)^T.
 \end{aligned}
\end{equation}
Similarly, 
\begin{equation}
 \mathcal{G}_{new} =\frac{1}{P_{total}} (\mathcal{G} P + \sum_{k = P}^{P_{total}-1} \Psi(s_{k}, u_{k}) \Psi(s_k, u_k)^T).
\end{equation}

\ifCLASSOPTIONcaptionsoff
  \newpage
\fi


\bibliographystyle{IEEEtran}
\bibliography{references}

\begin{thebibliography}{10}
\providecommand{\url}[1]{#1}
\csname url@samestyle\endcsname
\providecommand{\newblock}{\relax}
\providecommand{\bibinfo}[2]{#2}
\providecommand{\BIBentrySTDinterwordspacing}{\spaceskip=0pt\relax}
\providecommand{\BIBentryALTinterwordstretchfactor}{4}
\providecommand{\BIBentryALTinterwordspacing}{\spaceskip=\fontdimen2\font plus
\BIBentryALTinterwordstretchfactor\fontdimen3\font minus
  \fontdimen4\font\relax}
\providecommand{\BIBforeignlanguage}[2]{{%
\expandafter\ifx\csname l@#1\endcsname\relax
\typeout{** WARNING: IEEEtran.bst: No hyphenation pattern has been}%
\typeout{** loaded for the language `#1'. Using the pattern for}%
\typeout{** the default language instead.}%
\else
\language=\csname l@#1\endcsname
\fi
#2}}
\providecommand{\BIBdecl}{\relax}
\BIBdecl

\bibitem{nmpc_challenges}
D.~Mayne,
  \emph{\href{https://link.springer.com/content/pdf/10.1007\%2F978-3-0348-8407-5.pdf}{Nonlinear
  Model Predictive Control: {C}hallenges and Opportunities}}, F.~Allg{\"o}wer
  and A.~Zheng, Eds.\hskip 1em plus 0.5em minus 0.4em\relax Basel:
  Birkh{\"a}user Basel, 2000, vol.~26.

\bibitem{datadrivencontrol}
Z.-S. Hou and Z.~Wang,
  ``\href{https://www.sciencedirect.com/science/article/pii/S0020025512004781/pdfft?md5=5d85a37775f351c14e22b74822b10112&pid=1-s2.0-S0020025512004781-main.pdf}{From
  model-based control to data-driven control: Survey, classification and
  perspective},'' \emph{Information Sciences}, vol. 235, pp. 3--35, 2013.

\bibitem{LQR}
H.~Kwakernaak and R.~Sivan, \emph{Linear {O}ptimal {C}ontrol {S}ystems}.\hskip
  1em plus 0.5em minus 0.4em\relax New York: Wiley, 1972, vol.~1.

\bibitem{lmpc}
A.~Bemporad, F.~Borrelli, and M.~Morari,
  ``\href{http://www.academia.edu/download/31077934/AUTT.pdf}{Model predictive
  control based on linear programming---the explicit solution},'' \emph{IEEE
  Transactions on Automatic Control}, vol.~47, no.~12, pp. 1974--1985, 2002.

\bibitem{nmpc}
F.~Allg{\"o}wer and A.~Zheng, \emph{Nonlinear {M}odel {P}redictive
  {C}ontrol}.\hskip 1em plus 0.5em minus 0.4em\relax Basel: Birkh{\"a}user,
  Basel, 2012, vol.~26.

\bibitem{isidori2013nonlinear}
A.~Isidori, \emph{Nonlinear {C}ontrol {S}ystems}, 3rd~ed.\hskip 1em plus 0.5em
  minus 0.4em\relax London, UK: Springer, 1995.

\bibitem{ddp}
D.~H. Jacobson and D.~Q. Mayne, \emph{Differential {D}ynamic
  {P}rogramming}.\hskip 1em plus 0.5em minus 0.4em\relax New York: American
  Elsevier, 1970.

\bibitem{SAC}
A.~R. Ansari and T.~D. Murphey,
  ``\href{https://ieeexplore.ieee.org/iel7/8860/4359257/07562474.pdf}{Sequential
  action control: closed-form optimal control for nonlinear and nonsmooth
  systems},'' \emph{IEEE Transactions on Robotics}, vol.~32, no.~5, pp.
  1196--1214, 2016.

\bibitem{sddp}
E.~Theodorou, Y.~Tassa, and E.~Todorov,
  ``\href{https://ieeexplore.ieee.org/iel5/5512481/5530425/05530971.pdf}{Stochastic
  differential dynamic programming},'' in \emph{Proceedings of the American
  Control Conference}, Baltimore, MD, USA, 2010, pp. 1125--1132.

\bibitem{iLQG}
E.~Todorov and W.~Li,
  ``\href{https://ieeexplore.ieee.org/iel5/9861/31519/01469949.pdf}{A
  generalized iterative LQG method for locally-optimal feedback control of
  constrained nonlinear stochastic systems},'' in \emph{Proceedings of the
  American Control Conference}, Portland, OR, USA, 2005, pp. 300--306.

\bibitem{SAC2}
G.~Mamakoukas, M.~A. MacIver, and T.~D. Murphey,
  ``\href{https://journals.sagepub.com/doi/pdf/10.1177/0278364918776083}{Feedback
  synthesis for underactuated systems using sequential second-order needle
  variations},'' \emph{The International Journal of Robotics Research},
  vol.~37, no. 13-14, pp. 1826--1853, 2019.

\bibitem{Control_Yu}
J.~Yu, M.~Tan, S.~Wang, and E.~Chen,
  ``\href{https://ieeexplore.ieee.org/iel5/3477/29160/01315762.pdf}{Development
  of a biomimetic robotic fish and its control algorithm},'' \emph{IEEE
  Transactions on Systems, Man, and Cybernetics}, vol.~34, no.~4, pp.
  1798--1810, 2006.

\bibitem{deng2015yaw}
H.~Deng, W.~Wang, W.~Luo, and G.~Xie,
  ``\href{https://ieeexplore.ieee.org/iel7/7393091/7398881/07398910.pdf}{Yaw
  angle control of a boxfish-like robot based on cascade PID control
  algorithm},'' in \emph{Proceedings of the International Conference on Power
  Electronics Systems and Applications}, Hong Kong, China, 2015, pp. 1--5.

\bibitem{Maneuverability_Suebsaiprom}
P.~Suebsaiprom and C.-L. Lin,
  ``\href{https://www.sciencedirect.com/science/article/pii/S0967066115300071/pdfft?md5=390e39c82d03c1fcfd1ae280277d8b01&pid=1-s2.0-S0967066115300071-main.pdf}{Maneuverability
  modeling and trajectory tracking for fish robot},'' \emph{Control Engineering
  Practice}, vol.~45, pp. 22--36, 2015.

\bibitem{mamakoukas2016}
G.~Mamakoukas, M.~A. MacIver, and T.~D. Murphey,
  ``\href{https://ieeexplore.ieee.org/stamp/stamp.jsp?arnumber=7526061}{Sequential
  action control for models of underactuated underwater vehicles in a planar
  ideal fluid},'' in \emph{Proceedings of the American Control Conference},
  Boston, MA, USA, 2016, pp. 4500--4506.

\bibitem{controlHorizontal_Kato}
N.~Kato,
  ``\href{https://ieeexplore.ieee.org/iel5/48/17790/00820744.pdf}{Control
  performance in the horizontal plane of a fish robot with mechanical pectoral
  fins},'' \emph{IEEE Journal of Oceanic Engineering}, vol.~25, no.~1, pp.
  121--129, 2000.

\bibitem{wen2011novel}
L.~Wen, T.~Wang, G.~Wu, J.~Liang, and C.~Wang,
  ``\href{https://ieeexplore.ieee.org/iel5/41/4387790/05764536.pdf}{Novel
  method for the modeling and control investigation of efficient swimming for
  robotic fish},'' \emph{IEEE Transactions on Industrial Electronics}, vol.~59,
  no.~8, pp. 3176--3188, 2011.

\bibitem{Geo_Morgansen}
K.~A. Morgansen, B.~I. Triplett, and D.~J. Klein,
  ``\href{https://ieeexplore.ieee.org/iel5/8860/4399947/04399955.pdf}{Geometric
  methods for modeling and control of free-swimming fin-actuated underwater
  vehicles},'' \emph{IEEE Transactions on Robotics}, vol.~23, no.~6, pp.
  1184--1199, 2007.

\bibitem{Trajectory_Morgansen}
K.~A. Morgansen, P.~A. Vela, and J.~W. Burdick,
  ``\href{https://ieeexplore.ieee.org/iel5/7916/21826/01013449.pdf}{Trajectory
  stabilization for a planar carangiform robot fish},'' in \emph{Proceedings of
  the International Conference on Robotics and Automation}, Washington, DC,
  USA, 2002, pp. 756--762.

\bibitem{SlidingMode_suebsaiprom}
P.~Suebsaiprom and C.-L. Lin,
  ``\href{https://ieeexplore.ieee.org/stamp/stamp.jsp?arnumber=1315762}{Sliding
  mode path tracking control for fish-robot under ocean current
  perturbation},'' in \emph{Proceedings of the International Conference on
  Control and Automation}, Kathmandu, Nepal, 2016, pp. 972--977.

\bibitem{NMPC_maria}
M.~Casta{\~n}o and X.~Tan,
  ``{\href{https://asmedigitalcollection.asme.org/dynamicsystems/article-pdf/141/7/071012/6030050/ds_141_07_071012.pdf}{Model
  predictive control-based path-following for tail-actuated robotic fish}},''
  \emph{Journal of Dynamic Systems, Measurement, and Control}, vol. 141, no.~7,
  pp. 1--11, 2019.

\bibitem{zhang2018modeling}
S.~Zhang, B.~Jiang, X.~Chen, J.~Liang, P.~Cui, and X.~Guo,
  ``\href{http://downloads.hindawi.com/journals/complexity/2018/4657235.pdf}{Modeling
  and Dynamic Control of a Class of Semibiomimetic Robotic Fish},''
  \emph{Complexity}, vol. 2018, 2018.

\bibitem{BSC_maria}
M.~L. Castaño and X.~Tan,
  ``\href{https://ieeexplore.ieee.org/iel7/8847927/8868321/08868586.pdf}{Backstepping
  control-based trajectory tracking for tail-actuated robotic fish},'' in
  \emph{Proceedings of the International Conference on Advanced Intelligent
  Mechatronics}, Hong Kong, China, 2019, pp. 839--844.

\bibitem{motionPlanning_Saimek}
S.~Saimek and P.~Y. Li,
  ``\href{https://journals.sagepub.com/doi/pdf/10.1177/0278364904038366}{Motion
  planning and control of a swimming machine},'' \emph{The International
  Journal of Robotics Research}, vol.~23, no.~1, pp. 27--53, 2004.

\bibitem{ren2015motion}
Q.~Ren, J.~Xu, and X.~Li,
  ``\href{https://ieeexplore.ieee.org/iel7/7108493/7125066/07125073.pdf}{A
  motion control approach for a robotic fish with iterative feedback tuning},''
  in \emph{Proceedings of the International Conference on Industrial
  Technology}, Seville, Spain, 2015, pp. 40--45.

\bibitem{koopman_linear_si}
A.~Mauroy and J.~Goncalves,
  ``\href{https://ieeexplore.ieee.org/iel7/7786694/7798233/07799269.pdf}{Linear
  identification of nonlinear systems: A lifting technique based on the
  {K}oopman operator},'' in \emph{Proceedings of the Conference on Decision and
  Control}, Las Vegas, NV, USA, 2016, pp. 6500--6505.

\bibitem{koopman_datadrivenapproximation_edmd}
M.~O. Williams, I.~G. Kevrekidis, and C.~W. Rowley,
  ``\href{https://link.springer.com/content/pdf/10.1007\%2Fs00332-015-9258-5.pdf}{A
  data-driven approximation of the {K}oopman operator: Extending dynamic mode
  decomposition},'' \emph{Journal of Nonlinear Science}, vol.~25, no.~6, pp.
  1307--1346, 2015.

\bibitem{kaiser2019data}
E.~Kaiser, J.~N. Kutz, and S.~L. Brunton,
  ``\href{https://arxiv.org/pdf/1902.10239}{Data-driven approximations of
  dynamical systems operators for control},'' \emph{arXiv preprint
  arXiv:1902.10239}, 2019.

\bibitem{koopman}
B.~O. Koopman,
  ``\href{https://www.pnas.org/content/pnas/17/5/315.full.pdf}{Hamiltonian
  systems and transformation in Hilbert space},'' \emph{Proceedings of the
  National Academy of Sciences}, vol.~17, no.~5, pp. 315--318, 1931.

\bibitem{koopman_actuation}
M.~O. Williams, M.~S. Hemati, S.~T. Dawson, I.~G. Kevrekidis, and C.~W. Rowley,
  ``\href{https://www.sciencedirect.com/science/article/pii/S2405896316318286/pdf?md5=18aa1f3ea8ffe3e38894ad0d982a9be3&pid=1-s2.0-S2405896316318286-main.pdf}{Extending
  data-driven {K}oopman analysis to actuated systems},'' in \emph{Proceedings
  of the IFAC Symposium on Nonlinear Control Systems}, 2016, pp. 704--709.

\bibitem{koopman_KIC}
J.~L. Proctor, S.~L. Brunton, and J.~N. Kutz,
  ``\href{https://epubs.siam.org/doi/pdf/10.1137/16M1062296}{Generalizing
  {K}oopman theory to allow for inputs and control},'' \emph{SIAM Journal on
  Applied Dynamical Systems}, vol.~17, no.~1, pp. 909--930, 2018.

\bibitem{brunton_invariant}
S.~L. Brunton, B.~W. Brunton, J.~L. Proctor, and J.~N. Kutz,
  ``\href{https://journals.plos.org/plosone/article/file?id=10.1371/journal.pone.0150171&type=printable}{Koopman
  invariant subspaces and finite linear representations of nonlinear dynamical
  systems for control},'' \emph{PloS One}, vol.~11, no.~2, p. e0150171, 2016.

\bibitem{koopman_mezic}
I.~Mezi{\'c},
  ``\href{http://robotics.caltech.edu/wiki/images/3/35/ApplicationsSpectralTheoryKoopman.pdf}{On
  applications of the spectral theory of the {K}oopman operator in dynamical
  systems and control theory},'' in \emph{Proceedings of the Conference on
  Decision and Control}, Osaka, Japan, 2015, pp. 7034--7041.

\bibitem{koopmanism}
M.~Budi{\v{s}}i{\'c}, R.~Mohr, and I.~Mezi{\'c},
  ``\href{https://arxiv.org/pdf/1206.3164}{Applied {K}oopmanism},''
  \emph{Chaos}, vol.~22, no.~4, p. 047510, 2012.

\bibitem{koopman_mpc}
M.~Korda and I.~Mezi{\'c}, ``\href{https://arxiv.org/pdf/1611.03537}{Linear
  predictors for nonlinear dynamical systems: {K}oopman operator meets model
  predictive control},'' \emph{Automatica}, vol.~93, pp. 149--160, 2018.

\bibitem{koopman_dmd}
J.~H. Tu, C.~W. Rowley, D.~M. Luchtenburg, S.~L. Brunton, and J.~N. Kutz,
  ``\href{https://arxiv.org/pdf/1312.0041}{On dynamic mode decomposition:
  {T}heory and applications},'' \emph{Journal of Computational Dynamics},
  vol.~1, no.~2, pp. 391--421, 2014.

\bibitem{koopman_stabilityanalysis}
A.~Mauroy and I.~Mezi{\'c}, ``\href{https://arxiv.org/pdf/1408.1379}{Global
  stability analysis using the eigenfunctions of the {K}oopman operator},''
  \emph{IEEE Transactions on Automatic Control}, vol.~61, no.~11, pp.
  3356--3369, 2016.

\bibitem{koopman_sindy}
S.~L. Brunton, J.~L. Proctor, and J.~N. Kutz,
  ``\href{https://www.pnas.org/content/pnas/early/2016/03/23/1517384113.full.pdf}{Discovering
  governing equations from data by sparse identification of nonlinear dynamical
  systems},'' \emph{Proceedings of the National Academy of Sciences}, vol. 113,
  no.~15, pp. 3932--3937, 2016.

\bibitem{koopman_deeplearning}
B.~Lusch, J.~N. Kutz, and S.~L. Brunton,
  ``\href{https://www.nature.com/articles/s41467-018-07210-0.pdf}{Deep learning
  for universal linear embeddings of nonlinear dynamics},'' \emph{Nature
  Communications}, vol.~9, no.~1, p. 4950, 2018.

\bibitem{koopman_ian}
I.~Abraham, G.~De~La~Torre, and T.~D. Murphey,
  ``\href{https://arxiv.org/pdf/1709.01568}{Model-based control using {K}oopman
  operators},'' in \emph{Proceedings of Robotics: Science and Systems},
  Cambridge, Massachusetts, USA, 2017.

\bibitem{Bruder_Koopman}
D.~Bruder, B.~Gillespie, C.~D. Remy, and R.~Vasudevan,
  ``\href{https://arxiv.org/pdf/1902.02827}{Modeling and control of soft robots
  using the {K}oopman operator and model predictive control},'' in
  \emph{Proceedings of Robotics: Science and Systems}, Freiburg, Germany, 2019.

\bibitem{koopman_symmetries_salova}
A.~Salova, J.~Emenheiser, A.~Rupe, J.~P. Crutchfield, and R.~M. D’Souza,
  ``\href{http://aip.scitation.org/doi/pdf/10.1063/1.5099091?class=pdf}{{K}oopman
  operator and its approximations for systems with symmetries},'' \emph{Chaos},
  vol.~29, no.~9, p. 093128, 2019.

\bibitem{dogra2020optimizing}
A.~S. Dogra and W.~T. Redman,
  ``\href{https://arxiv.org/pdf/2006.02361.pdf}{Optimizing Neural Networks via
  {K}oopman Operator Theory},'' in \emph{Neural Information Processing Systems
  (NeurIPS)}, 2020.

\bibitem{peitz2019koopman}
S.~Peitz and S.~Klus,
  ``\href{https://www.sciencedirect.com/science/article/pii/S0005109819302237/pdfft?md5=189afebe58e93960c631d0b5189c5382&pid=1-s2.0-S0005109819302237-main.pdf}{Koopman
  operator-based model reduction for switched-system control of PDEs},''
  \emph{Automatica}, vol. 106, pp. 184--191, 2019.

\bibitem{Koopman_sparsedata}
S.~Sinha, U.~Vaidya, and E.~Yeung,
  ``\href{https://ieeexplore.ieee.org/iel7/8789884/8814292/08814861.pdf}{On
  computation of {K}oopman operator from sparse data},'' in \emph{Proceedings
  of the American Control Conference}, Philadelphia, PA, USA, 2019, pp.
  5519--5524.

\bibitem{huang2019data}
B.~Huang, X.~Ma, and U.~Vaidya, ``Data-driven nonlinear stabilization using
  koopman operator,'' \emph{The Koopman Operator in Systems and Control}, pp.
  313--334, 2020.

\bibitem{koopman_DNN}
E.~Yeung, S.~Kundu, and N.~Hodas,
  ``\href{https://ieeexplore.ieee.org/iel7/8789884/8814292/08815339.pdf}{Learning
  deep neural network representations for {K}oopman operators of nonlinear
  dynamical systems},'' in \emph{Proceedings of the American Control
  Conference}, Philadelphia, PA, USA, 2019, pp. 4832--4839.

\bibitem{predictiveaccuracy_DMD}
H.~Lu and D.~M. Tartakovsky,
  ``\href{https://arxiv.org/pdf/1905.01587.pdf}{Prediction accuracy of dynamic
  mode decomposition},'' \emph{SIAM Journal on Scientific Computing}, vol.~42,
  no.~3, pp. A1639--A1662, 2020.

\bibitem{RSS2019_MamakoukasCastano}
G.~Mamakoukas, M.~L. Casta{\~n}o, X.~Tan, and T.~D. Murphey,
  ``\href{http://www.roboticsproceedings.org/rss15/p54.pdf}{Local {K}oopman
  operators for data-driven control of robotic systems},'' in \emph{Proceedings
  of Robotics: Science and Systems}, Freiburg, Germany, 2019.

\bibitem{antsaklis2006linear}
P.~J. Antsaklis and A.~N. Michel, \emph{A Linear Systems Primer}.\hskip 1em
  plus 0.5em minus 0.4em\relax Springer Science \& Business Media, 2006, ch.~3,
  pp. 116--119.

\bibitem{koopman_basisofattraction}
Y.~Lan and I.~Mezi{\'c},
  ``\href{https://www.sciencedirect.com/science/article/pii/S0167278912002102/pdfft?md5=34cedc11b45a1c03c1d11b8902fa8def&pid=1-s2.0-S0167278912002102-main.pdf}{Linearization
  in the large of nonlinear systems and {K}oopman operator spectrum},''
  \emph{Physica D: Nonlinear Phenomena}, vol. 242, no.~1, pp. 42--53, 2013.

\bibitem{koopman_kronic}
E.~Kaiser, J.~N. Kutz, and S.~L. Brunton,
  ``\href{https://iopscience.iop.org/article/10.1088/2632-2153/abf0f5/pdf}{Data-driven
  discovery of {K}oopman eigenfunctions for control},'' \emph{Machine Learning:
  Science and Technology}, 2021.

\bibitem{nandanoori2019data}
S.~P. Nandanoori, S.~Sinha, and E.~Yeung, ``Data-driven operator theoretic
  methods for global phase space learning,'' in \emph{2020 American Control
  Conference (ACC)}, Denver, CO, USA, 2020, pp. 4551--4557.

\bibitem{haseli2019efficient}
M.~Haseli and J.~Cort{\'e}s,
  ``\href{http://carmenere.ucsd.edu/jorge/publications/data/2019_HaCo-cdc.pdf}{Efficient
  identification of linear evolutions in nonlinear vector fields: {K}oopman
  Invariant Subspaces},'' in \emph{Conference on Decision and Control (CDC)},
  Nice, France, 2019, pp. 1746--1751.

\bibitem{haseli2020Invariant}
------, ``\href{https://arxiv.org/pdf/1909.01419}{Learning {K}oopman
  Eigenfunctions and Invariant Subspaces from Data: Symmetric Subspace
  Decomposition},'' \emph{arXiv preprint arXiv:1909.01419}, 2020.

\bibitem{koopman_invariant_Naoya}
N.~Takeishi, Y.~Kawahara, and T.~Yairi,
  ``\href{https://papers.nips.cc/paper/6713-learning-koopman-invariant-subspaces-for-dynamic-mode-decomposition.pdf}{Learning
  {K}oopman invariant subspaces for dynamic mode decomposition},'' in
  \emph{Proceedings of the Neural Information Processing Systems}, Long Beach,
  CA, USA, 2017, pp. 1130--1140.

\bibitem{machine_learning}
K.~P. Murphy, \emph{Machine learning: A Probabilistic Perspective}.\hskip 1em
  plus 0.5em minus 0.4em\relax Cambridge, MA: The MIT Press, 2012.

\bibitem{statistical_learning}
G.~James, D.~Witten, T.~Hastie, and R.~Tibshirani, \emph{An {I}ntroduction to
  {S}tatistical {L}earning}.\hskip 1em plus 0.5em minus 0.4em\relax New York:
  Springer, 2013, vol. 112.

\bibitem{system_identification}
L.~Ljung, \emph{System Identification}.\hskip 1em plus 0.5em minus 0.4em\relax
  Upper Saddle River, NJ: Prentice Hall, 1998.

\bibitem{si_nature}
B.~C. Daniels and I.~Nemenman,
  ``\href{https://www.nature.com/articles/ncomms9133.pdf}{Automated adaptive
  inference of phenomenological dynamical models},'' \emph{Nature
  Communications}, vol.~6, p. 8133, 2015.

\bibitem{si_equationfree}
I.~G. Kevrekidis, C.~W. Gear, J.~M. Hyman, P.~G. Kevrekidis, O.~Runborg, and
  C.~Theodoropoulos,
  ``\href{https://projecteuclid.org/download/pdf_1/euclid.cms/1119655353}{Equation-free,
  coarse-grained multiscale computation: Enabling microscopic simulators to
  perform system-level analysis},'' \emph{Communications in Mathematical
  Sciences}, vol.~1, no.~4, pp. 715--762, 2003.

\bibitem{datadriven}
Z.~Hou and S.~Jin,
  ``\href{https://ieeexplore.ieee.org/iel5/87/4389040/05673018.pdf}{A novel
  data-driven control approach for a class of discrete-time nonlinear
  systems},'' \emph{IEEE Transactions on Control Systems Technology}, vol.~19,
  no.~6, pp. 1549--1558, 2011.

\bibitem{SDRE}
J.~R. Cloutier, ``State-dependent {R}iccati equation techniques: an overview,''
  in \emph{Proceedings of the 1997 American control conference (Cat. No.
  97CH36041)}, vol.~2.\hskip 1em plus 0.5em minus 0.4em\relax Albuquerque, NM,
  USA: IEEE, 1997, pp. 932--936.

\bibitem{SDRE_control}
T.~{\c{C}}imen, ``State-dependent {R}iccati equation ({SDRE}) control: A
  survey,'' \emph{IFAC Proceedings Volumes}, vol.~41, no.~2, pp. 3761--3775,
  2008.

\bibitem{carleman_original}
T.~Carleman, ``Application de la th{\'e}orie des {\'e}quations int{\'e}grales
  lin{\'e}aires aux syst{\`e}mes d'{\'e}quations diff{\'e}rentielles non
  lin{\'e}aires,'' \emph{Acta Mathematica}, vol.~59, no.~1, pp. 63--87, 1932.

\bibitem{carleman_nonlinear}
K.~Kowalski and W.-H. Steeb, \emph{Nonlinear Dynamical Systems and Carleman
  Linearization}.\hskip 1em plus 0.5em minus 0.4em\relax Teaneck, NJ: World
  Scientific, 1991.

\bibitem{carleman_Lie}
S.~Banks,
  ``\href{https://www.tandfonline.com/doi/pdf/10.1080/00207729208949241}{Infinite-dimensional
  {C}arleman linearization, the {L}ie series and optimal control of non-linear
  partial differential equations},'' \emph{International Journal of Systems
  Science}, vol.~23, no.~5, pp. 663--675, 1992.

\bibitem{derivatives_estimation}
G.~V. Bard,
  ``\href{http://www.gregorybard.com/papers/derivatives.pdf}{Numerically
  estimating derivatives during simulations},'' in \emph{Proceedings of the
  International Conference on Modelling, Simulation, and Visualization
  Methods}, 2011, pp. 341--347.

\bibitem{olver2014introduction}
P.~J. Olver, \emph{Introduction to partial differential equations}.\hskip 1em
  plus 0.5em minus 0.4em\relax Springer, 2014.

\bibitem{SINDY}
S.~L. Brunton, J.~L. Proctor, and J.~N. Kutz, ``Discovering governing equations
  from data by sparse identification of nonlinear dynamical systems,''
  \emph{Proceedings of the national academy of sciences}, vol. 113, no.~15, pp.
  3932--3937, 2016.

\bibitem{NSID}
O.~Nelles, ``Nonlinear dynamic system identification,'' in \emph{Nonlinear
  System Identification}.\hskip 1em plus 0.5em minus 0.4em\relax Springer,
  2001, pp. 547--577.

\bibitem{pysindy}
\BIBentryALTinterwordspacing
B.~de~Silva, K.~Champion, M.~Quade, J.-C. Loiseau, J.~Kutz, and S.~Brunton,
  ``Pysindy: A python package for the sparse identification of nonlinear
  dynamical systems from data,'' \emph{Journal of Open Source Software},
  vol.~5, no.~49, p. 2104, 2020. [Online]. Available:
  \url{https://doi.org/10.21105/joss.02104}
\BIBentrySTDinterwordspacing

\bibitem{MatlabOTB}
\BIBentryALTinterwordspacing
I.~The~MathWorks, \emph{MATLAB Deep Learning Toolbox}, Natick, Massachusetts,
  United State, 2019b. [Online]. Available:
  \url{https://www.mathworks.com/help/deeplearning}
\BIBentrySTDinterwordspacing

\bibitem{averagemodel}
J.~Wang and X.~Tan,
  ``\href{https://ieeexplore.ieee.org/iel7/8860/4359257/07117447.pdf}{Averaging
  tail-actuated robotic fish dynamics through force and moment scaling},''
  \emph{IEEE Transactions on Robotics}, vol.~31, no.~4, pp. 906--917, 2015.

\bibitem{bruderBilinearization}
D.~Bruder, X.~Fu, and R.~Vasudevan, ``Advantages of bilinear {K}oopman
  realizations for the modeling a`12`1 nd control of systems with unknown
  dynamics,'' \emph{arXiv preprint arXiv:2010.09961}, 2020.

\bibitem{Mamakoukas_stableLDS}
G.~Mamakoukas, O.~Xherija, and T.~D. Murphey,
  ``\href{https://arxiv.org/abs/2006.03937}{Memory-Efficient Learning of Stable
  Linear Dynamical Systems for Prediction and Control},'' in \emph{Conference
  on Neural Information Processing Systems (NeurIPS)}, Vancouver, Canada, 2020.

\bibitem{Mamakoukas_stableKoopman}
G.~Mamakoukas, I.~Abraham, and T.~D. Murphey,
  ``\href{https://arxiv.org/abs/2005.04291}{Learning data-driven stable
  {K}oopman operators},'' in \emph{https://arxiv.org/abs/2005.04291}.

\bibitem{PIDquevedo}
J.~Quevedo, ``Digital control: past, present and future of pid control,'' in
  \emph{Proceedings of IFAC Workshop}, 2000.

\bibitem{haugen2004pid}
F.~Haugen, \emph{PID Control}.\hskip 1em plus 0.5em minus 0.4em\relax Tapir
  Academic Press, 2004, ch.~4, pp. 94--98.

\bibitem{castano2020}
M.~L. {Castaño}, A.~{Hess}, G.~{Mamakoukas}, T.~{Gao}, T.~{Murphey}, and
  X.~{Tan},
  ``\href{https://ieeexplore.ieee.org/abstract/document/9159033}{Control-oriented
  modeling of soft robotic swimmer with {K}oopman operators},'' in
  \emph{IEEE/ASME International Conference on Advanced Intelligent Mechatronics
  (AIM)}, Boston, MA, USA, 2020, pp. 1679--1685.

\bibitem{haggerty2020modeling}
D.~A. Haggerty, M.~J. Banks, P.~C. Curtis, I.~Mezi{\'c}, and E.~W. Hawkes,
  ``Modeling, reduction, and control of a helically actuated inertial soft
  robotic arm via the {K}oopman operator,'' \emph{arXiv preprint
  arXiv:2011.07939}, 2020.

\bibitem{arbabi2017computation}
H.~Arbabi and I.~Mezic, ``Computation of transient {K}oopman spectrum using
  hankel-dynamic mode decompoisition,'' in \emph{APS Division of Fluid Dynamics
  Meeting Abstracts}, 2017, pp. G1--009.

\bibitem{eivazi2020recurrent}
H.~Eivazi, L.~Guastoni, P.~Schlatter, H.~Azizpour, and R.~Vinuesa, ``Recurrent
  neural networks and {K}oopman-based frameworks for temporal predictions in
  turbulence,'' \emph{arXiv preprint arXiv:2005.02762}, 2020.

\bibitem{kamb2020time}
M.~Kamb, E.~Kaiser, S.~L. Brunton, and J.~N. Kutz, ``Time-delay observables for
  {K}oopman: Theory and applications,'' \emph{SIAM Journal on Applied Dynamical
  Systems}, vol.~19, no.~2, pp. 886--917, 2020.

\bibitem{korda2020data}
M.~Korda, M.~Putinar, and I.~Mezi{\'c}, ``Data-driven spectral analysis of the
  {K}oopman operator,'' \emph{Applied and Computational Harmonic Analysis},
  vol.~48, no.~2, pp. 599--629, 2020.

\bibitem{PascalIdentity}
K.~MacMillan and J.~Sondow,
  ``\href{https://www.tandfonline.com/doi/pdf/10.4169/amer.math.monthly.118.06.549}{Proofs
  of power sum and binomial coefficient congruences via Pascal's identity},''
  \emph{The American Mathematical Monthly}, vol. 118, no.~6, pp. 549--551,
  2011.

\end{thebibliography}

%

\vfill\eject
\begin{IEEEbiography}[{\includegraphics[width=1in,height=1.25in,clip]{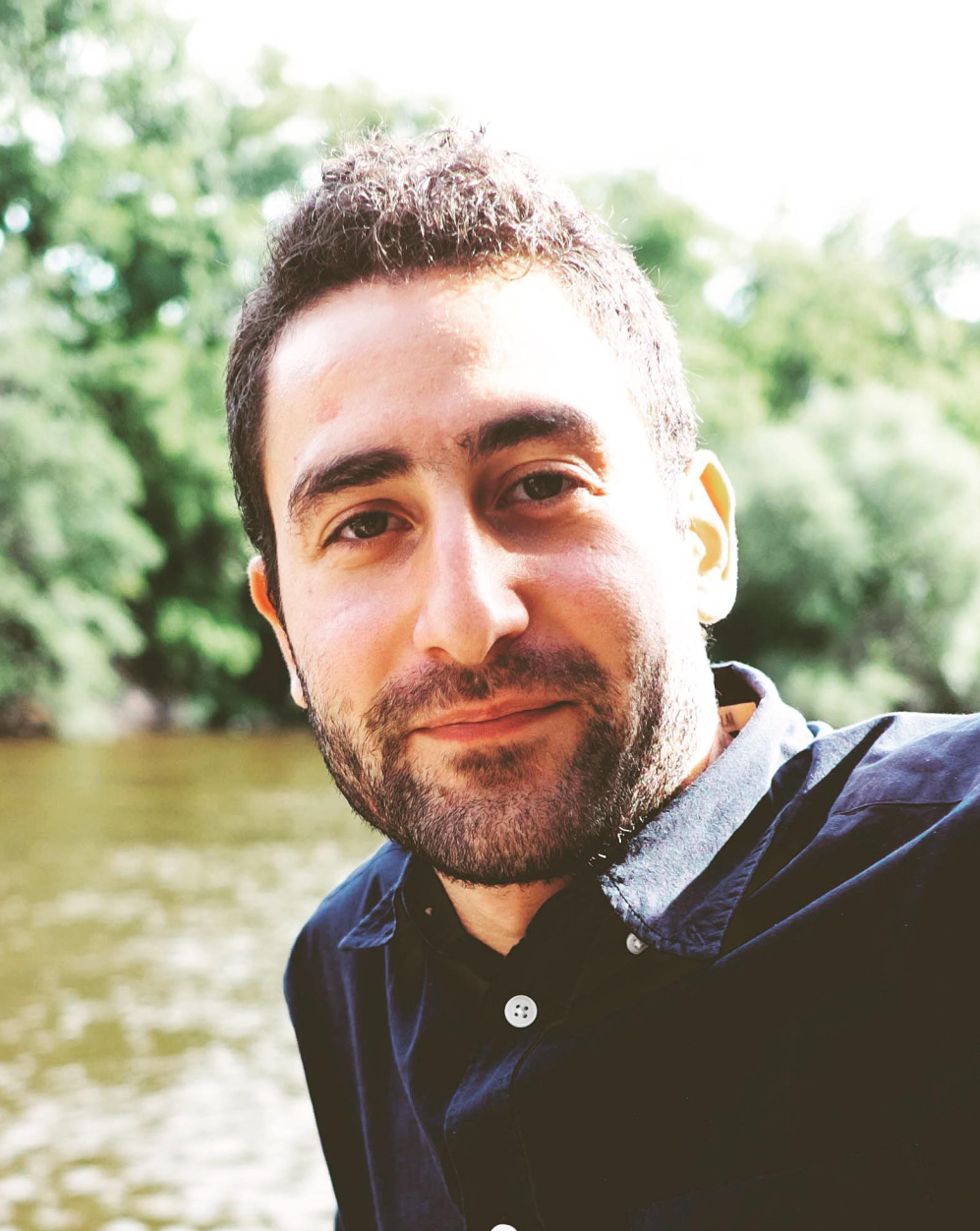}}]{Giorgos Mamakoukas}
received the B.A. degree in Physics at Grinnell College, IA, USA in 2014 and the M.S. in Mechanical Engineering at Northwestern University, Evanston, IL, in 2017. He is currently pursuing the Ph.D. degree in Mechanical Engineering at Northwestern University under the supervision of Todd Murphey. His interests include control, robotics, optimization, and machine learning.\end{IEEEbiography}
\begin{IEEEbiography}[{\includegraphics[width=1in,height=1.25in,clip,keepaspectratio]{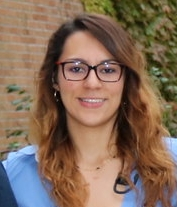}}]{Maria L. Casta\~{n}o}
received her B.S. in Electrical Engineering from Florida International University, Miami, FL, USA in May 2014. She is currently pursuing her  Ph.D degree at Michigan State University, East Lansing, MI, USA under the supervision of Xiaobo Tan. Her research interest include modeling, controls, and underwater robotics.\end{IEEEbiography}
\begin{IEEEbiography}[{\includegraphics[width=1in,height=1.25in,clip,keepaspectratio]{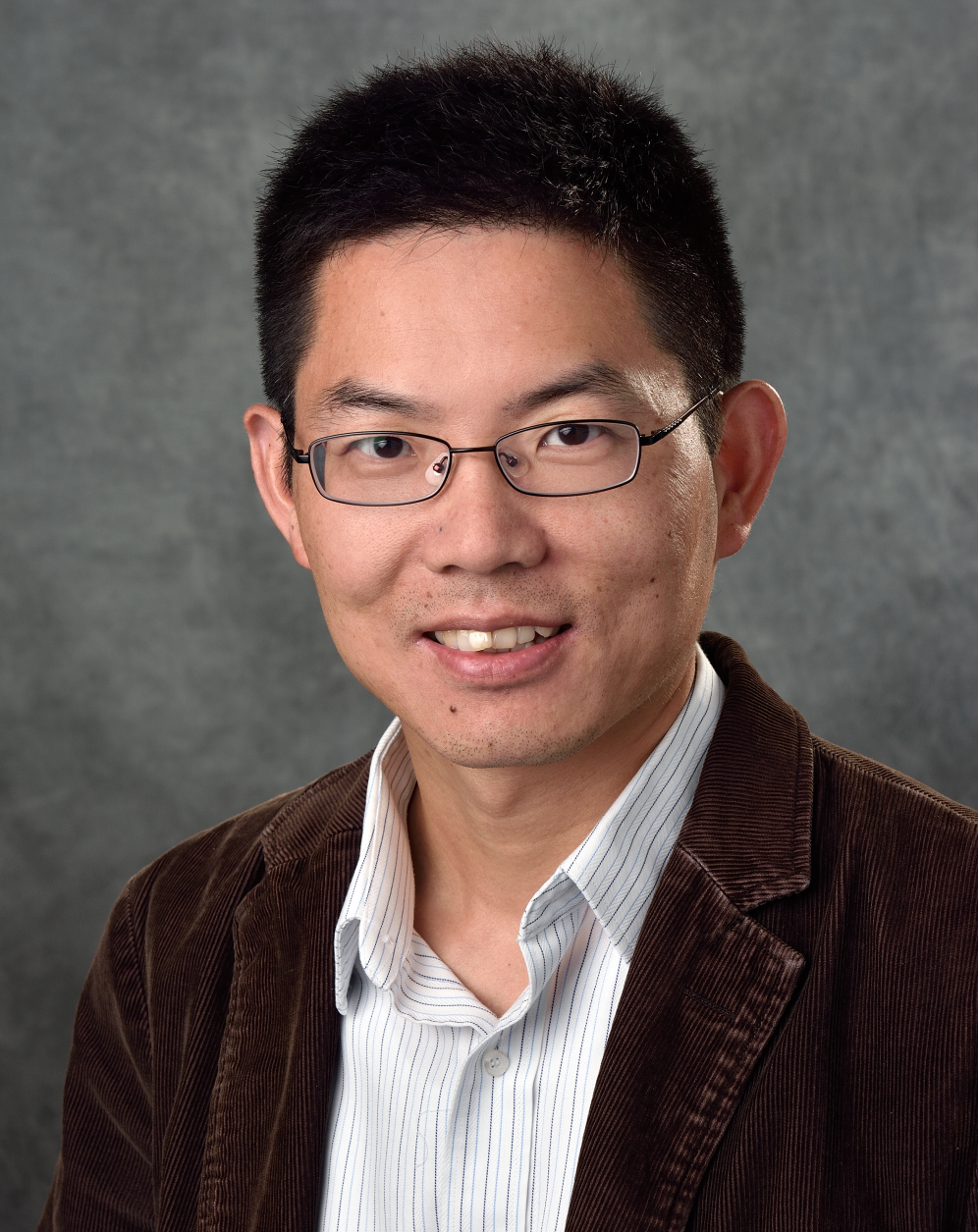}}]{Xiaobo Tan}
received the B.Eng. and M.Eng. degrees in automatic control from Tsinghua University, Beijing, China, in 1995 and 1998, respectively, and the Ph.D. degree in electrical engineering from the University of Maryland, College Park, in 2002. He is currently an MSU Foundation Professor and the Richard M. Hong Endowed Chair in the Department of Electrical and Computer Engineering (ECE) at Michigan State University (MSU). His research interests include control systems, smart materials, underwater robotics, and soft robotics.\\ 
Dr. Tan is a Senior Editor for IEEE/ASME Transactions on Mechatronics. He has coauthored over 250 peer-reviewed journal and conference papers, and holds four US patents. He is a Fellow of IEEE and ASME, and a recipient of NSF CAREER Award (2006), MSU Teacher-Scholar Award (2010), MSU College of Engineering Withrow Distinguished Scholar Award (2018), Distinguished Alumni Award from the ECE Department at University of Maryland (2018), and multiple best paper awards.\end{IEEEbiography}
\begin{IEEEbiography}[{\includegraphics[width=1in,height=1.25in,clip,keepaspectratio]{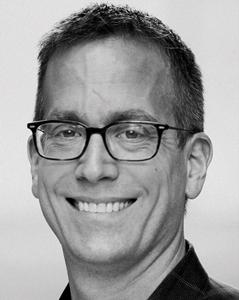}}]{Todd Murphey}
received his B.S. degree in mathematics from the University of Arizona and the Ph.D. degree in Control and Dynamical Systems from the California Institute of Technology.
He is a Professor of Mechanical Engineering at Northwestern University. His laboratory is part of the Center for Robotics and Biosystems, and his research interests include robotics, control, computational methods for biomechanical systems, and computational neuroscience.
Honors include the National Science Foundation CAREER award in 2006, membership in the 2014-2015 DARPA/IDA Defense Science Study Group, and Northwestern's Professorship of Teaching
Excellence. He was a Senior Editor of the IEEE Transactions on Robotics.
\end{IEEEbiography}
\vfill

\end{document}